\newcommand\numberthis{\addtocounter{equation}{1}\tag{\theequation}}
\newcommand{\red}[1]{\textcolor{red}{ #1 }}
\providecommand{\tabularnewline}{\\}
\providecommand{\algorithmname}{Algorithm}
  \providecommand{\remarkname}{Remark}
\theoremstyle{definition}
\newtheorem{defn}{\protect\definitionname}
\theoremstyle{plain}
\newtheorem{thm}{\protect\theoremname}
\theoremstyle{plain}
\newtheorem{lem}{\protect\lemmaname}
\theoremstyle{plain}
\newtheorem{cor}{\protect\corollaryname}
\theoremstyle{plain}
\newtheorem{prop}{\protect\propositionname}
\newtheorem{rem}{\protect\remarkname}
\theoremstyle{definition}
\title{No Weighted-Regret Learning \\
in Adversarial Bandits with Delays}
\DeclareMathOperator{\Alg}{Alg}
\author{\name Ilai Bistritz\thanks{This is an extended version of \cite{Bistritz2019}. For details, see Subsection 1.1 on previous work. This research was supported by the Koret Foundation grant for Smart Cities and Digital Living. }
\email bistritz@stanford.edu\\
       \addr Department of Electrical Engineering\\
       Stanford University\\
       Stanford, CA 94305, USA
       \AND
       \name Zhengyuan Zhou\email zzhou@stern.nyu.edu \\
       \addr Stern School of Business\\
       New York University\\
      New York, NY 10003, USA
       \AND
	\name Xi Chen\thanks{Xi Chen would like to thank the support from NSF via IIS-1845444.} \email xc13@stern.nyu.edu \\
       \addr Stern School of Business\\
       New York University\\
       New York, NY 10003, USA
       \AND
	\name Nicholas Bambos \email bambos@stanford.edu\\
       \addr Department of Electrical Engineering\\
       Stanford University\\
       Stanford, CA 94305, USA
       \AND
	\name Jose Blanchet \email jose.blanchet@stanford.edu\\
       \addr Department of Management Science \& Engineering\\
       Stanford University\\
       Stanford, CA 94305, USA
}
\providecommand{\corollaryname}{Corollary}
\providecommand{\definitionname}{Definition}
\providecommand{\lemmaname}{Lemma}
\providecommand{\propositionname}{Proposition}
\providecommand{\theoremname}{Theorem}
\begin{document}
\maketitle
\begin{abstract}
Consider a scenario where a player chooses an action in each round $t$ out of $T$ rounds and observes the incurred cost after a delay of $d_{t}$ rounds. The cost functions and the delay sequence are chosen by an adversary. We show that in a non-cooperative game, 
the expected weighted ergodic distribution of play converges to the set of coarse correlated equilibria if players use algorithms that have ``no weighted-regret'' in the above scenario, even if they have linear regret due to too large delays.
For a two-player zero-sum game, we show that no weighted-regret is sufficient for the weighted ergodic average of play to converge to the set of Nash equilibria. We prove that the FKM algorithm with $n$ dimensions achieves an expected regret of $O\left(nT^{\frac{3}{4}}+\sqrt{n}T^{\frac{1}{3}}D^{\frac{1}{3}}\right)$ and the EXP3 algorithm with $K$ arms achieves an expected regret of $O\left(\sqrt{\log  K\left(KT+D\right)}\right)$
even when $D=\sum_{t=1}^{T}d_{t}$ and $T$ are unknown. These bounds use a novel doubling trick that, under mild assumptions, provably retains the regret bound for when $D$ and $T$ are known. Using these bounds, we show that FKM and EXP3 have no weighted-regret even for $d_{t}=O\left(t\log t\right)$. Therefore, algorithms with no weighted-regret can be used to approximate a CCE of a finite or convex unknown game that can only be simulated with bandit feedback, even if the simulation involves significant delays.

\begin{keywords}   Online Learning, Adversarial Bandits, Non-cooperative Games, Delays \end{keywords}
\end{abstract}

\section{Introduction}

Consider an agent that makes sequential decisions, and each decision incurs some cost. The agent's goal is to minimize this cost over time. The question of \textbf{what} the agent learns about the cost functions naturally influences the best performance the agent can guarantee. With full information, after acting at round $t$, the agent receives the cost function of round $t$ as feedback. With bandit feedback, as we consider here, the agent only receives the cost of her decision. Another fundamental question is \textbf{when} the agent receives the feedback. In most practical learning environments, an agent does not get to learn the cost of her action immediately. For example, it takes a while to observe the effect of a decision on a treatment plan or before observing the market's response to an advertisement. With delayed feedback, decisions must be made before all the feedback from the past choices is received. 

Practical environments are non-stationary since they typically consist of other learning agents, and the learning of one agent affects that of the others. Moreover, the costs are naturally correlated over time. Hence, guarantees for stochastic environments are not strong enough for multi-agent environments. Instead, we consider cost sequences that are chosen by an adversary that knows the agent's algorithm. Proving regret bounds against an adversary certifies the robustness of a learning algorithm, regardless of whether an actual malicious adversary exists or not. Following the same reasoning, proving regret bounds with adversarial delays certifies the robustness of an algorithm to non-stationary delays. 

An algorithm is said to have "no-regret" \citep{bowling2005convergence} if it 
has a sublinear regret in $T$. It is well known that when $N$ agents in a non-cooperative game each use an algorithm that has no-regret against an adaptive adversary, the ergodic distribution of play converges to the set of coarse correlated equilibria (CCE) \citep{hannan1957approximation,hart2013simple}.
For a two-player zero-sum game, the ergodic average of play converges to the set of Nash equilibria (NE)
\citep{cai2011minmax}. The emergence of a CCE or a NE in a game between no-regret learners establishes their role as predictors for the outcome of the game. From a practical point of view, the convergence of the expected ergodic distribution to the set of CCE or of the ergodic average to
the set of NE makes no-regret algorithms an appealing way to approximate a CCE or a NE when the reward functions are unknown so only simulating the game is possible (see \citet{hellerstein2019solving}). When simulating
an unknown game, bandit feedback is a more realistic assumption than full information (or gradient feedback). Approximating the equilibrium can help to predict the outcome of the interaction between deployed agents even if they use other algorithms than those used for the approximation. If the equilibrium is globally efficient, cooperative agents may agree to play it after using no-regret algorithms to distributedly approximate it first.

The convergence to the set of CCE is maintained if the algorithm still enjoys the no-regret property even with delayed feedback.  However, for large enough delays (e.g. $d_{t}=O\left(t\log t\right)$), the regret of any algorithm becomes linear in the horizon $T$ so the
no-regret property no longer holds. Our first main contribution in this paper is to show that even with delays that cause a linear regret, the expected weighted ergodic distribution may still converge to the set of CCE, and the weighted ergodic average may still converge (in $L^{1}$) to the set of NE for a two-player zero-sum game.

Many practical multi-agent interactions (i.e., games) are complicated to model. Instead, we can simulate the game based on data or in an experiment. Since the agents' performance is only measured in hindsight, such a simulation typically involves delays. If our simulated agents each run an online learning algorithm independently (e.g., FKM or EXP3), we can approximate a CCE of the game (NE for a two-player zero-sum game) by computing the weighted ergodic distribution (average). Our results imply that by properly tuning the weights, this method can approximate an equilibrium even when the standard regret is linear.

Our game-theoretic results motivate analyzing the weighted-regret as opposed to the classical regret when delays are involved. Hence, we study the weighted-regret of some widely-applied algorithms, for both a discrete action set $1,...,K$ (i.e., arms in multi-armed bandits) and a convex and compact action set $\mathcal{K\subset\mathbb{R}}^{n}$. For bandit convex optimization with a convex compact action set, the most widely
used adversarial bandits learning algorithm is FKM \citep{flaxman2005online}.
With no delays, the expected regret of FKM is $O\left(nT^{\frac{3}{4}}\right)$
where $n$ is the dimension of $\mathcal{K}$. For the discrete case,
the most popular adversarial bandits learning algorithm is EXP3 \citep{auer1995gambling,auer2002nonstochastic,bubeck2012regret,neu2010online}.
With no delays, the expected regret of EXP3 is $O\left(\sqrt{TK\log  K}\right)$.

Our second main contribution is to bound the expected weighted-regret of FKM against an adaptive or oblivious adversary. As a special case, we show that with an arbitrary and possibly unbounded sequence of delays $d_{t}$, FKM achieves an expected
regret of $O\left(nT^{\frac{3}{4}}+\sqrt{n}T^{\frac{1}{3}}\left(\sum_{t\notin\mathcal{M}}d_{t}\right)^{\frac{1}{3}}+\left|\mathcal{M}\right|\right)$ against an oblivious adversary,
where $\mathcal{M}=\left\{ t\,|\,t+d_{t}>T,\,t\in\left[1,T\right]\right\}$ is the set of rounds that their feedback is not received
before round $T$. Our third main contribution is to bound the expected weighted-regret of EXP3 against an adaptive or oblivious adversary. As a special case, we show that with an arbitrary and possibly unbounded sequence of delays $d_{t}$, EXP3 achieves an expected
regret of $O\left(\sqrt{\left(KT+\sum_{t\notin\mathcal{M}}d_{t}\right)\log  K}+\left|\mathcal{M}\right|\right)$ against an oblivious adversary. Our weighted-regret bounds reveal for which delay sequences FKM and EXP3 enjoy the no weighted-regret property.

Like the horizon $T$, the sum of delays $D=\sum_{t=1}^{T}\min\left\{ d_{t},T-t+1\right\} $
might be unknown to the decision-maker, which may need them to tune the algorithm. While the
standard doubling trick \citep{cesa1997use} can deal with
an unknown $T$, it does not help with an unknown $D$. Our fourth
main contribution is a general novel two-dimensional doubling trick
where epochs are indexed by a ``delay index'' as well as a ``time
index''. The delay index doubles every time the number of missing
samples so far doubles, and the time index doubles with the rounds
as usual. We show that under mild conditions, this novel doubling trick can be applied to any online learning algorithm with delayed feedback, beyond the case
of adversarial bandits. We apply this result to
achieve an expected regret of $O\left(nT^{\frac{3}{4}}+\sqrt{n}T^{\frac{1}{3}}D^{\frac{1}{3}}\right)$
for FKM and of $O\left(\sqrt{\left(KT+D\right)\log  K}\right)$ for
EXP3. 

\subsection{Previous Work \label{subsec:Previous-Work}}

In recent years, learning with delayed feedback has attracted
considerable attention, ranging from multi-armed bandits (\citet{mandel2015queue}) to Markov decision processes (\citet{neu2010online}) and even distributed optimization (\citet{agarwal2011distributed}).

Most literature on learning with delayed feedback deals with multi-armed bandits, i.e., with a discrete set
of actions. Fixed delays were considered in \citet{weinberger2002delayed}
and \citet{zinkevich2009slow}. Stochastic rewards and stochastic
i.i.d. delays have been considered in \citet{pmlr-v80-pike-burke18a}. Stochastic i.i.d. delays with random missing samples have been considered in \citet{vernade2017stochastic}. Bandits
with adversarial rewards but still stochastic i.i.d. delays were considered
in \citet{joulani2013online}. \citet{cesa2019delay} considered an
interesting case of communicating agents that cooperate to solve
a common adversarial bandit problem, where the messages between agents
may arrive after a bounded delay with a known bound $d$. Recently,
advancements were made for the case of stochastic delays, studying arm-dependent delays (\citet{Manegueu2020}), linear bandits (\citet{Vernade2020}) contextual bandits (\citet{zhou2019learning}), and reward-dependent delays (\cite{lancewicki2021stochastic}).
In contrast, in our scenario, the adversary chooses the delay sequence. 

In \citet{quanrud2015online}, the case of adversarial delays with full information feedback has been considered, where the
feedback is the costs of all arms (or the gradient of the cost function). Our goal is to study bandit feedback instead, motivated by the multi-agent scenario.  

In \citet{cesa2018nonstochastic}, a different adversarial bandits with delayed feedback scenario has been studied, where all the feedback that is received at the same round is summed up and cannot be distinguished, and delays are bounded by $d$. For both the multi-armed and convex cases, \citet{cesa2018nonstochastic} designed a wrapper algorithm and proved a regret bound for their delayed feedback scenario as a function of the regret of the algorithm being wrapped for the no-delay scenario. For EXP3, the resulting regret bound is $O\left(\sqrt{dTK\log  K}\right)$. Compared to their scenario, we consider time-stamped feedback with delays that can be unbounded.

Multi-agent learning and convergence to NE under delays have been considered in \citet{zhou2017countering,heliou2020gradient} for variationally stable games and monotone games (\cite{rosen1965existence}). We study general non-cooperative games and convergence to the set of CCE under delays by generalizing
the framework of no-regret learning to no weighted-regret learning.
Our analysis applies to both the multi-armed bandit and bandit convex optimization cases. 

While their focus is on monotone games, \cite{heliou2020gradient} also prove a regret bound for an FKM-type algorithm for a single agent against an adversary. It is assumed in \cite{heliou2020gradient} that the delay sequence is of the form $d_{t}=O(t^{\alpha})$ for $\alpha<1$, so not even a small subset of the samples can be delayed by $O(t)$ rounds. Their algorithm puts the received samples in a queue and only uses one sample per round regardless of how many samples were received this round. This is different than our FKM version which uses all samples upon reception. The expected regret bound of this queuing version of FKM proved in \cite{heliou2020gradient} is $O(T^{\frac{3}{4}}+T^{\frac{2}{3}}T^{\frac{\alpha}{3}})$, regardless of the sum of delays, so it is much looser than our bound. In particular, if the sequence of delays is mostly zeros, but once in $L$ rounds equals to $t^{\alpha}$, the sum of delays is arbitrarily smaller, depending on $L$.

This paper extends a preliminary conference version \citep{Bistritz2019}
that only analyzed EXP3 under delays. In this journal
version, we also analyze FKM for bandit convex optimization under delays. Additionally, we improve the doubling trick of \citet{Bistritz2019}
and show that it can be applied to any online learning algorithm with
delayed feedback. Last but not least, we generalize the game-theoretic
results of \citet{Bistritz2019} to non-cooperative games and the CCE.

While preparing this journal version, we became aware that the concurrent
work of \citet{thune2019nonstochastic} published in the same conference
as \citep{Bistritz2019} provides a similar analysis for the single-agent
EXP3 case with a constant step-size $\eta_{t}=\eta$. Taking a different
approach to deal with unknown $D=\sum_{t}d_{t}$ and $T$, \citet{thune2019nonstochastic}
assume that the delays are available at action time. In this work,
we instead provide a novel doubling trick that does not require this
assumption and achieves the same $O\left(\sqrt{\left(TK+D\right)\log  K}\right)$
that was achieved in \citet{thune2019nonstochastic}. This improves
the doubling trick that was proposed in \citet{Bistritz2019} that
achieved an expected regret of $O\left(\sqrt{\left(TK^{2}+D\right)\log  K}\right)$.
Replacing EXP3 with a novel follow-the-regulated-leader algorithm, \citet{zimmert2019optimal}
improved the expected regret to the optimal $O\left(\sqrt{TK+D\log  K}\right)$
even when $D$ is unknown without using a doubling trick.

While this paper was under review, \citet{gyorgy2020adapting} have
achieved $O\left(\sqrt{\left(TK+D\right)\log  K}\right)$ regret for
EXP3 by adaptively tuning the step-size. As opposed to our EXP3 bound,
their regret bound has been shown to also hold
with high probability. Furthermore, assuming
a bound on the maximal delay (or that the delay is available at action
time) \citet{gyorgy2020adapting} propose a data-adaptive version
of EXP3 which yields a regret that depends on the cumulative cost. 

The works \citet{thune2019nonstochastic,zimmert2019optimal,gyorgy2020adapting,Bistritz2019}
all study multi-armed bandits, while this paper also studies
convex bandit optimization, using the FKM algorithm under
delayed feedback. Moreover, \citet{thune2019nonstochastic,zimmert2019optimal,gyorgy2020adapting}
only studied the single-agent problem while we are mainly motivated by the multi-agent problem, proving convergence of the expected weighted ergodic distribution to the set of CCE under delayed feedback. Our emphasis on the multi-agent case leads
to two technical differences even in our single-agent results. First, we prove regret bounds also against an adaptive adversary that can choose the cost function in response to the players' past actions.
This distinction between an oblivious and adaptive adversary is necessary
to show convergence to the set of CCE. Additionally, our single-agent
results are formulated using the ``weighted-regret'', which weights
the costs of different turns according to a given weight sequence.
Even for the EXP3 analysis, this formulation leads to several subtleties
that did not arise in \citet{thune2019nonstochastic,zimmert2019optimal,gyorgy2020adapting}
(e.g., in Lemma \ref{Counting Delays Lemma} and Lemma \ref{Multiplicative Lemma}).
Last but not least, this paper provides a novel doubling
trick that can deal with an unknown sum of delays in addition to the unknown horizon. Our novel doubling trick can be applied to any online learning algorithm under delayed feedback, beyond the case of adversarial bandits. For example, the preliminary version of this doubling trick was employed in \cite{lancewicki2020learning} that proposed novel learning algorithms for delayed feedback in adversarial Markov decision processes.

\subsection{Outline}

Section \ref{sec:Problem-Formulation} formalizes the general problem
of learning with delayed bandit feedback and highlights our main results.
Section \ref{sec:Games} discusses the outcome of the interaction between multiple learners that are each subjected to a possibly different delay sequence. We extend the well-known connection between no-regret learning and CCE to learning with delayed feedback. Surprisingly, even algorithms that have linear regret under delays can still lead to the set of CCE. Section \ref{sec:GeneralDoublingTrickDelays} presents our general doubling trick that can be applied to online learning algorithms with delayed feedback, not necessarily in adversarial or bandit feedback environments. Section \ref{sec:Bandit-Convex-Optimization} and Section \ref{sec:EXP3-in-Adversarial}
consider the FKM algorithm for adversarial bandit convex optimization
and the EXP3 algorithm for adversarial multi-armed bandits, respectively.
Section \ref{sec:Bandit-Convex-Optimization} and Section \ref{sec:EXP3-in-Adversarial}
each starts by proving expected weighted-regret bounds under delayed bandit
feedback for the algorithm in consideration, both against
an oblivious and an adaptive adversary. Next, we apply the
result on our doubling trick for both FKM and EXP3 to obtain expected
regret bounds for the case where $T$ and $D$ are unknown. Then,
we show that FKM and EXP3 have no weighted-regret even
with respect to delay sequences for which they both have linear regret in
$T$. This allows us to apply our game-theoretic results for both FKM and EXP3, showing that they can approximate a CCE or a NE (in a two-player zero-sum game) of a  simulated game where only delayed bandit feedback is available. Finally, Section \ref{sec:Conclusions} concludes
the paper. Long proofs are postponed to the appendix. 

\section{Problem Formulation \label{sec:Problem-Formulation}}

Consider a player that in each round $t$ from $1$ to $T$ picks an action $\boldsymbol{a}_{t}\in\mathcal{K}$ from a set $\mathcal{K}$.
The cost at round $t$ from playing $\boldsymbol{a}_{t}$  is $l_{t}\left(\boldsymbol{a}_{t}\right)\in\left[0,1\right]$.
We consider two types of adversaries:
\begin{enumerate}
\item \textbf{Oblivious Adversary}: chooses the cost functions $l_{1},...,l_{T}$
before the game starts. 
\item \textbf{Adaptive Adversary}: chooses the cost function $l_{t}$ after
observing $\left\{ a_{1},\ldots a_{t-1}\right\} $, for each $t$.
\end{enumerate}
With full information and no delays, the player gets to know the function
$l_{t}$ immediately after playing $\boldsymbol{a}_{t}$. In the bandit
delayed feedback scenario, the player only gets to know the value
of $l_{t}\left(\boldsymbol{a}_{t}\right)$ at the beginning of round
$t+d_{t}$. (i.e., after a delay of $d_{t}\geq1$ rounds). The adversary
(oblivious or adaptive) chooses the delay sequence $\left\{ d_{t}\right\} $
before the game starts. 

We assume that the cost feedback includes the timestamp of the action that incurred this cost. This is indeed the case in many applications, such as when robots take physical actions, a recommendation is made to a customer or a treatment is given to a patient. If the delays are
bounded by $d$, \citet{cesa2018nonstochastic} have shown that EXP3
can still be implemented even with no timestamps, with expected regret $O(\sqrt{dTK\log {K}})$
instead of $O\left(\sqrt{\left(d+K\right)T\log {K}}\right)$. For our unbounded delays case, it is not clear if FKM or EXP3 can be implemented
without timestamps. 

The set of costs (feedback samples) received \textbf{and} used at
round $t$ is denoted $\mathcal{S}_{t}$, so $s\in\mathcal{S}_{t}$
means that the cost of $\boldsymbol{a}_{s}$ from round $s$ is received
and used at round $t$. Since the game lasts for $T$ rounds, all
costs for which $t+d_{t}>T$ are never received. Of course, the value
of $d_{t}$ does not matter as long as $t+d_{t}>T$, and these are
just samples that the adversary chose to prevent the player from receiving.
We name these costs the missing samples and denote their set by $\mathcal{M}.$ 

While the rounds of the game are indexed by $t$, it will be useful to our analysis to index a finer time scale that counts the steps of the algorithm for every such $t$. We define $s_{-},s_{+}$ as the steps a moment before and after the algorithm uses the feedback from round $s$, respectively. These steps are taking
place in round $t$ if $s\in\mathcal{S}_{t}$. 

The player wants to have a learning algorithm that uses past observations to make good decisions over time. The performance of the player's algorithm is measured using the regret. The expected
regret is the total expected cost over a horizon of $T$ rounds, compared to the total expected cost that results from playing the best fixed action in hindsight in all rounds:
\begin{defn}
\label{RegretDefinition}Let $\boldsymbol{a}^{*}=\underset{\boldsymbol{a}\in\mathcal{K}}{\arg\min}\sum_{t=1}^{T}l_{t}\left(\boldsymbol{a}\right)$.
The expected regret is defined as
\begin{equation}
\mathbb{E}\left\{ R\left(T\right)\right\} \triangleq\sum_{t=1}^{T}\mathbb{E}\left\{ l_{t}\left(\boldsymbol{a}_{t}\right)-l_{t}\left(\boldsymbol{a}^{*}\right)\right\} \label{eq:1}
\end{equation}
where $\mathbb{E}$ is the expectation over the (possibly) random
actions $\boldsymbol{a}_{1},...,\boldsymbol{a}_{T}$ of the player. 
\end{defn}
We analyze two widely applied algorithms for the two central special
cases of the scenario above: 
\begin{enumerate}
\item \textbf{Bandit Convex Optimization} - $\mathcal{\mathcal{K\subset\mathbb{R}}}^{n}$
is a compact and convex set and $l_{t}:\mathcal{K}\rightarrow\left[0,1\right]$
is convex and Lipschitz continuous with parameter $L$. With no delays,
the FKM algorithm, also known as ``gradient descent without the gradient''
\citep{flaxman2005online}, achieves an expected regret of $O\left(nT^{\frac{3}{4}}\right)$
for this problem.
\item \textbf{Multi-Armed Bandit} - $\mathcal{K}=\left\{ 1,...,K\right\} $,
$l_{t}:\left\{ 1,...,K\right\} \rightarrow\left[0,1\right]$. With
no delays, the EXP3 algorithm \citep{auer2002nonstochastic} achieves
an expected regret of $O\left(\sqrt{TK\log  K}\right)$ for this problem. 
\end{enumerate}

\subsection{Results and Contribution}

Our main results for the single-agent case are summarized and compared to the literature in Table 1. They are based on the regret bounds proven in Theorem \ref{AdaptiveFKMBound} for FKM and Theorem \ref{thm:EXP3-Doubling}
for EXP3 for an unknown $T$ and unknown $D=\sum_{t=1}^{T}\min\left\{ d_{t},T-t+1\right\}$.

The regret bound $O\left(nT^{\frac{3}{4}}+\sqrt{n}T^{\frac{1}{3}}D^{\frac{1}{3}}\right)$ reveals a
remarkable robustness of FKM to delayed feedback. For the sequence $d_{t}=t^{\frac{1}{4}}$, the expected regret maintains the same $O\left(nT^{\frac{3}{4}}\right)$ as in the no-delay case. Even for $d_{t}=t^{\frac{4}{5}}$, the expected regret is
$O\left(nT^{\frac{14}{15}}\right)$, so FKM still has no-regret.

Similarly, the regret bound $O\left(\sqrt{\left(TK+D\right)\log  K}\right)$ reveals a significant robustness of EXP3 to delayed feedback. This
follows since the $T$ term is factored by $K$ while the delay
term $D$ is not. Consider bounded delays of the form $d_{t}=K$.
Then, the order of magnitude of the regret as a function of $T$ and
$K$ is $O\left(\sqrt{TK\log  K}\right)$, exactly as that of EXP3 without
delays. For comparison,
consider the full information case where at each round the costs of
all arms are received. Assume that the player uses the exponential
weights algorithm, which is the equivalent of EXP3 for the full information
case. For the same delay sequence $d_{t}=K$, exponential weights
achieves a regret bound of $O\left(\sqrt{TK\log  K}\right)$,
$\sqrt{K}$ times worse than the $O\left(\sqrt{T\log  K}\right)$ it achieves with no delays. 

Both bandit feedback and delays are obstacles that hurt the performance of the learning of the agent, as reflected in the expected regret. Surprisingly, even when the adversary has control over both of these obstacles, the degradation in the regret is mild.  Intuitively, with bandit feedback,
the effect of delay is much weaker than with full information since less information is delayed. This is an encouraging finding
since practical systems typically have both bandit feedback and delays. 

\begin{table}[t]
\resizebox{\textwidth}{!}{%
\begin{tabular}{|>{\centering}p{1.6cm}|>{\centering}p{3cm}|>{\centering}p{3.5cm}|>{\centering}p{3cm}|>{\centering}p{3.3cm}|>{\centering}p{2.95cm}|}
\hline 
 & \multicolumn{2}{c|}{Convex Optimization} & \multicolumn{3}{c|}{$K$ Arms}\tabularnewline
\hline 
 & OGD

(Gradient Feedback) & FKM

(Bandit Feedback)  & Exponential Weights 

(Full Information)  & EXP3

(Bandit Feedback) & FTRL

(Bandit Feedback)\tabularnewline
\hline 
No-delay & $O\left(\sqrt{T}\right)$\\

\citet{zinkevich2003online} & $O\left(nT^{3/4}\right)$\\

\citet{flaxman2005online} & $O\left(\sqrt{T\log  K}\right)$ & $O\left(\sqrt{TK\log  K}\right)$\\

\citet{auer1995gambling} &  $O\left(\sqrt{TK}\right)$\tabularnewline
\hline 
Adversarial Delays & $O\left(\sqrt{D}\right)$\\
\citet{quanrud2015online} & $O\left(nT^{\frac{3}{4}}+\sqrt{n}T^{\frac{1}{3}}D^{\frac{1}{3}}\right)$

\red{
Theorem \ref{AdaptiveFKMBound}} & $O\left(\sqrt{D\log  K}\right)$\\
\citet{quanrud2015online} & $O\left(\sqrt{\left(TK+D\right)\log K}\right)$\\
\red{Theorem \ref{thm:EXP3-Doubling}}

and \citet{thune2019nonstochastic,gyorgy2020adapting} & $O\left(\sqrt{TK+D\log K}\right)$

\citet{zimmert2019optimal}\tabularnewline
\cline{1-2} \cline{3-6} 
\end{tabular}}

\caption{Expected regret for adversarial bandits (assuming all feedback is received
before $T$, for the ease of comparison of results). For shorthand,
we use $D=\sum_{t=1}^{T}d_{t}$.}
\end{table}

As a benchmark, we provide a simple lower bound of $\Omega\left(\sqrt{D}\right)$ on the expected regret of any algorithm, even for a given $D=\sum_{t=1}^{T}d_{t}$, for multi-armed bandits or convex bandit optimization. With no delays, i.e., $D=T$, the bound coincides with the existing lower bounds that it invokes.

For multi-armed bandits a tight lower bound of $\Omega\left(\sqrt{TK}+\sqrt{D\log K}\right)$ was shown in \cite{zimmert2019optimal} based on the bound in \cite{cesa2019delay}.

For the bandit convex optimization problem, FKM with no delays does not meet the lower bound of $\Omega(\sqrt T)$. However, with delays, the logarithmic gap between our FKM upper bound $O(T^{\frac{3}{4}}+T^{\frac{1}{3}}D^{\frac{1}{3}})$ and the lower bound $\Omega(\sqrt D)$ shrinks. For $D=T^{\frac{5}{4}}$ FKM guarantees $O(T^{\frac{6}{8}})$ instead of $\Omega(T^{\frac{5}{8}})$ and for  $D=T^{\frac{3}{2}}$ FKM guarantees $O(T^{\frac{10}{12}})$ instead of  $\Omega(T^{\frac{9}{12}})$.

\begin{prop}
Consider multi-armed bandits or convex bandit optimization, as defined above. Then for any algorithm and for any integer
$D\geq T$, there exists a sequence of delays $\left\{ d_{t}\right\}$
such that $D-\sqrt{2D}\leq\sum_{t=1}^{T}d_{t}\leq D$ and the expected
regret with an oblivious adversary is $\Omega\left(\sqrt{D}\right)$.
\end{prop}
\begin{proof}
Divide the time horizon into $T_{0}$
and $T_{1}=T-T_{0}$. Set $d_{t}=1$ for $1\leq t\leq T_{0}$ so
there are no delays in this period. Set $d_{t}=T-t+1$ for $T_{0}<t\leq T$
so that the feedback in this period is never received. Then
$\sum_{t=1}^{T}d_{t}=T_{0}+\sum_{t=T_{0}+1}^{T}\left(T-t+1\right)=T_{0}+\frac{\left(T-T_{0}\right)\left(T-T_{0}+1\right)}{2}$ so we can choose $T_{0}=\Theta(T-\sqrt{D})$ such that
$D-\sqrt{2D}\leq T_{0}+\frac{\left(T-T_{0}\right)\left(T-T_{0}+1\right)}{2}\leq D$. The lower bound for either multi-armed bandits \cite[Theorem 15.2]{lattimore2020bandit}
or convex bandit optimization \cite[Theorem 3.2]{hazan2019introduction} is $\Omega\left(\sqrt{T_{0}}\right)$
for some cost functions $l_{1},\ldots,l_{T_{0}}$. The state of the algorithm at
round $T_{0}$ is the initial condition for another adversarial bandit
problem where no feedback is received for an horizon of $T_{1}$ rounds.
Hence, for any initial condition, there exists a cost function $g$ such
that if $l_{T_{0}+1}=...=l_{T}=g$, then the $T_1$ last rounds incur a regret of $\Theta\left(T_{1}\right)=\Theta\left(\sqrt{D-T }\right)$.
\end{proof}

Our results for non-cooperative games with convex cost functions under delays are summarized in Table \ref{tab:Zero-Sum}. They are based on the sufficient conditions
for no weighted-regret for FKM (Lemma \ref{lem:NoErgodicFKM}) and
EXP3 (Lemma \ref{lem:NoErgodicEXP3}). Surprisingly, the delays do
not have to be bounded for the convergence to the set of CCE to hold
(or to the set of NE for a two-player zero-sum game), and they can even increase
as fast as $d_{t}=O\left(t\log t\right)$. Moreover, the feedback
of the players does not need to be synchronized, and they may be subjected
to different delay sequences. If $\frac{d_{t}}{t}\rightarrow0$ as
$t\rightarrow\infty$ the convergence to the set of CCE follows from
the sublinear regret of FKM and EXP3. This is no longer the case for
$d_{t}=\Theta\left(t\right)$ or $d_{t}=\Theta\left(t\log t\right)$, where the regret of FKM, EXP3, or
any other algorithm is $\Theta\left(T\right)$, so the learning against
the adversary fails. Our results show that against other agents the
situation is more optimistic, as the weighted ergodic average can still converge
to the set of CCE (see Proposition \ref{StillConvergesFKM} and Proposition
\ref{StillConvergesEXP3}). To achieve that, agents need to use a time-varying step-size $\eta_{t}$, as can be seen in
Table \ref{tab:Zero-Sum}. In fact, one can go up to $d_{t}=\Theta\left(t\log t\log\left(\log t\right)\right)$
and continue iteratively in this manner, as long as $\sum_{t=1}^{\infty}\frac{1}{d_{t}}=\infty$.
For larger delays, it is not possible to converge to the set of CCE
or NE using our approach. 

The implication of our results is for approximating a CCE (NE) of a (two-player zero-sum) game that we can only simulate "in the lab" based on data or an experiment. In such a scenario, we can only evaluate the agents' performance (i.e., cost) based on the effect of their actions, which means delayed bandit feedback for the agents. We show that algorithms with no weighted-regret can approximate a CCE or a NE even with large delays that yield linear (trivial) regret.

\begin{table}[t]
\resizebox{\textwidth}{!}{%
\begin{tabular}{|>{\centering}p{4cm}|>{\centering}p{2.5cm}|>{\centering}p{2.5cm}|>{\centering}p{2.7cm}|>{\centering}p{3.6cm}|}
\hline 
 & $d_{t}\leq t^{\frac{1}{4}}$ & $d_{t}\leq t^{\frac{3}{4}}$ & $d_{t}\leq t$ & $d_{t}\leq t\log t$\tabularnewline
\hline 
Parameters for no weighted-regret: FKM & $\eta_{t}=\frac{1}{t^{\frac{5}{8}}\log\left(t+1\right)}$

$\delta=T^{-\frac{1}{8}}$ & $\eta_{t}=\frac{1}{t^{\frac{7}{8}}\log\left(t+1\right)}$

$\delta=T^{-\frac{1}{24}}$ & $\eta_{t}=\frac{1}{t\log\left(t+1\right)}$

$\delta=\left(\log\log T\right){}^{-\frac{1}{3}}$ & $\eta_{t}=\frac{1}{t\log\left(t+1\right)\log\log\left(t+1\right)}$

$\delta=\left(\log\log\log T\right)$$^{-\frac{1}{3}}$\tabularnewline
\hline 
Distance from CCE: FKM & $O\left(\frac{\log T}{T^{\frac{1}{8}}}\right)$ & $O\left(\frac{\log T}{T^{\frac{1}{24}}}\right)$ & $O\left(\frac{1}{\left(\log\log T\right)^{\frac{1}{3}}}\right)$ & $O\left(\frac{1}{\left(\log\log\log T\right)^{\frac{1}{3}}}\right)$\tabularnewline
\hline 
FKM Regret & $O\left(nT^{\frac{3}{4}}\right)$ & $O\left(nT^{\frac{11}{12}}\right)$ & $O\left(T\right)$ & $O\left(T\right)$\tabularnewline
\hline 
Parameters for no weighted-regret: EXP3 & $\eta_{t}=\frac{1}{t^{\frac{5}{8}}\log\left(t+1\right)}$ & $\eta_{t}=\frac{1}{t^{\frac{7}{8}}\log\left(t+1\right)}$ & $\eta_{t}=\frac{1}{t\log\left(t+1\right)}$ & $\eta_{t}=\frac{1}{t\log\left(t+1\right)\log\log\left(t+1\right)}$\tabularnewline
\hline 
 Distance from CCE: EXP3 & $O\left(\frac{\log T}{T^{\frac{3}{8}}}\right)$ & $O\left(\frac{\log T}{T^{\frac{1}{8}}}\right)$ & $O\left(\frac{1}{\log\log T}\right)$ & $O\left(\frac{1}{\log\log\log T}\right)$\tabularnewline
\hline 
EXP3 Regret & $O\left(T^{\frac{5}{8}}\sqrt{\log  K}\right)$ & $O\left(T^{\frac{7}{8}}\sqrt{\log  K}\right)$ & $O\left(T\right)$ & $O\left(T\right)$\tabularnewline
\hline 
\end{tabular}}

\caption{\label{tab:Zero-Sum}Conditions for no weighted-regret for different
delay sequences, along with the corresponding single agent expected
regret bounds. For shorthand, we use $D=\sum_{t=1}^{T}d_{t}$.}
\end{table}

\section{Non-cooperative Games with Delayed Bandit Feedback\label{sec:Games}}

One of the main reasons why adversarial regret bounds are needed is
that practical environments consist of multiple interacting agents,
leading to non-stationary reward processes. In this section, we study a non-cooperative game where each player only receives delayed bandit feedback, given some arbitrary sequence of delays that can be different for different players. 

It is well known that without delays, players that use an online learning algorithm with sublinear regret (i.e., no-regret) against an adaptive adversary will converge to the set of
CCE in the empirical distribution sense \citep{hannan1957approximation,hart2013simple},
and to the set of NE for a two-player zero-sum game \citep{blackwell1956analog}. With
large enough delays, the regret becomes linear in $T$ so there is no guarantee that the dynamics converge to the set of CCE or NE in any sense. Surprisingly, we show that a CCE (or a NE for a two-player zero-sum game) can still emerge even with linear regret that results from too large delays. Our weighted-regret
bounds for FKM and EPX3 provide sufficient conditions under which
CCE or a NE can be approximated this way for a convex or finite game,
respectively. In this sense, the weighting in the weighted ergodic distribution "filters out" part of the delay noise in the approximation of the CCE/NE.  
In this section, we formulate our results for general continuous games and then explain how finite games can be viewed as a special case, using mixed actions.

Our key observation is that with delayed feedback, it is not the regret that matters for the game dynamics but rather what we call the weighted-regret. The weighted-regret weighs the costs in different
rounds according to a given non-increasing sequence $\eta_{t}$ so it coincides
with the regret when $\eta_{t}=\eta,\forall t$. We define ``no weighted-regret'' to replace the traditional no-regret property: 
\begin{defn}
Let $\left\{ l_{t}:\mathcal{K}\rightarrow\left[0,1\right]\right\}_{t}$ be a sequence of cost functions, chosen by an adaptive adversary. Let $\boldsymbol{a}^{*}=\underset{\boldsymbol{a}\in\mathcal{K}}{\arg\min}\sum_{t=1}^{T}\eta_{t}l_{t}\left(\boldsymbol{a}\right)$. Let $\left\{ d_{t}\right\} $ be a delay sequence such that the cost
from round $t$ is received at round $t+d_{t}$. We say that an algorithm that produces the random sequence of (single-agent) actions $\left\{ \boldsymbol{a}_{t}\right\}$
has no weighted-regret with respect to $\left\{ d_{t}\right\}$ and the non-increasing weight sequence $\left\{ \eta_{t}\right\}$ if
\begin{equation}
\underset{T\rightarrow\infty}{\lim}\mathbb{E}\left\{ \frac{\sum_{t=1}^{T}\eta_{t}\left(l_{t}\left(\boldsymbol{a}_{t}\right)-l_{t}\left(\boldsymbol{a}^{*}\right)\right)}{\sum_{t=1}^{T}\eta_{t}}\right\} =0\label{eq:2}
\end{equation}
where the expectation is with respect to the random $\left\{ \boldsymbol{a}_{t}\right\}$ generated by the algorithm.  
\end{defn}

Having no weighted-regret is only non-trivial when $\sum_{t=1}^{\infty}\eta_{t}=\infty$. When $\sum_{t=1}^{\infty}\eta_{t}<\infty$,
the feedback from the last $\frac{T}{2}$ rounds can be discarded without affecting \eqref{eq:2}. When $\sum_{t=1}^{\infty}\eta_{t}=\infty$, there is no round $t$ after which we can discard all feedback and still maintain \eqref{eq:2}. 

We define no weighted-regret with respect to an adaptive adversary since, in a non-cooperative game with cost functions $\{u_n\}$, the "adversarial" cost function of player $n$ is $l_t(\boldsymbol{a}_{n})=u_{n}(\boldsymbol{a}_{n},\boldsymbol{a}_{-n,t})$, so it is determined by the actions of the other players. In turn, the actions of the other players depend on the past actions of player $n$. Hence, the cost function of player $n$ depends on her past actions, as with an adaptive adversary. In general, the equilibrium does not consist of absolute strategies such as "min-max" in zero-sum games. Hence, regret guarantees against an adaptive adversary allow proving convergence to the set of CCE for general non-cooperative games.

When taking the limit $T\rightarrow\infty$, it is important to emphasize
that we do not change the infinite sequence of delays $\left\{ d_{t}\right\}$,
but only reveal more elements in this sequence. In other words, we
are looking at the same game but over a longer time horizon. Therefore,
while $d_{t}=\frac{T}{2}$ makes sense for a constant $T$, it is
misleading when taking $T\rightarrow\infty$ since it represents a delay that occurred at time $t$ but changes with the limit, so the
limit is no longer of the ``same game''. 

\subsection{Coarse Correlated Equilibrium for $N$-player Games}

The CCE is a well-established equilibrium
concept for learning in games \citep{hannan1957approximation,ashlagi2008value,hart2013simple}. Our convergence argument utilizes the notion of an $\varepsilon$-CCE:
\begin{defn}\label{def:CCE}
Consider a non-cooperative game where the action set of all players is some compact set $\mathcal{A}$ and the reward function of each player  $u_{n}:\mathcal{A}^{N}\rightarrow\left[0,1\right]$
is continuous. 
Recall that $\boldsymbol{a}_{-n}$ is the action profile of all players except player $n$. Let $\mathcal{P}\left(\mathcal{A}^{N}\right)$ be the set of all Borel probability
measures over $\mathcal{A}^{N}$, equipped with the weak-{*} topology
(see \citet{simmons1963introduction}). The set of all $\varepsilon$-CCE points is
the set of distributions over $\mathcal{A}^{N}$ such that:
\begin{equation}
\mathcal{C_{\varepsilon}}=\left\{ \rho\in\mathcal{P}\left(\mathcal{A}^{N}\right)\,\middle|\,\mathbb{E}^{\boldsymbol{a}^{*}\sim\rho}\left\{ u_{n}\left(a_{n}^{*},\boldsymbol{a}_{-n}^{*}\right)\right\} \geq\underset{a_{n}\in\mathcal{A}}{\max}\,\mathbb{E}^{\boldsymbol{a}^{*}\sim\rho}\left\{ u_{n}\left(a_{n},\boldsymbol{a}_{-n}^{*}\right)\right\} -\varepsilon\,,\forall n\right\} \label{eq:3}
\end{equation}
and the set of CCE points is $\text{\ensuremath{\mathcal{C}_{0}}}$
with $\varepsilon=0$.
\end{defn}

The $\varepsilon-$CCE is a distribution over the action profiles such that no player can improve her expected reward by more than $\varepsilon$ by playing any pure action if other players keep playing according to this distribution. The CCE can be interpreted as a coordinator that uses a random signal to instruct the players what to play such that they all want to follow this instruction given that the others do. This equilibrium is called "correlated" since the actions of the players are statistically dependent, potentially through the coordinator's signal they all observe. The history of the game, and even as little as the bandit feedback each player received in the past, can implement such a coordinator. 

The CCE should not be confused with the more refined correlated equilibrium (CE). The CCE coincides with the CE if only constant departure functions are considered in the definition of the CE \cite[Page 11]{hart2013simple}, instead of all measurable mappings. Hence, by definition, every CE is a CCE. We focus on the CCE since it relates to the regret (Definition \ref{RegretDefinition}) directly, while the CE is related to the internal regret which is less common for online learning algorithms (\cite{blum2007external}).

In the game of Definition \ref{def:CCE}
a CCE always exists since Theorem \citep[Theorem 3]{hart2013simple} shows that a correlated equilibrium exists. Hence,  $\mathcal{C}_{0}$ is non-empty. 

The entity that converges to the set of CCE $\mathcal{C}_{0}$ in our non-cooperative game scenario is the expected weighted ergodic distribution of the actions
$\boldsymbol{a}_{t}$. For the special case of $\eta_{t}=\eta$
for all $t$ for some $\eta>0$, the weighted ergodic distribution of $\boldsymbol{a}_{t}$
is simply its ergodic (i.e., empirical) distribution.
\begin{defn}
For a weight sequence $\left\{ \eta_{t}\right\}$ and horizon
$T$, the weighted ergodic distribution of a sequence of actions $\left\{ \boldsymbol{a}_{t}\right\}$ is defined as:
\begin{equation}
\rho_{T}\triangleq\frac{\sum_{t=1}^{T}\eta_{t}\delta_{\boldsymbol{a}_{t}}}{\sum_{t=1}^{T}\eta_{t}}\label{eq:4}
\end{equation}
\end{defn}
where $\delta_{\boldsymbol{a}_{t}}$ is Dirac's measure, so  $\delta_{\boldsymbol{a}_{t}}\left(A\right)=1$ if $\boldsymbol{a}_{t}\in A$ and $\delta_{\boldsymbol{a}_{t}}\left(A\right)=0$ otherwise.
Note that $\mathbb{E}\left\{ \delta_{\boldsymbol{a}_{t}}\right\} =\mathbb{E}\left\{ p_{\boldsymbol{a}_{t}}\right\}$ where $p_{\boldsymbol{a}_{t}}$ is the distribution of $\boldsymbol{a}_{t}$ given the information the algorithm has at round $t$. Hence, we can use $\frac{\sum_{t=1}^{T}\eta_{t}p_{\boldsymbol{a}_{t}}}{\sum_{t=1}^{T}\eta_{t}}$ instead to estimate a CCE, which exploits more information. 

The following theorem establishes the convergence of $\mathbb{E}\left\{ \rho_{T}\right\} $ to the set of CCE $\mathcal{C}_{0}.$ 
\begin{thm}
\label{CorrelatedEquilibrium}Let $N$ players play a non-cooperative game where the action set of all players $\mathcal{A}$ is compact and the reward function of each player  $u_{n}:\mathcal{A}^{N}\rightarrow\left[0,1\right]$ is continuous. Let $\left\{ \eta_{t}\right\}$
be the non-increasing weight sequence. If each player $n$ runs an algorithm that has no weighted-regret with respect to its delay sequence $\left\{ d_{t}^{n}\right\} _{t}$
then $\mathbb{E}\left\{ \rho_{T}\right\} $ converges to the set of CCE $\mathcal{C}_{0}$ as $T\rightarrow\infty$. 
\end{thm}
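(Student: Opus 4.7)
The plan is to convert the multi-agent dynamics into an equivalent single-agent online learning problem for each player $n$ and then read the no discounted-regret guarantee as an approximate CCE condition on $\mathbb{E}\{\rho_T\}$. From player $n$'s point of view, the opponents' joint play $\mathbf{a}_t^{-n}$ is an exogenous (and in general adaptive) sequence, so player $n$ faces the cost sequence $\ell_t^n(a_n)=1-u_n(a_n,\mathbf{a}_t^{-n})\in[0,1]$. The assumed no discounted-regret property for player $n$ applied to this cost sequence, together with the fact that the minimizer $a^{*,n}=\arg\min_a\sum_t\ell_t^n(a)$ is at least as good as any fixed $a_n\in\mathcal{A}$, yields
\begin{equation*}
\limsup_{T\to\infty}\mathbb{E}\!\left\{\frac{\sum_{t=1}^{T}\eta_t\bigl(u_n(a_n,\mathbf{a}_t^{-n})-u_n(\mathbf{a}_t)\bigr)}{\sum_{t=1}^{T}\eta_t}\right\}\le 0
\qquad(\star)
\end{equation*}
for every player $n$ and every fixed deviation $a_n\in\mathcal{A}$.

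The next step is to recognize $(\star)$ as an asymptotic CCE inequality for the expected ergodic distribution $\bar{\rho}_T\triangleq\mathbb{E}\{\rho_T\}$. By the definition of $\rho_T$, the ratios in $(\star)$ are precisely $\mathbb{E}^{\mathbf{a}^*\sim\rho_T}\{u_n(a_n,\mathbf{a}_{-n}^*)\}$ and $\mathbb{E}^{\mathbf{a}^*\sim\rho_T}\{u_n(\mathbf{a}^*)\}$; since $u_n$ is bounded, Fubini lets us pull $\mathbb{E}$ inside and rewrite $(\star)$ as
\begin{equation*}
\limsup_{T\to\infty}\Bigl(\mathbb{E}^{\mathbf{a}^*\sim\bar{\rho}_T}\{u_n(a_n,\mathbf{a}_{-n}^*)\}-\mathbb{E}^{\mathbf{a}^*\sim\bar{\rho}_T}\{u_n(\mathbf{a}^*)\}\Bigr)\le 0
\qquad(\star\star)
\end{equation*}
for each $n$ and each fixed $a_n$. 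Comparing with the definition of $\mathcal{C}_\varepsilon$, this says that $\bar{\rho}_T$ satisfies the CCE inequality against every single pure deviation up to a vanishing slack.

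The final step is to upgrade $(\star\star)$ to a uniform statement over the whole deviation set and conclude $\bar{\rho}_T\to\mathcal{C}_0$. In the finite case this is immediate: there are finitely many players and finitely many deviations $a_n$, so for every $\varepsilon>0$ a single $T_\varepsilon$ works and $\bar{\rho}_T\in\mathcal{C}_\varepsilon$ eventually; since the $\mathcal{C}_\varepsilon$ are closed and nested with $\mathcal{C}_0=\bigcap_{\varepsilon>0}\mathcal{C}_\varepsilon$ inside the compact simplex $\Delta(\mathcal{A}^N)$, this gives $\mathrm{dist}(\bar{\rho}_T,\mathcal{C}_0)\to 0$. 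In the convex-compact case I would use that each $u_n$ is uniformly continuous on the compact set $\mathcal{A}^N$: fix $\varepsilon>0$, pick a finite $\delta$-net $\{a_n^{(j)}\}_{j=1}^{J}$ of $\mathcal{A}$ with $\delta$ small enough that $|u_n(a_n,\cdot)-u_n(a_n',\cdot)|<\varepsilon/2$ whenever $\|a_n-a_n'\|<\delta$, apply $(\star\star)$ to the finitely many net points so that the slack is at most $\varepsilon/2$ for $T$ large, and bound the deviation against an arbitrary $a_n$ by the value at its nearest net point plus $\varepsilon/2$. This lifts $(\star\star)$ to $\sup_{a_n\in\mathcal{A}}$ and shows $\bar{\rho}_T\in\mathcal{C}_\varepsilon$ eventually, hence $\bar{\rho}_T\to\mathcal{C}_0$ in the weak topology.

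The main obstacle is precisely this last passage in the convex case: the algorithm's no discounted-regret guarantee is stated against \emph{one} (sample-path dependent) best deviation $a^{*,n}$, yet the CCE condition requires a uniform bound over the uncountable deviation set $\mathcal{A}$. Continuity of $u_n$ and compactness of $\mathcal{A}$ are exactly what is needed to justify the finite-net approximation and to ensure that $\mathcal{C}_\varepsilon$ is closed in the weak topology so that the nested-intersection argument goes through.
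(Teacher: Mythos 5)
Your proposal is correct and follows essentially the same route as the paper: reduce each player to a single-agent delayed problem with costs $1-u_{n}\left(\cdot,\boldsymbol{a}_{-n,t}\right)$, use the no discounted-regret guarantee together with linearity of the CCE condition in $\rho_{T}$ to conclude $\mathbb{E}\left\{ \rho_{T}\right\} \in\mathcal{C}_{\varepsilon}$ for large $T$, and then pass from $\varepsilon$-CCE to $\mathcal{C}_{0}$ via a compactness argument on the nested sets $\mathcal{C}_{\varepsilon}$ (the paper's Lemma \ref{lem:epsilonNash}). The only difference is your handling of uniformity over deviations in the convex case: since the per-deviation inequality holds with a $T_{0}$ that is uniform in $\boldsymbol{a}_{n}$ (the guarantee is against the best fixed action, which dominates every fixed deviation), the paper simply plugs in the deterministic maximizer $\boldsymbol{a}_{n}=\underset{\boldsymbol{a}_{n}'\in\mathcal{A}}{\arg\max}\,\mathbb{E}^{\boldsymbol{a}^{*}\sim\mathbb{E}\left\{ \rho_{T}\right\} }\left\{ u_{n}\left(\boldsymbol{a}_{n}',\boldsymbol{a}_{-n}^{*}\right)\right\} $, which exists by compactness and continuity, making your $\delta$-net argument unnecessary (though valid).
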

\begin{proof}
See Appendix.
\end{proof}
The expectation $\mathbb{E}\left\{ \rho_{T}\right\} $ is with respect to the random actions. By definition, the set of $\varepsilon$-CCE
is convex, so the average of multiple $\varepsilon$-CCE is
also an $\varepsilon$-CCE. Hence, the implication of Theorem \ref{CorrelatedEquilibrium}
is that to approximate a CCE using $T$ samples, one can run $\sqrt{T}$ independent simulations of the game (possibly not identically distributed) and then average the resulting $\left\{ \rho_{\sqrt{T}}^{\left(i\right)}\right\} _{i=1}^{\sqrt{T}}$. From the strong law of large numbers,
this estimation converges as $T\rightarrow\infty$ with probability 1 to $\mathcal{C}_{0}$
since $\rho_{\sqrt{T}}^{\left(i\right)}$ is bounded
and $\left\{ \rho_{\sqrt{T}}^{\left(i\right)}\right\} _{i=1}^{\sqrt{T}}$
are independent. 

\begin{rem}[Finite Games]\label{rem:Finite Games}
One useful special case of Theorem \ref{CorrelatedEquilibrium} is that of finite games (i.e.,  multi-armed bandits). To see that, choose the action set to be the $K$-dimensional simplex, i.e.,  $\mathcal{A}=\Delta^{K}$. Let $U_{n}:\{1,\ldots,K \}^{N}\rightarrow[0,1]$ be the reward function of player $n$ in the finite game. Then we can define the reward function of the continuous game $u_{n}:\mathcal{A}^{N}\rightarrow\left[0,1\right]$ for every $\boldsymbol{x}\in \Delta^{K}$ as follows:

\begin{equation}
    u_{n}\left(\boldsymbol{x}\right)\triangleq\mathbb{E}^{\boldsymbol{a}\sim\boldsymbol{x}}\left\{ U_{n}\left(\boldsymbol{a}\right)\right\}
\end{equation}

where the expectation averages $\boldsymbol{a} \in \{1,\ldots,K \}^{N}$ according to the distribution that $\boldsymbol{x}$ defines on ${1,\ldots,K}$. This is indeed a special case of our formulation above since the simplex $\Delta^{K}$ is compact, and $u_{n}\left(\boldsymbol{x}\right)$ is linear and therefore continuous. Moreover, we have

\begin{equation}\label{eq:6a}
    \mathbb{E}^{\boldsymbol{x}^{*}\sim\rho}\left\{ u_{n}\left(\boldsymbol{x}^{*}\right)\right\} =\mathbb{E}^{\boldsymbol{x}^{*}\sim\rho}\left\{ \mathbb{E}^{\boldsymbol{a}^{*}\sim\boldsymbol{x}^{*}}\left\{ U_{n}\left(\boldsymbol{a}^{*}\right)\right\} \right\} =\mathbb{E}^{\boldsymbol{a}^{*}\sim\phi\left(\rho,\boldsymbol{x}^{*}\right)}\left\{ U_{n}\left(\boldsymbol{a}^{*}\right)\right\} 
\end{equation}

where $\phi\left(\rho,\boldsymbol{x}\right)$ is the distribution over $\{1,\ldots,K \}$ that is induced by randomizing $\boldsymbol{x}$ according to $\rho$ and then randomizing $\boldsymbol{a}$ according to $\boldsymbol{x}$ (i.e., the compound distribution with $\boldsymbol{x}$ as the random parameter).
Since the maximum of $u_{n}$ is attained at the corners of the simplex, we also have

\begin{equation}\label{eq:7a}
    \underset{\boldsymbol{x}_{n}\in\mathcal{A}}{\max}\,\mathbb{E}^{\boldsymbol{x}^{*}\sim\rho}\left\{ u_{n}\left(\boldsymbol{x}_{n},\boldsymbol{x}_{-n}^{*}\right)\right\} =\underset{a_{n}\in\left\{ 1,\ldots,K\right\} }{\max}\,\mathbb{E}^{\boldsymbol{a}^{*}\sim\phi\left(\rho,\boldsymbol{x}^{*}\right)}\left\{ U_{n}\left(a_{n},\boldsymbol{a}_{-n}^{*}\right)\right\}. 
\end{equation}

Due to \eqref{eq:6a} and \eqref{eq:7a}, Definition \ref{def:CCE} coincides with the definition of a CCE for a finite game, which results from Definition \ref{def:CCE} with $\mathcal{A}=\{1,\ldots,K\}$ and any arbitrary reward functions $\{u_n:\{1,\ldots,K\}^{N}\rightarrow[0,1]\}_n$. Then $\mathcal{P}\left(\mathcal{A}^{N}\right)$ takes the form of the $K$-dimensional simplex.

It is important to notice that FKM cannot be applied in a finite game with bandit feedback. While FKM can certainly be applied to linear cost functions, it would require the player to obtain $u_n({\boldsymbol{x}})$ as feedback, which is the expected cost incurred to a player that only picks one discrete arm each turn at random, according to a distribution $\boldsymbol{x}_{n}$. For this reason, we also analyze the weighted-regret of the EXP3 algorithm, which can work with discrete arm choices and bandit feedback. Remarkably, requiring less feedback, EXP3 also achieves better expected regret for this linear case. If $l_{t}^{\left(i\right)}$ is the cost of playing arm $i$ at round $t$, then the expected cost of playing a random arm $a_{t}$ according to the distribution $\boldsymbol{x}_{t}\in\Delta^{K}$  is 

\begin{equation}
    \mathbb{E}\left\{ l_{t}^{(a_{t})}\right\} =\mathbb{E}\left\{ \mathbb{E}\left\{ \sum_{i=1}^{K}x_{t}^{\left(i\right)}l_{t}^{\left(i\right)}\mid\boldsymbol{x}_{t}\right\} \right\} =\mathbb{E}\left\{ \sum_{i=1}^{K}x_{t}^{\left(i\right)}l_{t}^{\left(i\right)}\right\}. 
\end{equation}

Hence, EXP3 also leads to faster convergence to the set of CCE in finite games.
\end{rem}

The general result of Theorem \ref{CorrelatedEquilibrium} implies
stronger results for special classes of games where the set of CCE
has an interesting structure. For example, in strictly monotone games the unique CCE places probability one on the unique pure NE (\cite{ui2008correlated}). Another example is a polymatrix game, which is a finite action set game where each player plays a separate two-player zero-sum
game against each of her neighbors on a given graph. For polymatrix games, for which a two-player zero-sum game is a special case, it was shown in
\citet{cai2011minmax} that the marginal distributions of the CCE
are a NE. However, this result holds only for multi-armed bandits since it assumes discrete action sets. Our next section
establishes that the weighted ergodic average of two no weighted-regret algorithms in a two-player zero-sum game converges to the set of NE for multi-armed bandits \textbf{and} bandit convex optimization.

\subsection{Nash Equilibrium for Two-Player Convex-Concave Zero-Sum Games}

In this subsection, we consider two-player zero-sum games where the action set $\mathcal{A}\subset \mathbb{R}^{n}$ of both players is convex and compact. The cost function $u:\mathcal{A}\times\mathcal{A}\rightarrow [0,1]$ is assumed to be continuous, convex in the first argument and concave in the second. When the row player plays $\boldsymbol{y}$ and the
column player plays $\boldsymbol{z}$, the first pays a cost of $u\left(\boldsymbol{y},\boldsymbol{z}\right)$
and the second gains a reward of $u\left(\boldsymbol{y},\boldsymbol{z}\right)$.

For two-player zero-sum games, we show that algorithms with no weighted-regret lead to the set of Nash equilibria (NE). A NE is an action profile
such that no player wants to switch an action given that the other
players keep their actions. For our convergence argument, we define the
set of all approximate (pure) NE of a two-player zero-sum game:
\begin{defn}\label{def:NE}
Define a two-player zero-sum game where the action set of both players is $\mathcal{A}\subset \mathbb{R}^{n}$ and the cost function is $u:\mathcal{A}\times\mathcal{A}\rightarrow [0,1]$. The set of all $\varepsilon$-NE points of this game is defined as:
\begin{equation}
\mathcal{N_{\varepsilon}}=\left\{ \left(\boldsymbol{y}^{*},\boldsymbol{z}^{*}\right)\in\mathcal{A}\times\mathcal{A}\,\mid\,u\left(\boldsymbol{\boldsymbol{y}^{*},\boldsymbol{z}^{*}}\right)\leq\underset{\boldsymbol{y}\in\mathcal{A}}{\min}\,u\left(\boldsymbol{\boldsymbol{y},\boldsymbol{z}^{*}}\right)+\varepsilon,u\left(\boldsymbol{\boldsymbol{y}^{*},\boldsymbol{z}^{*}}\right)\geq\underset{\boldsymbol{z}\in\mathcal{A}}{\max}\,u\left(\boldsymbol{\boldsymbol{\boldsymbol{y}^{*}},\boldsymbol{z}}\right)-\varepsilon\right\} \label{eq:6}
\end{equation}
and the set of NE points is $\mathcal{N}_{0}$ with $\varepsilon=0$.
\end{defn}
The NE is a more exclusive solution concept than the CCE, so our result is stronger for this special case of a two-player zero-sum game. This holds since if a NE exists, it is always a CCE, which follows immediately from Definition \ref{def:CCE} and  Definition \ref{def:NE} by substituting the distribution that gives the NE action profile with probability 1 as the distribution $\rho$. 
For a two-player zero-sum game with a convex and continuous cost function and compact action sets, a pure
NE always exists \citep{nikaido1955note, debreu1952social} so $\mathcal{N}_0$ is non-empty. 

It was shown in \citet{bailey2018multiplicative} that for the
no-delay case, the last iterate $\left(\boldsymbol{y}_{t},\boldsymbol{z}_{t}\right)$ does not converge in general to a NE and even moves away from it. Instead, it is the ergodic average action that converges to the set of NE $\mathcal{N}_0$.
With delayed feedback, the entity that converges to $\mathcal{N}_0$
in our two-player zero-sum game scenario is the weighted ergodic average of
the actions $\left\{ \boldsymbol{a}_{t}\right\} _{t}$. For the
special case of $\eta_{t}=\eta$ for all $t$ for some $\eta>0$, the weighted
ergodic average of $\boldsymbol{a}_{t}$ is just its ergodic average.
\begin{defn}
For a weight sequence $\left\{ \eta_{t}\right\} $ and horizon
$T$, the weighted ergodic average of a sequence of actions $\left\{ \boldsymbol{a}_{t}\right\}$
is defined as
\begin{equation}
\boldsymbol{\bar{a}}_{T}\triangleq\frac{\sum_{t=1}^{T}\eta_{t}\boldsymbol{a}_{t}}{\sum_{t=1}^{T}\eta_{t}}.\label{eq:7}
\end{equation}
\end{defn}
Then, the following theorem establishes the convergence to the set
of NE $\mathcal{N}_{0}$. Following Remark \ref{rem:Finite Games}, we can apply Theorem \ref{ZeroSumTheorem} to a finite game with cost table $U: \{1,\ldots,K \}\times\{1,\ldots,K \}\rightarrow [0,1]$ such that $\boldsymbol{y},\boldsymbol{z}$ are mixed actions (distributions over $\{1,\ldots,K \})$, and  $u\left(\boldsymbol{y},\boldsymbol{z}\right)\triangleq\sum_{i=1}^{K}\sum_{j=1}^{K}y^{\left(i\right)}z^{\left(j\right)}U\left(i,j\right)$. 

Since the players' payoffs ("value of the game") are the same in all NE of a two-player zero-sum game \cite[Page 182]{cesa2006prediction}, algorithms with no weighted-regret can be used to approximate this outcome, regardless of the NE that the weighted ergodic average approximates.
\begin{thm}
\label{ZeroSumTheorem} Let two players play a zero-sum game with a convex and compact action set $\mathcal{A}\subset \mathbb{R}^{n}$ and a cost function $u\left(\boldsymbol{y},\boldsymbol{\boldsymbol{z}}\right):\mathcal{\mathcal{A}}\times\mathcal{\mathcal{A}}\rightarrow\left[0,1\right]$. Assume that $u\left(\boldsymbol{y},\boldsymbol{\boldsymbol{z}}\right)$ is convex in $\boldsymbol{y}$ and concave in $\boldsymbol{z}$ and
is continuous. Let $\boldsymbol{y}_{t}$ and $\boldsymbol{z}_{t}$
be the actions of the row and column players at round $t$, and let $\boldsymbol{\bar{y}}_{T}$ and $\boldsymbol{\bar{\boldsymbol{z}}}_{T}$ be their weighted ergodic averages.
Let $\left\{ \eta_{t}\right\}$ be the non-increasing weight sequence. Let $\left\{ d_{t}^{r}\right\} $
and $\left\{ d_{t}^{c}\right\} $ be the delay sequence of the row
player and the column player. If both players use a no weighted-regret algorithm with respect to $\left\{ d_{t}^{r}\right\} ,\left\{ d_{t}^{c}\right\} $
then, as $T\rightarrow\infty$:
\begin{enumerate}
\item $\left(\boldsymbol{\bar{y}}_{T},\boldsymbol{\bar{\boldsymbol{z}}}_{T}\right)$
converges in $L^{1}$ to the set of Nash equilibria $\mathcal{N}_{0}$
of the two-player zero-sum game.
\item $U\left(\boldsymbol{\bar{y}}_{T},\boldsymbol{\bar{\boldsymbol{z}}}_{T}\right)$
converges in $L^{1}$ to the value of the game $\underset{\boldsymbol{y}}{\min}\underset{\boldsymbol{z}}{\max}\,U\left(\boldsymbol{y},\boldsymbol{z}\right)=\underset{\boldsymbol{\boldsymbol{z}}}{\max}\underset{\boldsymbol{y}}{\min}\,U\left(\boldsymbol{y},\boldsymbol{\boldsymbol{z}}\right)$.
\end{enumerate}
\end{thm}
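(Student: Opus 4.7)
The plan is to reduce both claims of the theorem to showing that the expected duality gap
\[g(\boldsymbol{y},\boldsymbol{z}) := \max_{\boldsymbol{z}'\in\mathcal{A}} u(\boldsymbol{y},\boldsymbol{z}') - \min_{\boldsymbol{y}'\in\mathcal{A}} u(\boldsymbol{y}',\boldsymbol{z})\]
evaluated at the discounted ergodic averages vanishes, i.e.\ $\mathbb{E}[g(\bar{\boldsymbol{y}}_T,\bar{\boldsymbol{z}}_T)] \to 0$. Sion's minimax theorem applies since $\mathcal{A}$ is compact convex and $u$ is continuous, convex in its first variable and concave in its second, so the game has a value $v = \min_{\boldsymbol{y}}\max_{\boldsymbol{z}} u = \max_{\boldsymbol{z}}\min_{\boldsymbol{y}} u$. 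Weak duality yields $g\geq 0$, and a direct computation shows that $g(\boldsymbol{y},\boldsymbol{z})\leq\varepsilon$ simultaneously implies $(\boldsymbol{y},\boldsymbol{z})\in\mathcal{N}_\varepsilon$ and $|u(\boldsymbol{y},\boldsymbol{z})-v|\leq\varepsilon$.

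To bound the expected gap, I would invoke the no discounted-regret assumption of each player separately. For the row player the adaptive loss at round $t$ is $l_t^r(\cdot)=u(\cdot,\boldsymbol{z}_t)$ (identifying, in the finite case, a pure action with its Dirac mixed strategy, so that the minimum over pure actions coincides with the minimum over $\mathcal{A}$ by linearity). Setting $S_T:=\sum_{t}\eta_t$, the assumption reads
\[\mathbb{E}\Bigl[\tfrac{1}{S_T}\sum_{t}\eta_t\, u(\boldsymbol{y}_t,\boldsymbol{z}_t)\Bigr] \leq \mathbb{E}\Bigl[\min_{\boldsymbol{y}\in\mathcal{A}} \tfrac{1}{S_T}\sum_{t}\eta_t\, u(\boldsymbol{y},\boldsymbol{z}_t)\Bigr] + \varepsilon_T^{\,r},\qquad \varepsilon_T^{\,r}\to 0.\]
Pointwise concavity of $\boldsymbol{z}\mapsto u(\boldsymbol{y},\boldsymbol{z})$ combined with Jensen's inequality gives $\tfrac{1}{S_T}\sum_{t}\eta_t u(\boldsymbol{y},\boldsymbol{z}_t) \leq u(\boldsymbol{y},\bar{\boldsymbol{z}}_T)$ for every fixed $\boldsymbol{y}$; taking the minimum over $\boldsymbol{y}$ preserves the inequality, so the right-hand side above is at most $\mathbb{E}[\min_{\boldsymbol{y}} u(\boldsymbol{y},\bar{\boldsymbol{z}}_T)] + \varepsilon_T^{\,r}$. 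A symmetric derivation for the column player (whose loss is $1-u(\boldsymbol{y}_t,\cdot)$), using convexity of $\boldsymbol{y}\mapsto u(\boldsymbol{y},\boldsymbol{z})$, yields
\[\mathbb{E}\bigl[\max_{\boldsymbol{z}\in\mathcal{A}} u(\bar{\boldsymbol{y}}_T,\boldsymbol{z})\bigr] \leq \mathbb{E}\Bigl[\tfrac{1}{S_T}\sum_{t}\eta_t\, u(\boldsymbol{y}_t,\boldsymbol{z}_t)\Bigr] + \varepsilon_T^{\,c},\qquad \varepsilon_T^{\,c}\to 0.\]
Chaining the two inequalities produces $\mathbb{E}[g(\bar{\boldsymbol{y}}_T,\bar{\boldsymbol{z}}_T)] \leq \varepsilon_T^{\,r}+\varepsilon_T^{\,c}\to 0$, and since $g\geq 0$ this is $L^1$ convergence of the gap to $0$.

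Part~2 follows immediately from the pointwise estimate $|u(\bar{\boldsymbol{y}}_T,\bar{\boldsymbol{z}}_T)-v|\leq g(\bar{\boldsymbol{y}}_T,\bar{\boldsymbol{z}}_T)$ observed above. For Part~1, the function $g$ is continuous on the compact set $\mathcal{A}\times\mathcal{A}$ and vanishes exactly on $\mathcal{N}_0$, so a standard subsequence/compactness argument produces a modulus $\phi:[0,\infty)\to[0,\infty)$ with $\phi(0)=0$, continuous at $0$, and bounded by $\operatorname{diam}(\mathcal{A}\times\mathcal{A})$, such that $d((\boldsymbol{y},\boldsymbol{z}),\mathcal{N}_0)\leq \phi(g(\boldsymbol{y},\boldsymbol{z}))$. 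Since $g(\bar{\boldsymbol{y}}_T,\bar{\boldsymbol{z}}_T)\to 0$ in probability (by the $L^1$ convergence above), the continuous mapping theorem and bounded/dominated convergence give $\mathbb{E}[d((\bar{\boldsymbol{y}}_T,\bar{\boldsymbol{z}}_T),\mathcal{N}_0)]\to 0$.

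The main technical obstacle I foresee is bookkeeping the randomness when invoking no discounted-regret: each player's loss sequence is itself random because it depends on the opponent's realized (possibly mixed) moves, so the benchmark $\min_{\boldsymbol{y}}\sum\eta_t l_t^r(\boldsymbol{y})$ is a random quantity that cannot be replaced by a deterministic best response. This is precisely where the paper's insistence that its FKM and EXP3 regret bounds hold against an \emph{adaptive} adversary is needed. Once the expectations and the pure-to-mixed extension in the finite case are handled carefully, the remaining ingredients---Jensen's inequality, Sion's minimax, and the continuity/compactness reduction---are standard.
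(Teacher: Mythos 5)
Your proposal is correct and follows essentially the same route as the paper: both apply each player's no discounted-regret guarantee to the induced loss sequences $u(\cdot,\boldsymbol{z}_t)$ and $1-u(\boldsymbol{y}_t,\cdot)$, use convexity/concavity (Jensen) to pass to the discounted ergodic averages, and chain the two inequalities to control $\max_{\boldsymbol{z}'}u(\bar{\boldsymbol{y}}_T,\boldsymbol{z}')-\min_{\boldsymbol{y}'}u(\boldsymbol{y}',\bar{\boldsymbol{z}}_T)$. Your packaging via the duality gap $g$ and a modulus $\phi$ is just a restatement of the paper's two one-sided bounds combined with its Lemma \ref{lem:epsilonNash} (the compactness argument showing $\mathcal{N}_{\varepsilon}$ shrinks to $\mathcal{N}_{0}$), so no genuinely new idea or gap is involved.
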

\begin{proof}
See Appendix.
\end{proof}
\section{Doubling Trick for Online Learning with Delays \label{sec:GeneralDoublingTrickDelays}}

Online learning under delayed feedback introduces another key parameter, which is the sum of delays $D=\sum_{t=1}^{T}d_{t}$. The sum of delays appears in the expected regret bound of many online algorithms and is required to tune their step-size and other parameters. If $D$ or a tight upper bound for it is unknown, then an adaptive algorithm is needed. 

With no delays, the standard doubling trick (see \citet{cesa1997use})
can be used if $T$ is unknown. However, the same doubling trick does not work with delayed feedback. We now present a novel doubling trick for the delayed feedback case, where $T$ and $D$ are unknown. 

Our enhanced doubling trick is two-dimensional, as each epoch is indexed by a delay index $w$ as well as a time index $h$. Compared to that, our previous doubling trick in \citet{Bistritz2019} only tracked a delay index $w$. As a result, the new doubling trick leads to tighter regret bounds. For example, it yields $O\left(\sqrt{\left(TK+D\right)\log  K}\right)$ instead of $O\left(\sqrt{\left(TK^{2}+D\right)\log  K}\right)$ for EXP3. More importantly, the doubling trick in \citet{Bistritz2019} required to analyze the regret of the algorithm that employed it. In comparison, the new doubling trick is plug and play since it provably retains the regret bound for when $D$ and $T$ are known for a wide class of online learning algorithms.

Our doubling trick assumes that a regret bound of the form $O\left(k_{1}D^{d}+k_{2}T^{c}+k_{3}T^{a}D^{b}\right)$ is available if $T,D$ are known, for some constants $k_{1},k_{2},k_{3}\geq0$ and $0\leq a,b,c,d\leq1$. We divide the time horizon into super-epochs, indexed by $\nu$, each consists of epochs as explained below. A super-epoch is a set $\mathcal{E}_{\nu}$ of consecutive rounds that all use the same algorithm parameters
$\mathcal{P}_{\nu}$ (e.g., a step-size $\eta_{\nu}$). Let $\nu_{t}$ be the index of the super-epoch that contains round $t$. Let $m_{t}$ be the number of missing feedback samples at round $t$. A missing feedback sample at round $t$ is a sample from $\tau\leq t$ such that $\nu_{t}=\nu_{\tau}$ (i.e., belongs to the same super-epoch) that was not received before or at round $t$. 

An epoch is the set of consecutive rounds where the sum of delays is
within a given interval and the time index is within another given
interval. To enable that, we employ a delay index counter $w$ and a time index counter $h$. We increase $w$ every time when $\sum_{\tau=1}^{t}m_{\tau}$, that
tracks $D$, doubles. We increase $h$ every time when
the number of rounds $t$ doubles. We then define the $\left(h,w\right)$
epoch as 
\begin{equation}
\mathcal{T}_{h,w}=\left\{ t\,|2^{w-1}\leq\sum_{\tau=1}^{t}m_{\tau}<2^{w},\,2^{h-1}\leq t<2^{h}\right\}. \label{eq:9}
\end{equation}

The $\left(h,w\right)$ epochs are then partitioned into super-epochs as follows (also illustrated in Fig. 1):

\begin{itemize}
    \item For each $h$, the super-epoch $\mathcal{E}=\mathcal{S}_{h}$ is the set of $\left(h,w\right)$ such that $2k_{2}2^{ch}\geq k_{1}2^{dw}+k_{3}2^{ah}2^{bw}$.

    \item For each $w$, the super-epoch $\mathcal{E}=\mathcal{S}_{w}$ is the set of $\left(h,w\right)$ such that $2k_{1}2^{dw}\geq k_{2}2^{ch}+k_{3}2^{ah}2^{bw}$.
    
    \item All other epochs $\left(h,w\right)\notin\mathcal{S}_{h}\cup\mathcal{S}_{w}$ are each a separate super-epoch $\mathcal{E}=\left\{ \left(h,w\right)\right\}$.
\end{itemize}

During the $\left(h,w\right)\in\mathcal{E}_{\nu}$ epoch, the algorithm equipped with our doubling trick uses the parameters $\mathcal{P}_{\nu}$ (e.g., $\mathcal{P}_{\nu}=\left\{ \eta_{\nu},\delta_{\nu}\right\}$ for
FKM). Different epochs $\left(h_{1},w_{1}\right)$
and $\left(h_{2},w_{2}\right)$ in the same super-epoch use the same
set of parameters. As shown in Fig. \ref{fig:2D-Doubling-Trick}, this can only be the case if $h_{1}=h_{2}$ or $w_{1}=w_{2}$. Feedback samples from previous super-epochs are discarded once received, and are no longer counted in $\sum_{\tau=1}^{t}m_{\tau}$ after their super-epoch has ended. The resulting algorithm is detailed in Algorithm \ref{alg:DoublingTrickWrapper}.

Fig. \ref{fig:2D-Doubling-Trick} illustrates the partition of epochs into super epochs used for our doubling trick. We can see that the $\left(h,w\right)$ epoch space
is split into three regions with three different types of super-epochs.
In the upper one (in blue) $D^{d}$ dominates the regret, in the middle one (in pink) $T^{a}D^{b}$ dominates the regret and in the lower one (in orange) $T^{c}$ dominates the regret. Each blue, pink, or orange box is a super-epoch on which we apply our regret bound separately. The grey arrows represent the actual path that the epoch indices $\{\left(h,w\right)\}$ went through. 

\begin{figure}[tbh]
~~~~~~~~~~~~~~~~~~~~~~~~~~~~~~~~~~~~~~~~~~~~\includegraphics[width=5cm,height=5cm,keepaspectratio]{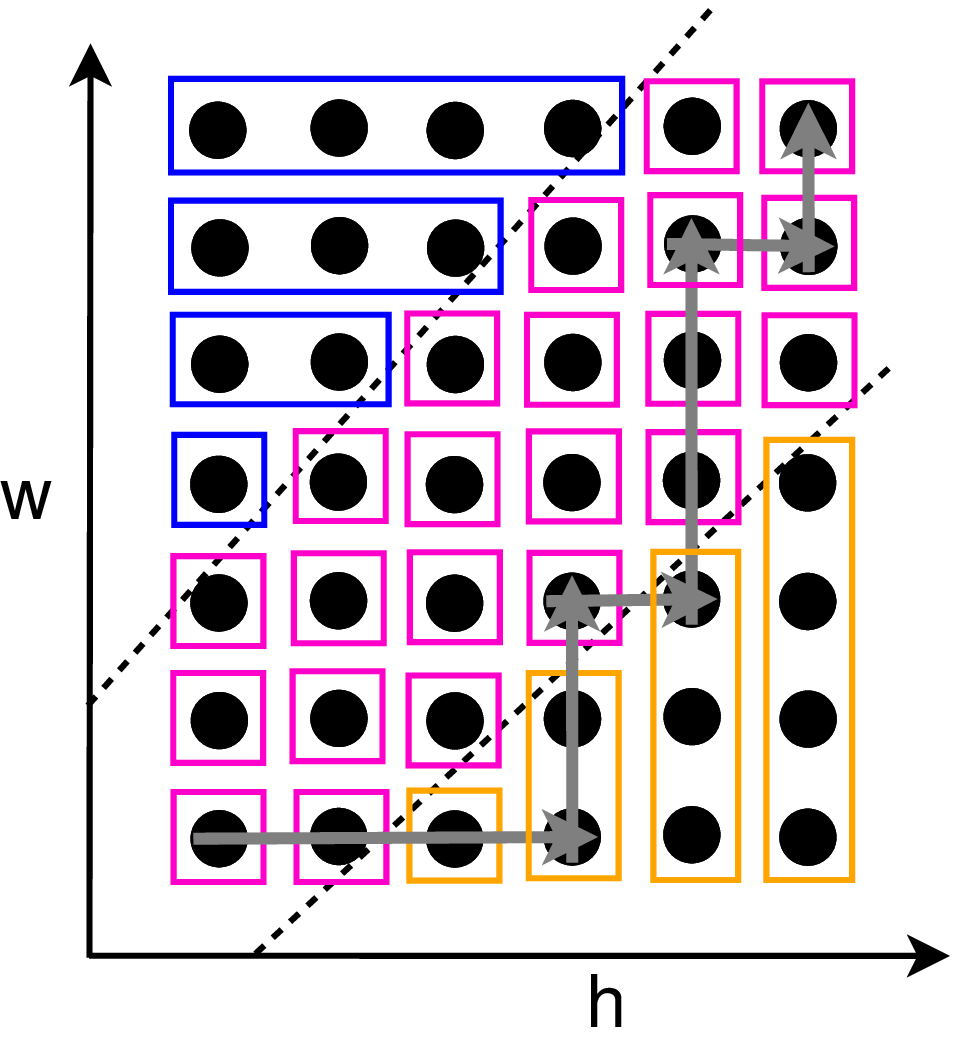}

\caption{\label{fig:2D-Doubling-Trick}2D Doubling Trick. Each box is a different super-epoch, and each dot is a different epoch. }

\end{figure}

\begin{algorithm}[t]
\caption{\label{alg:DoublingTrickWrapper}2D Doubling Trick Wrapper for Unknown $T$
and $D$}

\textbf{Initialization:} Set $h=1,w=1,\nu=1$. Choose an algorithm $\Alg\left(\mathcal{P}_{\nu}\right)$, where $\mathcal{P}_{\nu}$ is the set of the algorithm parameters. Initialize $\boldsymbol{p}_{1},\mathcal{P}_{1}$. Let $R\left(T,D,\mathcal{P^{*}}\left(T,D\right)\right)=k_{1}D^{d}+k_{2}T^{c}+k_{3}T^{a}D^{b}$ be an expected regret bound for when $D,T$ are known, that applies to $\Alg\left(\mathcal{P^{*}}\left(T,D\right)\right)$. Divide the $(h,w)$ space into super-epochs as detailed below \eqref{eq:9}.

\textbf{For $t=1,...,T$ do}
\begin{enumerate}
\item Pick an action $\boldsymbol{a}_{t}\in\mathcal{K}$ at random according to the distribution $\boldsymbol{p}_{t}$.
\item Let $\mathcal{\tilde{C}}_{t}$ be the set of delayed costs $l_{s}\left(\boldsymbol{a}_{s}\right)$
received at round $t$ such that $\nu_{s}=\nu_{t}$ (i.e., originated in the current super-epoch). Calculate the number of missing samples at round $t$ by computing the difference between the number of rounds since the beginning of the super-epoch and the number of samples from this super-epoch received so far:
\begin{equation}
m_{t}=\left(t-\underset{\tau\in\left\{ t'\,|\,\nu_{t}=\nu_{t'}\right\} }{\min}\tau+1\right)-\sum_{\tau\in\left\{ t'\,|\,\nu_{t}=\nu_{t'}\right\} }\left|\mathcal{\tilde{C}}_{\tau}\right|.\label{eq:10}
\end{equation}
\item Update the delay index: if $\sum_{\tau=1}^{t}m_{\tau}\geq2^{w}$, then update $w\leftarrow w+1$. 
\item Update the time index: if $t\geq2^{h}$, then update $h\leftarrow h+1$.
\item Start a new super-epoch: if the new $h,w$ indices are outside the current super-epoch, i.e., $\left(h,w\right)\notin\mathcal{E}_{\nu}$,  then start a new super-epoch with parameters $\mathcal{P}_{\nu+1}\leftarrow\mathcal{P^{*}}\left(2^{h_{\nu+1}},2^{w_{\nu+1}}\right)$, where $h_{\nu},w_{\nu}$ are the maximal $h,w$ indices in super-epoch $\mathcal{E}_{\nu}$:
\begin{enumerate}
    \item Initialize the algorithm with these parameters, i.e.,  $\Alg\left(\mathcal{P}_{\nu+1}\right)$.
    \item Increase the super-epoch index $\nu\leftarrow\nu+1$.
\end{enumerate}
\item Using only the samples in $\mathcal{\tilde{C}}_{t}$, update the distribution
$\boldsymbol{p}_{t}$ according to $\Alg\left(\mathcal{P}_{\nu}\right)$.
\end{enumerate}
\textbf{End}\\
\end{algorithm}
The next Lemma proves that our doubling trick tracks $D$, similar in spirit to how the standard doubling trick tracks $T$ up to a factor of 2. It also bounds the largest delay and time indices possible. 
\begin{lem}
\label{lem:doublingStuff} Let $H,W$ be the last time and delay indices, respectively. Let $\mathcal{S}_{w},\mathcal{S}_{h}$ be super-epochs, as defined below \eqref{eq:9}. Let $\mathcal{T}_{w}$ be the set of all rounds in $\mathcal{S}_{w}$ and $\tau_{w}\triangleq\underset{t\in\mathcal{T}_{w}}{\max}\,t$. Let $W_{h}$ be the maximal $w$ such that $\left(h,w\right)\in\mathcal{S}_{h}$. Let $\mathcal{T}_{h}$ be the set of all rounds in $\mathcal{S}_{h}$ and $\tau_{h}\triangleq\underset{t\in\mathcal{T}_{h}}{\max}\,t$. Algorithm \ref{alg:DoublingTrickWrapper} maintains:
\begin{enumerate}
\item For every $w$ and $h$, $\sum_{t\in\mathcal{T}_{w}}\min\left\{ d_{t},\tau_{w}-t+1\right\} \leq2^{w-1}$
and $\sum_{t\in\mathcal{T}_{h}}\min\left\{ d_{t},\tau_{h}-t+1\right\} \leq2^{W_{h}}$.
\item $W\leq\log_{2}\sum_{t=1}^{T}\min\left\{ d_{t},T-t+1\right\} +1$ and
$H\leq\log_{2}\left(T+2\right)-1$.
\end{enumerate}
\end{lem}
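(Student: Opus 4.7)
The plan is to combine a counting identity for $\sum_\tau m_\tau$ with the doubling structure of Step 3 in Algorithm \ref{alg:DoublingTrickWrapper}. First, I would establish that inside any super-epoch $\mathcal{E}_\nu$ spanning rounds $[t_0, t_1]$ one has
\begin{equation*}
\sum_{\tau=t_0}^{t_1} m_\tau = \sum_{s=t_0}^{t_1} \min\{d_s, t_1 - s + 1\},
\end{equation*}
which follows from swapping the order of summation: a sample generated at round $s \in [t_0, t_1]$ is pending in the current super-epoch for exactly the rounds $\tau \in \{s, s+1, \ldots, \min(s + d_s - 1, t_1)\}$, and it contributes $1$ to $m_\tau$ for each such $\tau$.

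Part 1 then reduces to bounding the left-hand side using Step 3. Inside a super-epoch $\mathcal{S}_w$ the first index is pinned at $w$, so the trigger $\sum m_\tau \ge 2^w$ cannot fire without ejecting us from $\mathcal{S}_w$. Since $\mathcal{S}_w$ was entered only just after the previous trigger crossed $2^{w-1}$, the in-super-epoch accumulation is at most $2^w - 2^{w-1} = 2^{w-1}$, which the identity then converts into the stated bound. The analysis of $\mathcal{S}_h$ is analogous: there $h$ is pinned while $w$ may climb, and by the definition of $W_h$ we must leave the super-epoch as soon as $w$ would exceed $W_h$, so the trigger rule caps the in-super-epoch accumulation at $2^{W_h}$.

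For Part 2, I would sum the per-super-epoch identity across all super-epochs and use $\min\{d_t, \tau_{\nu(t)} - t + 1\} \le \min\{d_t, T - t + 1\}$, where $\nu(t)$ denotes the super-epoch containing $t$, to see that the cumulative delay counter after $T$ rounds is at most $\sum_{t=1}^T \min\{d_t, T - t + 1\}$. Since each increment of the delay index $w$ requires the counter to cross a fresh power of two, $2^{W-1}$ is bounded by this total, giving the claimed $W \le \log_2 \sum_{t=1}^T \min\{d_t, T - t + 1\} + 1$. The bound $H \le \log_2(T+2) - 1$ is immediate from the trigger $t \ge 2^h$ combined with $t \le T$.

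The main hurdle is Part 1: aligning the per-super-epoch identity with Step 3's trigger accounting, handling the off-by-one between $2^{w-1}$ and $2^w$, and verifying that discarding samples originating in earlier super-epochs does not corrupt either the identity or the doubling arithmetic. Once these bookkeeping details are in place, the remainder is a clean application of the doubling trick.
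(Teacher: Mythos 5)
Your proof follows essentially the same route as the paper's: the same double-counting argument showing that each round $t$ contributes $\min\{d_t,\tau-t+1\}$ to the within-super-epoch sum of $m_\tau$, and the same doubling-trigger argument capping that accumulated sum at $2^{w}-2^{w-1}=2^{w-1}$ for $\mathcal{S}_w$ (resp.\ at $2^{W_h}$ for $\mathcal{S}_h$), with Part 2 likewise identical in substance. The only cosmetic differences are that you state the counting step as an exact identity where the paper only records the inequality it needs, and that your derivation of the $H$ bound from the trigger $t\geq 2^h$ (much like the paper's own) really yields $H=O(\log_2 T)$ rather than the precise constant $\log_2(T+2)-1$, which is all that is used downstream in Theorem \ref{2DDoublingTrickTheorem}.
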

\begin{proof}
Let $\mathcal{M}_{\mathcal{S}_{w}}$ be the set of feedback samples
for costs in $\mathcal{S}_{w}$ that are not received within $\mathcal{S}_{w}$.
Every round $t\in\mathcal{T}_{w}$ such that $t\notin\mathcal{M}_{\mathcal{S}_{w}}$
contributes exactly $d_{t}$ to $\sum_{t\in\mathcal{T}_{_{w}}}m_{t}$,
since the $t$-th feedback is missing for $d_{t}$ rounds in $\mathcal{T}_{w}$.
Every round $t\in\mathcal{M}_{\mathcal{S}_{w}}$ contributes $\tau_{w}-t+1\leq d_{t}$
to $\sum_{t\in\mathcal{T}_{_{w}}}m_{t}$ before it stops being counted.
Therefore 
\begin{equation}
\sum_{t\in\mathcal{T}_{w}}\min\left\{ d_{t},\tau_{w}-t+1\right\} \leq\sum_{t\in\mathcal{T}_{w}}m_{t}\underset{\left(a\right)}{\leq}2^{w-1}\label{eq:11}
\end{equation}
where (a) follows since if $\sum_{t\in\mathcal{T}_{w}}m_{t}>2^{w-1}$
then $\sum_{\tau=1}^{t}m_{\tau}\geq2^{w-1}+2^{w-1}=2^{w}$ and the delay index $w$ should have already increased to $w+1$. Applying the same argument
on $\mathcal{T}_{_{h}}$, we obtain that
\begin{equation}
\sum_{t\in\mathcal{T}_{h}}\min\left\{ d_{t},\tau_{h}-t+1\right\} \leq\sum_{t\in\mathcal{T}_{h}}m_{t}\underset{\left(a\right)}{\leq}2^{W_{h}}\label{eq:12}
\end{equation}
where (a) follows since if $\sum_{t\in\mathcal{T}_{h}}m_{t}>2^{W_{h}}$
then $\sum_{\tau=1}^{t}m_{\tau}\geq2^{W_{h}}$ so $w$ must have increased
to $W_{h}+1$. 

For the second part of the lemma, $2^{H+1}-2=\sum_{h=1}^{H}2^{h}\leq T$ so $H\leq\log_{2}\left(T+2\right)-1$, and 
\begin{equation}
\sum_{t=1}^{T}\min\left\{ d_{t},T-t+1\right\} \underset{\left(a\right)}{\geq}\sum_{t=1}^{T}m_{t}\geq2^{W-1}\label{eq:20}
\end{equation}
where (a) uses that every sample is counted in $\sum_{t=1}^{T}m_{t}$
for at most $d_{t}$ or $T-t+1$ rounds. 
\end{proof}
Now we can prove our main result of this section. The first assumption
is merely the regret bound that one can obtain if $T$ and $D$ are
known. The second assumption says that holding the parameters $\mathcal{P}$ fixed, the regret bound is non-decreasing
with $T$ and $D$. As we see in Section
\ref{sec:Bandit-Convex-Optimization} and Section \ref{sec:EXP3-in-Adversarial},
these basic assumptions hold for FKM and EXP3. 

\begin{thm}
\label{2DDoublingTrickTheorem} Let $\left\{ d_{t}\right\} $ be a delay sequence such that the cost
from round $t$ is received at round $t+d_{t}$. Let $T$ be the time horizon and define $D=\sum_{t=1}^{T}\min\left\{ d_{t},T-t+1\right\}$.
Let $R\left(T,D,\mathcal{P}\left(T,D\right)\right)$ be an upper bound on the expected regret (Definition \ref{RegretDefinition}) of an online learning algorithm that uses the parameters $\mathcal{P}\left(T,D\right)$. Assume:
\begin{enumerate}
\item There exists a sequence $\mathcal{P^{*}}\left(T,D\right)$ and constants $k_{1},k_{2},k_{3}\geq0$ and $0\leq a,b,c,d\leq1$
such that for all $T,D$:
\begin{equation}
R\left(T,D,\mathcal{P}^{*}\left(T,D\right)\right)\leq k_{1}D^{d}+k_{2}T^{c}+k_{3}T^{a}D^{b}.\label{eq:21}
\end{equation}
\item For a fixed $\mathcal{P}^{*}$, $R\left(T,D,\mathcal{P}^{*}\right)$ is non-decreasing with $T$ and $D$. 
\end{enumerate}
Then if Algorithm \ref{alg:DoublingTrickWrapper} is used to track $\mathcal{P^{*}}\left(T,D\right)$, it achieves a total expected regret of 
\begin{equation}
R\left(T,D\right)=O\left(k_{1}D^{d}+k_{2}T^{c}+k_{3}T^{a}D^{b}\right).\label{eq:23}
\end{equation}
\end{thm}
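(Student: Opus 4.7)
My plan is to bound the regret separately in each super-epoch visited by Algorithm~\ref{alg:DoublingTrickWrapper} and then sum geometrically across the three super-epoch types ($\mathcal{S}_h$-type, $\mathcal{S}_w$-type, and singletons). I would decompose the total expected regret into contributions from the super-epochs $\mathcal{E}_\nu$ visited along the algorithm's path in the $(w,h)$-plane; within each super-epoch the algorithm restarts with fixed parameters $\mathcal{P}_\nu$ and discards feedback originating in earlier super-epochs, so assumption~(1) can be applied to each restart.

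For a single super-epoch $\mathcal{E}_\nu$ with maximal indices $(W_\nu,H_\nu)$, Lemma~\ref{lem:doublingStuff} gives $T_\nu \le 2^{H_\nu}$ and, accounting for the feedback within $\mathcal{E}_\nu$, $D_\nu \le 2^{W_\nu}$. Combining assumption~(1) with the monotonicity in assumption~(2), and using that the parameter choice $\mathcal{P}_\nu = \mathcal{P}^*(2^{w_{\nu-1}}, 2^{h_{\nu-1}})$ differs from $\mathcal{P}^*(T_\nu, D_\nu)$ by at most a constant factor in each argument (because the $(w,h)$ path can only jump by one index between adjacent super-epochs), I would obtain the per-super-epoch bound
\begin{equation*}
\mathrm{Reg}(\mathcal{E}_\nu) \;=\; O\bigl(k_1\, 2^{d W_\nu} + k_2\, 2^{c H_\nu} + k_3\, 2^{a H_\nu + b W_\nu}\bigr).
\end{equation*}

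Next I would sum these bounds by super-epoch type. For an $\mathcal{S}_h$-type super-epoch, the defining inequality $k_1\, 2^{dw} + k_3\, 2^{ah+bw} \le 2 k_2\, 2^{ch}$ evaluated at the extremal $(W_h, h)$ collapses the regret to $O(k_2\, 2^{ch})$; summing geometrically over $h = 1, \ldots, H$ together with $H \le \log_2(T+2) - 1$ from Lemma~\ref{lem:doublingStuff} yields $O(k_2 T^c)$. A symmetric argument handles $\mathcal{S}_w$-type super-epochs and gives $O(k_1 D^d)$ via $W \le \log_2 D + 1$. For a singleton $(w,h)$, adding the two strict violations (not in $\mathcal{S}_h$, not in $\mathcal{S}_w$) produces $k_1\, 2^{dw} + k_2\, 2^{ch} < 2 k_3\, 2^{ah+bw}$, so the singleton's regret is $O(k_3\, 2^{ah+bw})$. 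To sum these across singletons I would use that the $(w,h)$ path is monotone non-decreasing with unit-step increments, so the values $2^{ah+bw}$ at successively visited singletons grow by a factor of at least $\min(2^a, 2^b)$, producing a geometric series bounded by $O(k_3\, 2^{aH + bW}) = O(k_3 T^a D^b)$. Adding the three contributions gives the claimed bound.

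The hardest part will be the singleton summation: the geometric-growth argument is clean when $a,b > 0$, but in degenerate cases such as $a=0$ or $b=0$ one has to argue separately that the path cannot linger through too many singletons with the same value of $2^{ah+bw}$, by exploiting how the $\mathcal{S}_h$, $\mathcal{S}_w$, and singleton regions interlock in the $(w,h)$-plane. A secondary subtlety is verifying that tuning $\mathcal{P}_\nu$ to the previous super-epoch's endpoints loses only constant factors in the per-epoch regret bound, which requires showing that the maximal indices $(W_\nu,H_\nu)$ grow by at most a constant multiplicative factor between consecutive super-epochs so that monotonicity can absorb the mismatch into the overall $O(\cdot)$.
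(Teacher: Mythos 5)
Your skeleton---per-super-epoch bounds via Lemma \ref{lem:doublingStuff} and monotonicity, collapse to a single dominant term using the region-defining inequalities, then a geometric summation---is the paper's argument, and your verification of the three collapses (to $O(k_2 2^{ch})$, $O(k_1 2^{dw})$, $O(k_3 2^{ah+bw})$) matches the paper exactly. But your per-super-epoch step contains a genuine gap. You justify applying Assumption 1 by claiming that $\mathcal{P}_\nu=\mathcal{P}^*(2^{w_{\nu-1}},2^{h_{\nu-1}})$ is within a constant factor of $\mathcal{P}^*(T_\nu,D_\nu)$ because the $(w,h)$ path advances by unit steps between super-epochs. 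That controls only the \emph{entry} point of $\mathcal{E}_\nu$, not its extent: inside a single super-epoch of type $\mathcal{S}_h$ the delay index can climb from its entry value all the way to $W_h$, so $2^{W_h}$ can exceed $2^{w_{\nu-1}+1}$ by an unbounded factor, and the analogous statement holds for $\mathcal{S}_w$ in the time index. Moreover, nothing in the hypotheses licenses a constant-factor-mistuning argument: Assumption 1 guarantees the bound only for the exact sequence $\mathcal{P}^*(T,D)$, and Assumption 2 gives monotonicity in $(T,D)$ for \emph{fixed} parameters, not robustness of $R$ to perturbing $\mathcal{P}$. The paper sidesteps this entirely: the partition of the $(w,h)$ plane is determined in advance by $k_1,k_2,k_3,a,b,c,d$, so the maximal indices of the current super-epoch are known at its start; the algorithm uses $\mathcal{P}^*(2^{h},2^{W_h})$ on $\mathcal{S}_h$ (resp.\ $\mathcal{P}^*(2^{H_w},2^{w})$ on $\mathcal{S}_w$), and then Lemma \ref{lem:doublingStuff} ($T_h\le 2^h$, $D_h\le 2^{W_h}$) together with Assumption 2 gives $R_{\mathcal{S}_h}\le R(2^h,2^{W_h},\mathcal{P}^*(2^h,2^{W_h}))$, to which Assumption 1 applies with no slack in the parameters at all.

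On the final summation, the paper also takes a shortcut you should adopt: it does not follow the path through the singleton region. It applies the bound $3k_3 2^{ah+bw}$ to \emph{every} point of the rectangle $\{1,\dots,W\}\times\{1,\dots,H\}$ (epochs that never occur contribute zero regret, so over-counting is free), reducing the singleton contribution to a product of two geometric series bounded by $O(k_3 T^a D^b)$ via $H\le\log_2(T+2)-1$ and $W\le\log_2 D+1$; the same device handles the $\mathcal{S}_h$ and $\mathcal{S}_w$ sums. This eliminates the along-the-path growth argument and the interlocking-regions analysis you identify as the hardest part. Your worry about $a=0$ or $b=0$ is real but equally afflicts the paper's constant $C_0$, which has $(2^a-1)(2^b-1)$ in a denominator; in both applications (FKM with $d=0$, EXP3 with $a=b=0$) the coefficient multiplying the degenerate term is zero, which is why neither argument is forced to resolve it.
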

\begin{proof}
We apply the regret bound for each of the super-epoch types $\mathcal{S}_{h},\mathcal{S}_{w}$ and $\left\{ \left(h,w\right)\right\}$ as follows (defined below \eqref{eq:9} and illustrated in Fig. \ref{fig:2D-Doubling-Trick}):

\textbf{Type 1 ($\mathcal{S}_{h}$)}: Let $W_{h}$ be the largest $w$ such that $\left(h,w\right)\in\mathcal{S}_{h}$.
Let $T_{h}$ be the length of $\mathcal{S}_{h}$ and let $D_{h}\triangleq\sum_{t\in\mathcal{T}_{h}}\min\left\{ d_{t},\tau_{h}-t+1\right\} \leq2^{W_{h}}$,
using Lemma \ref{lem:doublingStuff}. By applying the regret
bound on $\mathcal{S}_{h}$:
\begin{equation}
R_{\mathcal{S}_{h}}\underset{\left(a\right)}{\leq}R\left(T_{h},D_{h},\mathcal{P}^{*}\left(2^{h},2^{W_{h}}\right)\right)\underset{\left(b\right)}{\leq}R\left(2^{h},2^{W_{h}},\mathcal{P}^{*}\left(2^{h},2^{W_{h}}\right)\right)\\\leq k_{1}2^{W_{h}d}+k_{2}2^{hc}+k_{3}2^{ha}2^{W_{h}b}\leq3k_{2}2^{hc}\label{eq:24}
\end{equation}
where (a) follows since Algorithm \ref{alg:DoublingTrickWrapper} uses $\mathcal{P}^{*}\left(2^{h},2^{W_{h}}\right)$, and (b) from condition 2 of the Theorem. 

\textbf{Type 2 ($\mathcal{S}_{w}$)}: Let $H_{w}$ be the largest $h$ such that $\left(h,w\right)\in\mathcal{S}_{w}$. Let $T_{w}$
be the length of $\mathcal{S}_{w}$ and let $D_{w}\triangleq\sum_{t\in\mathcal{T}_{w}}\min\left\{ d_{t},\tau_{w}-t+1\right\} \leq2^{w-1}$,
using Lemma \ref{lem:doublingStuff}. By applying the regret
bound on $\mathcal{S}_{w}$:
\begin{equation}
R_{\mathcal{S}_{w}}\leq R\left(T_{w},D_{w},\mathcal{P}^{*}\left(2^{H_{w}},2^{w}\right)\right)\leq R\left(2^{H_{w}},2^{w},\mathcal{P}^{*}\left(2^{H_{w}},2^{w}\right)\right)\\\leq k_{1}2^{wd}+k_{2}2^{H_{w}c}+k_{3}2^{H_{w}a}2^{wb}\leq3k_{1}2^{wd}.\label{eq:25}
\end{equation}

\textbf{Type 3 ($\left\{ \left(h,w\right)\right\}$)}: For this case, we must have $2k_{3}2^{ha}2^{wb}\geq k_{1}2^{wd}+k_{2}2^{ch}$, so
\begin{equation}
R_{h,w}\leq R\left(T_{h,w},D_{h,w},\mathcal{P}^{*}\left(2^{h},2^{w}\right)\right)\leq R\left(2^{h},2^{w},\mathcal{P}^{*}\left(2^{h},2^{w}\right)\right)\\\leq k_{1}2^{wd}+k_{2}2^{hc}+k_{3}2^{ha}2^{wb}\leq3k_{3}2^{ha}2^{wb}.\label{eq:26}
\end{equation}
Then the total regret is bounded by
\begin{multline}
\mathbb{E}\left\{ R\left(T\right)\right\} =\sum_{h=1}^{H}R_{\mathcal{S}_{h}}+\sum_{w=1}^{W}R_{\mathcal{S}_{w}}+\sum_{\left(h,w\right)\notin\mathcal{S}_{h}\cup\mathcal{S}_{w}}R_{h,w}
\underset{\left(a\right)}{\leq}\\
3k_{1}\sum_{w=1}^{W}2^{dw}+3k_{2}\sum_{h=1}^{H}2^{ch}+3k_{3}\sum_{h=1}^{H}2^{ah}\sum_{w=1}^{W}2^{bw}\leq
6k_{1}\frac{2^{dW}-1}{2^{d}-1}+6k_{2}\frac{2^{cH}-1}{2^{c}-1}+12k_{3}\frac{2^{aH}-1}{2^{a}-1}\frac{2^{bW}-1}{2^{b}-1}\underset{\left(b\right)}{\leq}\\
6k_{1}\frac{\left(2D\right)^{d}-1}{2^{d}-1}+6k_{2}\frac{\left(T+2\right)^{c}-1}{2^{c}-1}+12k_{3}\frac{\left(T+2\right)^{a}-1}{2^{a}-1}\frac{\left(2D\right)^{b}-1}{2^{b}-1}=
O\left(k_{1}D^{d}+k_{2}T^{c}+k_{3}T^{a}D^{b}\right)\label{eq:31}
\end{multline}
where (a) uses the bounds in \eqref{eq:24},\eqref{eq:25},\eqref{eq:26}
even for epochs that do not occur (that trivially have zero regret), or that end after the last round.
Inequality (b) uses part 2 of Lemma \ref{lem:doublingStuff} to upper
bound $H$ and $W$. We define the expressions before the last equality of \eqref{eq:31} according to their continuous extensions at  $a,b,c,d=0$, which yield logarithmic terms in $T$ or $D$. Note that $\log D \leq 2\log T$.
\end{proof}

\section{The FKM Algorithm for Adversarial Bandit Convex Optimization with Delayed
Feedback \label{sec:Bandit-Convex-Optimization}}

In bandit convex optimization, the action $\boldsymbol{a}_{t}$ is
chosen from a convex and compact set $\mathcal{K\subset\mathbb{R}}^{n}$
with diameter $\left|\mathcal{K}\right|\triangleq\underset{\boldsymbol{x,y}\in\mathcal{K}}{\max}\left\Vert \boldsymbol{x}-\boldsymbol{y}\right\Vert $.
Without the loss of generality, we assume that $\mathcal{K}$ contains
the unit ball centered at the zero vector. The cost functions $l_{t}\left(\boldsymbol{a}_{t}\right)\in\left[0,1\right]$
are convex for all $t$ and Lipschitz continuous with parameter $L$.
The player has no access to the gradient of $l_{t}$, and only receives
the value of $l_{t}\left(\boldsymbol{a}_{t}\right)$ at the beginning of round $t+d_{t}$.
In the FKM algorithm, the player uses the cost value to estimate
the gradient. The idea is to use a perturbation $\boldsymbol{u}_{t}$,
drawn uniformly at random on the $n$-dimensional unit sphere $\mathbb{S}_{1}$.
Then, instead of playing the unperturbed action $\boldsymbol{x}_{t}$,
the player plays $\boldsymbol{a}_{t}=\boldsymbol{x}_{t}+\delta\boldsymbol{u}_{t}$
for a sampling radius $\delta>0$. Let $\mathcal{K}_{\delta}=\left\{ \boldsymbol{x}\,|\,\frac{1}{1-\delta}\boldsymbol{x}\in\mathcal{K}\right\} $.
To ensure that $\boldsymbol{x}_{t}+\delta\boldsymbol{u}_{t}\in\mathcal{K}$,
we maintain $\boldsymbol{x}_{t}\in\mathcal{K}_{\delta}$ by projecting
into $\mathcal{K}_{\delta}$ after each gradient step. Since the cost functions are Lipschitz continuous, this projection creates a bias that decreases with $\delta$.

Define the
following filtration 
\begin{equation}
\mathcal{F}_{t}=\sigma\left(\left\{ \boldsymbol{x}_{\tau}\,|\,\tau\leq t\right\} \right)\label{eq:32}
\end{equation}
which is generated from all the past unperturbed actions. With a slight abuse of notation, we use $\mathcal{F}_{s_{-}}$ to denote the filtration induced from all $\boldsymbol{x}_{\tau}$ for $\tau \leq t$ and all $\boldsymbol{x}_{q_{-}}$ for rounds $q\in\mathcal{S}_{t}$ that the algorithm used their feedback before using the feedback from round $s$, but including $\boldsymbol{x}_{s_{-}}$. 

The purpose of the action perturbation is to allow for an estimator
for the gradient at \textbf{$\boldsymbol{a}_{t}$} with a bias that decreases with $\delta$. This bias adds up to the bias that results from projecting into  $\mathcal{K}_{\delta}$. On the other hand, the variance of the estimator increases with $\delta$, introducing a bias-variance trade-off.
\begin{lem}[{\citet[Lemma 2.1]{flaxman2005online}}]
\label{SPSA}Let $\delta>0$ and define $\hat{l}\left(\boldsymbol{x}\right)\triangleq\mathbb{E}^{\boldsymbol{u}\in\mathbb{S}_{1}}\left\{ l\left(\boldsymbol{x}+\delta\boldsymbol{u}\right)\right\}$
where $\mathbb{S}_{1}$ is the unit sphere. Let $\boldsymbol{g}=\frac{n}{\delta}l\left(\boldsymbol{x}+\delta\boldsymbol{u}\right)\boldsymbol{u}$.
Then $\mathbb{E}^{\boldsymbol{u}\in\mathbb{S}_{1}}\left\{ \boldsymbol{g}\right\} =\nabla\hat{l}\left(\boldsymbol{x}\right)$.
\end{lem}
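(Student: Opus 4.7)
The plan is to prove the gradient identity by applying the divergence theorem, which is the classical route to the Flaxman--Kalai--McMahan one-point gradient estimator. Since the claim is a vector equality in $\mathbb{R}^{n}$, I would verify it component by component: fix a coordinate $i$ and establish the scalar identity $\partial_{x_{i}}\hat{l}(\boldsymbol{x}) = (n/\delta)\,\mathbb{E}^{\boldsymbol{u}\in\mathbb{S}_{1}}[l(\boldsymbol{x}+\delta\boldsymbol{u})\, u_{i}]$, then reassemble the components into the vector statement $\nabla\hat{l}(\boldsymbol{x}) = \mathbb{E}^{\boldsymbol{u}\in\mathbb{S}_{1}}\{\boldsymbol{g}\}$.

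The workhorse step is the divergence theorem applied to the vector field $\boldsymbol{F}(\boldsymbol{v}) \triangleq l(\boldsymbol{x}+\delta\boldsymbol{v})\,\boldsymbol{e}_{i}$ on the closed unit ball $\mathbb{B}_{1}$. Its divergence is $\nabla\cdot\boldsymbol{F}(\boldsymbol{v}) = \delta\,\partial_{x_{i}}l(\boldsymbol{x}+\delta\boldsymbol{v})$ by the chain rule, and since the outward unit normal to $\mathbb{S}_{1}$ at $\boldsymbol{u}$ is $\boldsymbol{u}$ itself, Stokes' formula produces the boundary-to-interior relation
\[
\delta \int_{\mathbb{B}_{1}} \partial_{x_{i}} l(\boldsymbol{x}+\delta\boldsymbol{v})\, d\boldsymbol{v} \;=\; \int_{\mathbb{S}_{1}} l(\boldsymbol{x}+\delta\boldsymbol{u})\, u_{i}\, dS(\boldsymbol{u}).
\]
Dividing both sides by $\delta\,\mathrm{vol}(\mathbb{B}_{1})$, the right-hand side becomes $(|\mathbb{S}_{1}|/(\delta\,\mathrm{vol}(\mathbb{B}_{1})))\,\mathbb{E}^{\boldsymbol{u}\in\mathbb{S}_{1}}[l(\boldsymbol{x}+\delta\boldsymbol{u})\, u_{i}]$, and the $n$-dimensional surface-to-volume identity $|\mathbb{S}_{1}|/\mathrm{vol}(\mathbb{B}_{1}) = n$ supplies exactly the factor $n/\delta$ in front of the expectation, which is the $i$-th coordinate of $\mathbb{E}^{\boldsymbol{u}\in\mathbb{S}_{1}}\{\boldsymbol{g}\}$. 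The left-hand side, after passing the partial outside the integral, is recognized as $\partial_{x_{i}}\hat{l}(\boldsymbol{x})$ applied to the smoothed surrogate induced by the perturbation law; assembling over $i$ gives the claimed vector identity.

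The main technical subtlety is justifying both the exchange of differentiation and integration and the applicability of the divergence theorem when $l$ is only Lipschitz continuous rather than $C^{1}$. My plan is to mollify $l$ by convolving with a smooth bump of vanishing radius $\varepsilon$, prove the identity for each smooth approximant $l_{\varepsilon}$ where Stokes' theorem applies classically, and pass to the limit $\varepsilon\to 0$ via dominated convergence; uniform boundedness of $l$ on $\mathcal{K}$ together with Lipschitzness make both sides of the identity continuous under the mollification, so the limiting identity holds. Past this regularity workaround the argument is essentially a one-line invocation of the divergence theorem and parallels the original proof of \citet{flaxman2005online}.
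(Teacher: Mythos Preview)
The paper does not supply its own proof of this lemma; it is quoted verbatim from \citet[Lemma 2.1]{flaxman2005online} and used as a black box. Your divergence-theorem argument is exactly the classical proof given in that original reference, so the approach is correct and coincides with the intended source.

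One small point worth flagging: as literally typeset in the paper, $\hat{l}$ is defined as an expectation over the unit \emph{sphere} $\mathbb{S}_{1}$, whereas the left-hand side of your divergence-theorem identity produces the partial derivative of the average over the unit \emph{ball} $\mathbb{B}_{1}$. These are not the same object, and the lemma as stated would not follow from your computation. This is a typo in the paper's statement --- elsewhere (e.g.\ equation \eqref{eq:81}) the paper correctly treats $\hat{l}$ as the ball average $\mathbb{E}^{\boldsymbol{u}\in\mathbb{B}}\{l(\boldsymbol{x}+\delta\boldsymbol{u})\}$, which is also how \citet{flaxman2005online} define it. Your argument proves the intended (ball-average) version; you may want to note the discrepancy explicitly rather than gloss over it with the phrase ``the smoothed surrogate induced by the perturbation law.''
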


The next Lemma is the main result of this section, used to prove both
Theorem \ref{SingleAgentConvex} and Lemma \ref{lem:NoErgodicFKM}. 
\begin{lem}[Weighted-Regret Bound for FKM]
\label{WeightedRegretConvex}Let $\left\{ \eta_{t}\right\} $ be a
non-increasing step-size sequence. Let $\delta$ be the sampling radius.
For every $t$, let $l_{t}:\mathcal{K}\rightarrow\left[0,1\right]$
be a convex cost function that is Lipschitz continuous with parameter
$L$. Let $\boldsymbol{a}^{*}=\underset{\boldsymbol{a}\in\mathcal{K}}{\arg\min}\sum_{t=1}^{T}\eta_{t}l_{t}\left(\boldsymbol{a}\right)$.
Let $\left\{ d_{t}\right\} $ be a delay sequence such that the cost
from round $t$ is received at round $t+d_{t}$. Define the set $\mathcal{M}=\left\{ t\,|\,t+d_{t}>T,\,t\in\left[1,T\right]\right\}$
of all samples that are not received before round $T$. Then using FKM (Algorithm \ref{alg:FKM}) guarantees:
\begin{enumerate}
\item For an oblivious adversary:
\begin{equation}
\sum_{t=1}^{T}\eta_{t}\left(\mathbb{E}\left\{ l_{t}\left(\boldsymbol{a}_{t}\right)\right\} -l_{t}\left(\boldsymbol{a}^{*}\right)\right)\leq\\\sum_{t\in\mathcal{M}}\eta_{t}+\frac{\left|\mathcal{K}\right|^{2}}{2}+\left(3+\left|\mathcal{K}\right|\right)L\delta\sum_{t\notin\mathcal{M}}\eta_{t}+\frac{1}{2}\frac{n^{2}}{\delta^{2}}\sum_{t\notin\mathcal{M}}\eta_{t}^{2}+2L\frac{n}{\delta}\sum_{t\notin\mathcal{M}}\eta_{t}^{2}d_{t}.\label{eq:34}
\end{equation}
\item For an adaptive adversary: 
\begin{multline}
\sum_{t=1}^{T}\eta_{t}\mathbb{E}\left\{ l_{t}\left(\boldsymbol{a}_{t}\right)-l_{t}\left(\boldsymbol{a}^{*}\right)\right\} \leq\\\sum_{t\in\mathcal{M}}\eta_{t}+\frac{\left|\mathcal{K}\right|^{2}}{2}+\left(3+\left|\mathcal{K}\right|\right)L\delta\sum_{t\notin\mathcal{M}}\eta_{t}+\left|\mathcal{K}\right|\sqrt{2\sum_{t\notin\mathcal{M}}\eta_{t}^{2}\left(\frac{n^{2}}{\delta^{2}}+L^{2}\right)}+\frac{1}{2}\frac{n^{2}}{\delta^{2}}\sum_{t\notin\mathcal{M}}\eta_{t}^{2}\left(1+4d_{t}\right).\label{eq:35}
\end{multline}
\end{enumerate}
\end{lem}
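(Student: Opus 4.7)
The proof adapts the FKM analysis of \citet{flaxman2005online} by carefully tracking how delayed processing perturbs the projected-gradient iterate. I would first smooth each cost as $\hat{l}_{t}(\boldsymbol{x})=\mathbb{E}^{\boldsymbol{u}\in\mathbb{S}_{1}}\{l_{t}(\boldsymbol{x}+\delta\boldsymbol{u})\}$, which is convex and $L$-Lipschitz with $|l_{t}(\boldsymbol{x})-\hat{l}_{t}(\boldsymbol{x})|\leq L\delta$, and compare to the shrunk optimum $\boldsymbol{a}^{\delta}\triangleq(1-\delta)\boldsymbol{a}^{*}\in\mathcal{K}_{\delta}$, for which $\|\boldsymbol{a}^{*}-\boldsymbol{a}^{\delta}\|\leq\delta|\mathcal{K}|$. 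Decomposing $l_{t}(\boldsymbol{a}_{t})-l_{t}(\boldsymbol{a}^{*})$ through $\hat{l}_{t}(\boldsymbol{x}_{t})$ and $\hat{l}_{t}(\boldsymbol{a}^{\delta})$, three Lipschitz bounds on the resulting differences produce the $(3+|\mathcal{K}|)L\delta\sum_{t\notin\mathcal{M}}\eta_{t}$ slack in both claims, while rounds $t\in\mathcal{M}$ are absorbed trivially via $l_{t}\in[0,1]$ into the $\sum_{t\in\mathcal{M}}\eta_{t}$ term. The remaining job is to bound the middle sum $\sum_{t\notin\mathcal{M}}\eta_{t}[\hat{l}_{t}(\boldsymbol{x}_{t})-\hat{l}_{t}(\boldsymbol{a}^{\delta})]$.

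By convexity of $\hat{l}_{t}$ together with Lemma \ref{SPSA}, the middle sum is at most $\sum_{t\notin\mathcal{M}}\eta_{t}\langle\nabla\hat{l}_{t}(\boldsymbol{x}_{t}),\boldsymbol{x}_{t}-\boldsymbol{a}^{\delta}\rangle$, which equals $\sum_{t\notin\mathcal{M}}\eta_{t}\mathbb{E}\{\langle\boldsymbol{g}_{t},\boldsymbol{x}_{t}-\boldsymbol{a}^{\delta}\rangle\mid\mathcal{F}_{t_{-}}\}$ in the oblivious case. I would split $\boldsymbol{x}_{t}-\boldsymbol{a}^{\delta}=(\boldsymbol{x}_{t_{-}}-\boldsymbol{a}^{\delta})+(\boldsymbol{x}_{t}-\boldsymbol{x}_{t_{-}})$ into a \emph{centered} and a \emph{drift} piece. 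The centered piece is handled by the projected-gradient recursion applied at the moment $\boldsymbol{g}_{t}$ is actually used,
\[
\|\boldsymbol{x}_{t_{+}}-\boldsymbol{a}^{\delta}\|^{2}\leq\|\boldsymbol{x}_{t_{-}}-\boldsymbol{a}^{\delta}\|^{2}-2\eta_{t}\langle\boldsymbol{g}_{t},\boldsymbol{x}_{t_{-}}-\boldsymbol{a}^{\delta}\rangle+\eta_{t}^{2}\|\boldsymbol{g}_{t}\|^{2},
\]
which telescopes over the feedback-arrival order, and with $\|\boldsymbol{g}_{t}\|\leq n/\delta$ produces the $\tfrac{|\mathcal{K}|^{2}}{2}+\tfrac{1}{2}\tfrac{n^{2}}{\delta^{2}}\sum_{t\notin\mathcal{M}}\eta_{t}^{2}$ piece.

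The drift piece is the delay-sensitive part and the main technical challenge. From the update rule, $\|\boldsymbol{x}_{t}-\boldsymbol{x}_{t_{-}}\|\leq(n/\delta)\sum_{r:\,r+d_{r}\in[t,t+d_{t}-1]}\eta_{r}$, so the drift contribution is controlled by a double sum $\sum_{t,r}\eta_{t}\eta_{r}\mathbbm{1}[r\text{ in flight during }t]$. I would exploit non-increasing step sizes so that $\eta_{t}\eta_{r}\leq\eta_{\min(t,r)}^{2}$, then split by which index is smaller: for each $t$ at most $d_{t}$ indices $r\geq t$ satisfy the window condition (since then $r\in[t,t+d_{t}-1]$), and symmetrically for each $r$ at most $d_{r}$ values $t>r$ satisfy it (since then $t\in[r+1,r+d_{r}]$), yielding $\sum_{t,r}\eta_{t}\eta_{r}\mathbbm{1}[\text{in flight}]=O\!\left(\sum_{t}\eta_{t}^{2}d_{t}\right)$. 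For the oblivious adversary, $\boldsymbol{x}_{t_{-}}$ is $\boldsymbol{u}_{t}$-independent, so conditioning on $\mathcal{F}_{t_{-}}$ replaces one factor of $\boldsymbol{g}_{t}$ by $\nabla\hat{l}_{t}(\boldsymbol{x}_{t})$ of norm $\leq L$, producing the $2L(n/\delta)\sum_{t\notin\mathcal{M}}\eta_{t}^{2}d_{t}$ term and closing the oblivious bound.

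For the adaptive adversary, two modifications are needed. The comparator $\boldsymbol{a}^{*}$ (hence $\boldsymbol{a}^{\delta}$) depends on the entire random play, so it is not $\mathcal{F}_{t_{-}}$-measurable and cannot be pulled out of the conditional expectation on the centered piece. I would add and subtract $\nabla\hat{l}_{t}(\boldsymbol{x}_{t})$, extract $\|\boldsymbol{a}^{\delta}-\boldsymbol{x}_{t_{-}}\|\leq|\mathcal{K}|$ via Cauchy-Schwarz, and control the orthogonal-martingale sum via $\mathbb{E}\|\sum_{t\notin\mathcal{M}}\eta_{t}(\boldsymbol{g}_{t}-\nabla\hat{l}_{t}(\boldsymbol{x}_{t}))\|^{2}\leq 2\sum_{t\notin\mathcal{M}}\eta_{t}^{2}(n^{2}/\delta^{2}+L^{2})$, which via Jensen produces the $|\mathcal{K}|\sqrt{2\sum_{t\notin\mathcal{M}}\eta_{t}^{2}(n^{2}/\delta^{2}+L^{2})}$ term. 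For the drift, losing the $\boldsymbol{u}_{t}$-independence of the $\boldsymbol{g}_{r}$'s building $\boldsymbol{x}_{t_{-}}$ (because adaptive $l_{r}$ for $r>t$ depends on $\boldsymbol{a}_{t}$) forces the cruder $\|\boldsymbol{g}_{t}\|\leq n/\delta$ on both factors, turning the prefactor $L(n/\delta)$ into $(n/\delta)^{2}$ and producing $\tfrac{1}{2}(n/\delta)^{2}\sum_{t\notin\mathcal{M}}\eta_{t}^{2}(1+4d_{t})$ after the same counting argument. The main obstacle is the drift counting: correctly identifying which feedbacks are in flight during each window (since several may be processed in a single round) and aligning with the $s_{-},s_{+}$ convention to avoid off-by-one over-counting when the same round processes both the drift contributor $r$ and the step targeted at $t$.
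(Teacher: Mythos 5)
Your proposal is correct and follows essentially the same route as the paper's proof: the same smoothing-plus-shrunk-comparator accounting for the $(3+|\mathcal{K}|)L\delta$ term, the same projected-gradient recursion telescoped over the feedback-arrival order with the $\boldsymbol{x}_{t}-\boldsymbol{x}_{t_{-}}$ drift split off, the same in-flight counting argument (the paper isolates it as Lemma \ref{Counting Delays Lemma}), and the same two adaptive-adversary fixes (the Cauchy--Schwarz/martingale second-moment bound for the random comparator and the cruder $n/\delta$ bound on the conditional gradient in the drift term). The only cosmetic difference is that the paper routes the convexity step through an explicit linearized surrogate $h_{t}$ rather than your direct use of $\nabla\hat{l}_{t}$, which is equivalent.
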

\begin{proof}
See Appendix.
\end{proof}

\begin{algorithm}[t]
\caption{\label{alg:FKM}FKM with delays}

\textbf{Initialization: }Let $\left\{ \eta_{t}\right\} $ be a positive
non-increasing sequence. Let $\delta<1$ . Set $\boldsymbol{x}_{1}=0$. 

\textbf{For $t=1,...,T$ do}
\begin{enumerate}
\item Draw $\boldsymbol{u}_{t}\in\mathbb{S}_{1}$ uniformly at random, where $\mathbb{S}_{1}$ is the $n$-dimensional unit sphere.
\item Play $\boldsymbol{a}_{t}=\boldsymbol{x}_{t}+\delta\boldsymbol{u}_{t}$.
\item Obtain a set of delayed costs $l_{s}\left(\boldsymbol{a}_{s}\right)$
for all $s\in\mathcal{S}_{t}$ and compute $\boldsymbol{g}_{s}=\frac{n}{\delta}l_{s}\left(\boldsymbol{a}_{s}\right)\boldsymbol{u}_{s}$ for each.
\item Let $s_{\min}=\underset{s\in\mathcal{S}_{t}}{\min}s$ and $s_{\max}=\underset{s\in\mathcal{S}_{t}}{\max}s$.
Set $\boldsymbol{x}_{s_{\min}^{-}}=\boldsymbol{x}_{t}$. For every
$s\in\mathcal{S}_{t}$, update
\begin{equation}
\boldsymbol{x}_{s_{+}}=\prod_{\mathcal{K}_{\delta}}\left(\boldsymbol{x}_{s_{-}}-\eta_{s}\boldsymbol{g}_{s}\right)\label{eq:33}
\end{equation}
~~~~~~~where $\mathcal{K}_{\delta}=\left\{ \boldsymbol{x}\,|\,\frac{1}{1-\delta}\boldsymbol{x}\in\mathcal{K}\right\}$,
and then set \textbf{$\boldsymbol{x}_{t+1}=\boldsymbol{x}_{s_{\max}^{+}}$}.
\end{enumerate}
\textbf{End}
\end{algorithm}

The following theorem establishes the expected regret bound for FKM
with delays. It is proved by optimizing over a constant step-size
$\eta$ and sampling radius $\delta$ in Lemma \ref{WeightedRegretConvex}.
\begin{thm}
\label{SingleAgentConvex}Let $\eta>0$ and $0<\delta<1$. For every
$t$, let $l_{t}:\mathcal{K}\rightarrow\left[0,1\right]$ be a convex
cost function that is Lipschitz continuous with parameter $L$. Let
$\boldsymbol{a}^{*}=\underset{\boldsymbol{a}\in\mathcal{K}}{\arg\min}\sum_{t=1}^{T}l_{t}\left(\boldsymbol{a}\right)$.
Let $\left\{ d_{t}\right\} $ be a delay sequence such that the cost
from round $t$ is received at round $t+d_{t}$. Define the set ${\mathcal{M}=\left\{ t\,|\,t+d_{t}>T,\,t\in\left[1,T\right]\right\}}$ of all samples that are not received before round $T$. Then the expected regret of FKM (Algorithm \ref{alg:FKM}) against an oblivious adversary satisfies 
\begin{multline}
\mathbb{E}\left\{ R\left(T\right)\right\} =\sum_{t=1}^{T}\mathbb{E}\left\{ l_{t}\left(\boldsymbol{a}_{t}\right)-l_{t}\left(\boldsymbol{a}^{*}\right)\right\} \leq\\\left|\mathcal{M}\right|+\left(\left(3+\left|\mathcal{K}\right|\right)\delta L+\frac{1}{2}\eta\frac{n^{2}}{\delta^{2}}\right)\left(T-\left|\mathcal{M}\right|\right)+\frac{\left|\mathcal{K}\right|^{2}}{2\eta}+2Ln\frac{\eta}{\delta}\sum_{t\notin\mathcal{M}}d_{t}.\label{eq:36}
\end{multline}
Furthermore, for
\begin{equation}
\eta=\left|\mathcal{K}\right|\min\left\{ \frac{1}{n}T^{-\frac{3}{4}},\frac{1}{\sqrt{n}}T^{-\frac{1}{3}}\left(\sum_{t\notin\mathcal{M}}d_{t}\right)^{-\frac{1}{3}}\right\} \text{ and }\delta=\max\left\{ T^{-\frac{1}{4}},T^{-\frac{2}{3}}\left(\sum_{t\notin\mathcal{M}}d_{t}\right)^{\frac{1}{3}}\right\} \label{eq:37}
\end{equation}
we obtain
\begin{equation}
\mathbb{E}\left\{ R\left(T\right)\right\} =O\left(nT^{\frac{3}{4}}+\sqrt{n}\left(\sum_{t\notin\mathcal{M}}d_{t}\right)^{\frac{1}{3}}T^{\frac{1}{3}}+\left|\mathcal{M}\right|\right).\label{eq:38}
\end{equation}
\end{thm}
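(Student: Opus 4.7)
The plan is to obtain \eqref{eq:36} as a direct specialization of Lemma~\ref{WeightedRegretConvex} to a constant step size, and then to get \eqref{eq:38} by a standard term-balancing optimization over $\eta$ and $\delta$. Concretely, I would set $\eta_t=\eta$ for all $t$ in the oblivious-adversary bound \eqref{eq:34}. With this choice, $\sum_{t\in\mathcal{M}}\eta_t = \eta|\mathcal{M}|$, $\sum_{t\notin\mathcal{M}}\eta_t=\eta(T-|\mathcal{M}|)$, $\sum_{t\notin\mathcal{M}}\eta_t^{2}=\eta^{2}(T-|\mathcal{M}|)$, and $\sum_{t\notin\mathcal{M}}\eta_t^{2}d_t=\eta^{2}\sum_{t\notin\mathcal{M}}d_t$. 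Noting that the LHS of \eqref{eq:34} equals $\eta\,\mathbb{E}\{R(T)\}$ when $\eta_t=\eta$, dividing through by $\eta$ produces exactly the bound \eqref{eq:36}, with the term $\sum_{t\in\mathcal{M}}\eta_t/\eta=|\mathcal{M}|$ accounting for the missing feedback.

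For the second part, I would treat the bound in \eqref{eq:36} as a function of $(\eta,\delta)$ with four nontrivial terms: the perturbation bias $(3+|\mathcal{K}|)L\delta T$, the gradient-estimator variance $\tfrac{1}{2}\eta n^{2}T/\delta^{2}$, the initial-distance term $|\mathcal{K}|^{2}/(2\eta)$, and the delay term $2Ln(\eta/\delta)\sum_{t\notin\mathcal{M}}d_t$. The optimization naturally splits into two regimes. In the \emph{no-delay-dominant} regime one balances the first three terms as in the original FKM analysis \citep{flaxman2005online}, which gives $\eta\sim |\mathcal{K}| T^{-3/4}/n$ and $\delta\sim T^{-1/4}$, yielding an $O(nT^{3/4})$ contribution. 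In the \emph{delay-dominant} regime one balances the delay term against the variance and the initial-distance term, leading to $\eta\sim |\mathcal{K}| n^{-1/2}T^{-1/3}(\sum_{t\notin\mathcal{M}} d_t)^{-1/3}$ and $\delta\sim T^{-2/3}(\sum_{t\notin\mathcal{M}} d_t)^{1/3}$, yielding an $O(\sqrt{n}\,T^{1/3}(\sum_{t\notin\mathcal{M}} d_t)^{1/3})$ contribution.

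Taking $\eta$ to be the minimum and $\delta$ to be the maximum of the two regime values, as in \eqref{eq:37}, produces a single choice whose regret is simultaneously at most a constant times each regime's bound, since each individual term in \eqref{eq:36} is monotone in $\eta$ or $\delta$ and the chosen $(\eta,\delta)$ is no worse in each term than the regime that controls it. Summing over both regimes and absorbing the $|\mathcal{M}|$ term then yields \eqref{eq:38}. The only mild obstacle is to verify that the chosen $\delta$ satisfies $\delta<1$ for all sufficiently large $T$ (which follows since both $T^{-1/4}$ and $T^{-2/3}(\sum_{t\notin\mathcal{M}} d_t)^{1/3}$ are at most one once $\sum_{t\notin\mathcal{M}} d_t\le T^{2}$, a condition implied by $d_t\le T$) and to check that the two regime choices agree in order with $\min\{\cdot,\cdot\}$ and $\max\{\cdot,\cdot\}$; this is routine.
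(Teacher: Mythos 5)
Your proposal is correct and follows essentially the same route as the paper: specialize Lemma \ref{WeightedRegretConvex} (oblivious case) to a constant step size and divide by $\eta$ to get \eqref{eq:36}, then balance the terms separately in a delay-free regime and a delay-dominant regime and verify that the combined $\min/\max$ choice \eqref{eq:37} controls each term via its monotonicity in $\eta$ and $\delta$ (the paper phrases this as $\frac{\min\{a,b\}}{\max\{c,d\}}\leq\frac{a}{c}$ and $\frac{\min\{a,b\}}{\max\{c,d\}}\leq\frac{b}{d}$, which is exactly your term-by-term argument). No gaps.
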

\begin{proof}
First note that if $\left|\mathcal{M}\right|=\Theta\left(T\right)$ then $\mathbb{E}\left\{ R\left(T\right)\right\} =O\left(T\right)$ and otherwise $T-\left|\mathcal{M}\right|=\Theta\left(T\right)$. To obtain \eqref{eq:36}, substitute $\eta_{t}=\eta$ in Lemma
\ref{WeightedRegretConvex} for an oblivious adversary and divide
both sides by $\eta$. We have $\Theta\left(\delta T\right)=\Theta\left(\frac{\eta}{\delta^{2}}T\right)=\Theta\left(\frac{1}{\eta}\right)$
for $\eta=\frac{\left|\mathcal{K}\right|}{n}T^{-\frac{3}{4}}$ and
$\delta=T^{-\frac{1}{4}}$, hence this choice of parameters minimizes the order of magnitude of the $T$ dependence of the following expression:
\begin{equation}
\underset{\delta,\eta}{\min}\left(\left(3+\left|\mathcal{K}\right|\right)\delta L\left(T-\left|\mathcal{M}\right|\right)+\frac{1}{2}\eta\frac{n^{2}}{\delta^{2}}\left(T-\left|\mathcal{M}\right|\right)+\frac{\left|\mathcal{K}\right|^{2}}{2\eta}\right)=O\left(nT^{\frac{3}{4}}\right).\label{eq:39}
\end{equation}
Since $\Theta\left(\delta T\right)=\Theta\left(\frac{\eta}{\delta}\sum_{t\notin\mathcal{M}}d_{t}\right)=\Theta\left(\frac{1}{\eta}\right)$
for $\eta=\frac{1}{\sqrt{n}}\left(\frac{1}{T\sum_{t\notin\mathcal{M}}d_{t}}\right)^{\frac{1}{3}}$
and $\delta=T^{-\frac{2}{3}}\left(\sum_{t\notin\mathcal{M}}d_{t}\right)^{\frac{1}{3}}$,
then this choice of parameters minimizes the order of magnitude of the $T$ dependence of the following expression:
\begin{equation}
\underset{\delta,\eta}{\min}\left(\left(3+\left|\mathcal{K}\right|\right)\delta L\left(T-\left|\mathcal{M}\right|\right)+\frac{\left|\mathcal{K}\right|^{2}}{2\eta}+2L\eta\frac{n}{\delta}\sum_{t\notin\mathcal{M}}d_{t}\right)=O\left(\sqrt{n}\left(\sum_{t\notin\mathcal{M}}d_{t}\right)^{\frac{1}{3}}T^{\frac{1}{3}}\right).\label{eq:40}
\end{equation}
Therefore \eqref{eq:36} cannot have a  better $T$ dependence than in \eqref{eq:38}
for any $\eta,\delta$. For the choice in \eqref{eq:37} we have
\begin{equation}
n^{2}\frac{\eta}{\delta^{2}}T=\frac{\left|\mathcal{K}\right|\min\left\{ nT^{\frac{1}{4}},n^{\frac{3}{2}}T^{\frac{2}{3}}\left(\sum_{t\notin\mathcal{M}}d_{t}\right)^{-\frac{1}{3}}\right\} }{\max\left\{ T^{-\frac{1}{2}},T^{-\frac{4}{3}}\left(\sum_{t\notin\mathcal{M}}d_{t}\right)^{\frac{2}{3}}\right\} }\underset{\left(a\right)}{\leq}O\left(nT^{\frac{3}{4}}\right)\label{eq:41}
\end{equation}
where (a) follows since $\frac{\min\left\{ a,b\right\} }{\max\left\{ c,d\right\} }\leq\frac{a}{c}$.
For the choice in \eqref{eq:37} we also have 
\begin{equation}
\frac{\eta}{\delta}n\sum_{t\notin\mathcal{M}}d_{t}=\frac{\left|\mathcal{K}\right|\min\left\{ T^{-\frac{3}{4}}\sum_{t\notin\mathcal{M}}d_{t},\sqrt{n}T^{-\frac{1}{3}}\left(\sum_{t\notin\mathcal{M}}d_{t}\right)^{\frac{2}{3}}\right\} }{\max\left\{ T^{-\frac{1}{4}},T^{-\frac{2}{3}}\left(\sum_{t\notin\mathcal{M}}d_{t}\right)^{\frac{1}{3}}\right\} }\underset{\left(a\right)}{\leq}O\left(\sqrt{n}\left(\sum_{t\notin\mathcal{M}}d_{t}\right)^{\frac{1}{3}}T^{\frac{1}{3}}\right)\label{eq:42}
\end{equation}
where (a) follows since $\frac{\min\left\{ a,b\right\} }{\max\left\{ c,d\right\} }\leq\frac{b}{d}$.
Therefore, for the choice in \eqref{eq:37} 
\begin{multline}
\sum_{t=1}^{T}\left(\mathbb{E}\left\{ l_{t}\left(\boldsymbol{a}_{t}\right)\right\} -l_{t}\left(\boldsymbol{a}^{*}\right)\right)\leq\left|\mathcal{M}\right|+\left(\left(3+\left|\mathcal{K}\right|\right)L+\frac{\left|\mathcal{K}\right|}{2}\right)\max\left\{ nT^{\frac{3}{4}},\sqrt{n}T^{\frac{1}{3}}\left(\sum_{t\notin\mathcal{M}}d_{t}\right)^{\frac{1}{3}}\right\} \\+O\left(nT^{\frac{3}{4}}\right)+O\left(\sqrt{n}\left(\sum_{t\notin\mathcal{M}}d_{t}\right)^{\frac{1}{3}}T^{\frac{1}{3}}\right)=O\left(nT^{\frac{3}{4}}+\sqrt{n}\left(\sum_{t\notin\mathcal{M}}d_{t}\right)^{\frac{1}{3}}T^{\frac{1}{3}}+\left|\mathcal{M}\right|\right).\label{eq:43}
\end{multline}
\end{proof}
The next Corollary shows that the bound of Theorem \ref{SingleAgentConvex}
is slightly tighter than $O\left(nT^{\frac{3}{4}}+\sqrt{n}D^{\frac{1}{3}}T^{\frac{1}{3}}\right)$,
where one sums $D=\sum_{t=1}^{T}\min\left\{ d_{t},T-t+1\right\} $
instead of $\sum_{t\notin\mathcal{M}}d_{t}$ in our regret bound \eqref{eq:38},
depending on the pattern of the missing samples. 
\begin{cor}
\label{tighterFKM}Choose the fixed $\eta$ and $\delta$ according
to \eqref{eq:37}. For every $t$, let $l_{t}:\mathcal{K}\rightarrow\left[0,1\right]$ be a convex cost function that is Lipschitz continuous with parameter
$L$. Let $\boldsymbol{a}^{*}=\underset{\boldsymbol{a}\in\mathcal{K}}{\arg\min}\sum_{t=1}^{T}l_{t}\left(\boldsymbol{a}\right)$.
Let $\left\{ d_{t}\right\} $ be a delay sequence such that the cost from round $t$ is received at round $t+d_{t}$. Let $D=\sum_{t=1}^{T}\min\left\{ d_{t},T-t+1\right\} $.
Then the expected regret of FKM (Algorithm \ref{alg:FKM}) against an oblivious adversary satisfies 
\begin{equation}
\mathbb{E}\left\{ R\left(T\right)\right\} =\sum_{t=1}^{T}\left(\mathbb{E}\left\{ l_{t}\left(\boldsymbol{a}_{t}\right)\right\} -l_{t}\left(\boldsymbol{a}^{*}\right)\right)=O\left(nT^{\frac{3}{4}}+\sqrt{n}D^{\frac{1}{3}}T^{\frac{1}{3}}\right).\label{eq:44}
\end{equation}
\end{cor}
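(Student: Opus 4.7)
The plan is to use Theorem~\ref{SingleAgentConvex} as a black box and just simplify its bound by (i) replacing $\sum_{t\notin\mathcal{M}}d_t$ with $D$ and (ii) absorbing the residual $|\mathcal{M}|$ term into the existing terms. No re-analysis of the FKM iteration is needed.

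First I would split $D$ according to whether a sample is missing. For $t\notin\mathcal{M}$ the feedback arrives before round $T$, so $t+d_t\leq T$ and therefore $d_t\leq T-t<T-t+1$, giving $\min\{d_t,T-t+1\}=d_t$. For $t\in\mathcal{M}$ we have $d_t\geq T-t+1$, so $\min\{d_t,T-t+1\}=T-t+1$. Consequently
\[
D \;=\; \sum_{t\notin\mathcal{M}} d_t \;+\; \sum_{t\in\mathcal{M}}(T-t+1),
\]
and in particular $\sum_{t\notin\mathcal{M}}d_t\leq D$. This takes care of the middle term of \eqref{eq:38}.

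Next I would bound $|\mathcal{M}|$ in terms of $D$. Writing $\mathcal{M}=\{t_{1}<t_{2}<\cdots<t_{M}\}\subseteq\{1,\dots,T\}$, the strict ordering forces $t_i\leq T-(M-i)$, hence $T-t_i+1\geq M-i+1$. Summing yields $\sum_{t\in\mathcal{M}}(T-t+1)\geq M(M+1)/2$, and combined with the identity above this gives $M(M+1)/2\leq D$, so $|\mathcal{M}|\leq \sqrt{2D}$.

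Finally I would verify that $\sqrt{D}$ is dominated by the two remaining terms. If $D\leq T$, then $\sqrt{D}\leq T^{1/2}\leq T^{3/4}$, which is absorbed into $nT^{3/4}$. If $D>T$, I would use the trivial bound $D\leq\sum_{t=1}^{T}(T-t+1)\leq T^{2}$ to get $D^{1/6}\leq T^{1/3}\leq\sqrt{n}\,T^{1/3}$, which after multiplying by $D^{1/3}$ yields $\sqrt{D}\leq\sqrt{n}\,D^{1/3}T^{1/3}$. In either case $|\mathcal{M}|=O(nT^{3/4}+\sqrt{n}\,D^{1/3}T^{1/3})$, and substituting back into \eqref{eq:38} gives \eqref{eq:44}.

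The only mildly non-obvious step is the combinatorial bound $|\mathcal{M}|\leq\sqrt{2D}$; everything else is bookkeeping on the two regimes $D\lessgtr T$. There is no real obstacle once one notices that the missing samples must be clustered at the end of the horizon, which is exactly what forces the $M(M+1)/2$ lower bound.
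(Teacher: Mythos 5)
Your proposal is correct and follows essentially the same route as the paper: both rest on the observation that the $m=\left|\mathcal{M}\right|$ missing samples contribute at least $\frac{m\left(m+1\right)}{2}$ to $D$, giving $\left|\mathcal{M}\right|\leq\sqrt{2D}$, after which one checks that $\sqrt{D}$ and $\left(\sum_{t\notin\mathcal{M}}d_{t}\right)^{\frac{1}{3}}T^{\frac{1}{3}}$ are absorbed into $O\left(nT^{\frac{3}{4}}+\sqrt{n}D^{\frac{1}{3}}T^{\frac{1}{3}}\right)$. The only differences are cosmetic bookkeeping (you split on $D\lessgtr T$ and use $D\leq T^{2}$, while the paper splits on $D\lessgtr T^{\frac{5}{4}}$ and uses concavity of $x^{\frac{1}{3}}$), so no further changes are needed.
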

\begin{proof}
The $m\triangleq\left|\mathcal{M}\right|$ missing samples contribute
at least $\frac{m\left(m+1\right)}{2}\geq\frac{m^{2}}{2}$ to $D=\sum_{t=1}^{T}\min\left\{ d_{t},T-t+1\right\} $.
This follows since the best case is when the feedback of round $T$
is delayed by one and arrives after $T$, the feedback of round $T-1$ now has to be delayed by at least 2 to arrive after $T$ and so on, $m$ times. Therefore 
\begin{equation}
T^{\frac{1}{3}}D^{\frac{1}{3}}\geq\left(T\sum_{t\notin\mathcal{M}}d_{t}+T\frac{m^{2}}{2}\right)^{\frac{1}{3}}\underset{\left(a\right)}{\geq}\frac{1}{2}\left(2T\sum_{t\notin\mathcal{M}}d_{t}\right)^{\frac{1}{3}}+\frac{1}{2}\left(Tm^{2}\right)^{\frac{1}{3}}\geq\frac{T^{\frac{1}{3}}}{2^{\frac{2}{3}}}\left(\sum_{t\notin\mathcal{M}}d_{t}\right)^{\frac{1}{3}}+\frac{m}{2}\label{eq:46}
\end{equation}
where (a) follows from the concavity of $f\left(x\right)=x^{\frac{1}{3}}$.
We conclude that the regret in \eqref{eq:44} is greater than that in \eqref{eq:38}.

\end{proof}

\subsection{Agnostic FKM \label{subsec:DoublingTrickDelays}}

If the horizon $T$ and sum of delays $D$ are unknown, then we can
apply Algorithm \ref{alg:DoublingTrickWrapper} to wrap FKM. The next
Theorem is an immediate application of Theorem \ref{2DDoublingTrickTheorem}
on the FKM regret bound for this agnostic case. The resulting bound
retains the same order of magnitude as the bound of Corollary \ref{tighterFKM}
even though $D$ and $T$ are unknown. The only difference with the
bound of Theorem \ref{SingleAgentConvex} arises because the doubling trick
discards samples that cross super-epochs. Hence, the bound below uses
$D=\sum_{t=1}^{T}\min\left\{ d_{t},T-t+1\right\} $ instead of $\sum_{t\notin\mathcal{M}}d_{t}$
and $\left|\mathcal{M}\right|$.
\begin{thm}
\label{AdaptiveFKMBound}For every $t$, let $l_{t}:\mathcal{K}\rightarrow\left[0,1\right]$
be a convex cost function that is Lipschitz continuous with parameter
$L$. Let $\left\{ d_{t}\right\} $ be a delay sequence such that
the cost from round $t$ is received at round $t+d_{t}$. Let $D=\sum_{t=1}^{T}\min\left\{ d_{t},T-t+1\right\} $.
If the player uses Algorithm \ref{alg:DoublingTrickWrapper} to wrap
FKM (Algorithm \ref{alg:FKM}) such that in the $(h,w)$ epoch  $\eta_{h,w}=\left|\mathcal{K}\right|\min\left\{ \frac{1}{n}2^{-\frac{3h}{4}},\frac{1}{\sqrt{n}}2^{-\frac{h+w}{3}}\right\} $
and $\delta_{h,w}=\delta_{0}\max\left\{ 2^{-\frac{h}{4}},2^{\frac{w-2h}{3}}\right\} $
for $\delta_{0}<1$, then the expected regret of FKM (Algorithm \ref{alg:FKM}) against an oblivious adversary satisfies 
\begin{equation}
\mathbb{E}\left\{ R\left(T\right)\right\} =O\left(nT^{\frac{3}{4}}+\sqrt{n}D^{\frac{1}{3}}T^{\frac{1}{3}}\right).\label{eq:47}
\end{equation}
\end{thm}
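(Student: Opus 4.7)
The plan is to derive Theorem~\ref{AdaptiveFKMBound} as a direct instantiation of the general doubling-trick theorem (Theorem~\ref{2DDoublingTrickTheorem}), with FKM playing the role of the base algorithm. The required single-agent regret bound is already available in Corollary~\ref{tighterFKM}, so the task reduces to (i)~matching the parameter schedule $\eta_{w,h},\delta_{w,h}$ prescribed in the statement with the optimal tuning $\mathcal{P}^{*}(T,D)$ appearing in \eqref{eq:37}, (ii)~checking the monotonicity hypothesis, and (iii)~reading off the resulting regret from Theorem~\ref{2DDoublingTrickTheorem}.

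For hypothesis~1 of Theorem~\ref{2DDoublingTrickTheorem}, I take the parameter sequence $\mathcal{P}^{*}(T,D)=(\eta^{*},\delta^{*})$ from \eqref{eq:37} (with $D$ in place of $\sum_{t\notin\mathcal{M}}d_{t}$, as justified by Corollary~\ref{tighterFKM}). Corollary~\ref{tighterFKM} then gives
$R(T,D,\mathcal{P}^{*}(T,D)) = O\bigl(nT^{3/4} + \sqrt{n}\,T^{1/3}D^{1/3}\bigr)$,
which matches the template $k_{1}D^{d}+k_{2}T^{c}+k_{3}T^{a}D^{b}$ with $k_{1}=0$, $k_{2}=O(n)$, $c=3/4$, $k_{3}=O(\sqrt{n})$ and $a=b=1/3$. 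Substituting $T=2^{h}$ and $D=2^{w}$ into \eqref{eq:37} produces
$\eta^{*}(2^{h},2^{w}) = |\mathcal{K}|\min\{\tfrac{1}{n}2^{-3h/4},\,\tfrac{1}{\sqrt{n}}2^{-(h+w)/3}\}$ and $\delta^{*}(2^{h},2^{w}) = \max\{2^{-h/4},\,2^{(w-2h)/3}\}$, which coincide with the $\eta_{w,h}$ and $\delta_{w,h}$ specified in the theorem (the factor $\delta_{0}<1$ is a harmless constant that can be absorbed into the big-$O$). Hence Algorithm~\ref{alg:DoublingTrickWrapper} is precisely tracking $\mathcal{P}^{*}(2^{h_\nu},2^{w_\nu})$ at the start of each super-epoch $\mathcal{E}_\nu$.

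For hypothesis~2, I need to show that for any fixed $(\eta,\delta)$ the regret bound of Theorem~\ref{SingleAgentConvex} is non-decreasing in $T$ and in $D$. Starting from \eqref{eq:36} and relaxing $|\mathcal{M}|\le T$, $(T-|\mathcal{M}|)\le T$ and $\sum_{t\notin\mathcal{M}}d_{t}\le D$, I obtain the upper bound
\begin{equation*}
R(T,D,\eta,\delta) \;\le\; T + \Bigl((3+|\mathcal{K}|)\delta L+\tfrac{1}{2}\tfrac{\eta n^{2}}{\delta^{2}}\Bigr)T + \tfrac{|\mathcal{K}|^{2}}{2\eta} + 2Ln\tfrac{\eta}{\delta}\,D,
\end{equation*}
which is manifestly non-decreasing in both $T$ and $D$. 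Within each super-epoch $\mathcal{E}_\nu$, the effective horizon is $T_\nu\le 2^{h_\nu}$ and by Lemma~\ref{lem:doublingStuff} the effective delay sum $D_\nu=\sum_{t\in\mathcal{T}_\nu}\min\{d_{t},\tau_\nu-t+1\}$ is bounded by $2^{w_\nu}$ (for type-1 super-epochs this reads $D_\nu\le 2^{W_h}$, for type-2 it reads $D_\nu\le 2^{w-1}$), so I can apply the bound with $(T_\nu,D_\nu)$ replaced by $(2^{h_\nu},2^{w_\nu})$. Samples that cross super-epoch boundaries are discarded, but this is already accounted for because within each $\mathcal{E}_\nu$ Theorem~\ref{SingleAgentConvex} is applied to a \emph{standalone} FKM instance in which those samples are simply missing.

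The main (and only real) subtlety is checking that Corollary~\ref{tighterFKM}'s regret bound, which is expressed in terms of $D=\sum_{t=1}^{T}\min\{d_{t},T-t+1\}$ rather than in terms of $|\mathcal{M}|$ and $\sum_{t\notin\mathcal{M}}d_{t}$, remains valid inside each super-epoch; this is precisely the role of Lemma~\ref{lem:doublingStuff}, whose $\min\{d_{t},\tau_{w}-t+1\}$ form is tailor-made for this substitution. Once these two hypotheses are in hand, Theorem~\ref{2DDoublingTrickTheorem} delivers the total regret bound $R(T)=O(k_{1}D^{d}+k_{2}T^{c}+k_{3}T^{a}D^{b})=O\bigl(nT^{3/4}+\sqrt{n}\,D^{1/3}T^{1/3}\bigr)$, which is \eqref{eq:47}.
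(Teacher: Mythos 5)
Your overall route is the same as the paper's: instantiate Theorem \ref{2DDoublingTrickTheorem} with the FKM bound of Theorem \ref{SingleAgentConvex}/Corollary \ref{tighterFKM}, identify $a=b=\tfrac{1}{3}$, $c=\tfrac{3}{4}$, $d=0$ (equivalently your $k_{1}=0$), and check that $\eta_{w,h},\delta_{w,h}$ are exactly $\mathcal{P}^{*}\left(2^{h},2^{w}\right)$ from \eqref{eq:37}. However, there is a genuine flaw in your verification of hypothesis 2. Theorem \ref{2DDoublingTrickTheorem} requires a \emph{single} function $R\left(T,D,\mathcal{P}\right)$ that simultaneously (i) upper-bounds the regret, (ii) is non-decreasing in $T$ and $D$ for fixed $\mathcal{P}$, and (iii) reduces to $k_{1}D^{d}+k_{2}T^{c}+k_{3}T^{a}D^{b}$ at $\mathcal{P}=\mathcal{P}^{*}\left(T,D\right)$; its proof applies all three properties to the same $R$ inside each super-epoch. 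You check (iii) with Corollary \ref{tighterFKM} but check (ii) on a different, looser function: relaxing $\left|\mathcal{M}\right|\leq T$ introduces an additive $T$ term, and that function evaluated at $\mathcal{P}^{*}\left(T,D\right)$ is $T+O\left(nT^{\frac{3}{4}}+\sqrt{n}T^{\frac{1}{3}}D^{\frac{1}{3}}\right)$, which does not fit the template with $c=\tfrac{3}{4}$ (it would force $c=1$ and yield only a trivial $O\left(T\right)$ conclusion). As written, the two hypotheses are verified on two different bound functions, so the theorem cannot be invoked.

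The fix is the one the paper uses: bound $\left|\mathcal{M}\right|\leq\sqrt{2D}$ (from the counting argument in Corollary \ref{tighterFKM}, since the $m$ missing samples contribute at least $m^{2}/2$ to $D$), together with $T-\left|\mathcal{M}\right|\leq T$ and $\sum_{t\notin\mathcal{M}}d_{t}\leq D$. The resulting single function is still non-decreasing in $T$ and $D$ for fixed $\left(\eta,\delta\right)$, and at $\mathcal{P}^{*}\left(T,D\right)$ the extra $\sqrt{2D}$ term is $O\left(T^{\frac{1}{3}}D^{\frac{1}{3}}\right)$ because $D\leq T^{2}$, so the same function satisfies both hypotheses and the template with $a=b=\tfrac{1}{3}$, $c=\tfrac{3}{4}$. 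Everything else in your write-up --- the parameter matching, the use of Lemma \ref{lem:doublingStuff} to bound the per-super-epoch delay sum, and the observation that samples discarded across super-epoch boundaries are simply ``missing'' for the standalone FKM instance --- is correct and coincides with the paper's argument.
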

\begin{proof}
Consider the regret bound in \eqref{eq:36} after bounding $\left|\mathcal{M}\right|\leq\sqrt{2D}$ (as in Corollary \ref{tighterFKM}), $T-\left|\mathcal{M}\right|\leq T$ and $\sum_{t\notin\mathcal{M}}d_{t}\leq D$. For the choice in \eqref{eq:37}, this bound takes the form $O(nT^{\frac{3}{4}}+\sqrt{n}T^{\frac{1}{3}}D^{\frac{1}{3}}+D^{\frac{1}{2}})$ (where $T^{\frac{1}{3}}D^{\frac{1}{3}}\geq D^{\frac{1}{2}})$ so it matches Assumption 1 in Theorem \ref{2DDoublingTrickTheorem} with $a=b=\frac{1}{3}, c=\frac{3}{4}$ and $d=\frac{1}{2}$. This bound also satisfies Assumption 2 since it is increasing with $T$ and $D$ for any $\eta,\delta$.
\end{proof}

\subsection{No Weighted-Regret Property for FKM}

In this subsection, we provide conditions for FKM to have no weighted-regret with respect to the delay sequence and its step-size sequence as the weight sequence of Section \ref{sec:Games}. As discussed in Section
\ref{sec:Games}, $\sum_{t=1}^{\infty}\eta_{t}=\infty$ is necessary for the no weighted-regret property to be non-trivial. All other conditions of Lemma \ref{lem:NoErgodicFKM} are as tight as the bound of Lemma \ref{WeightedRegretConvex}. 

\begin{lem}
\label{lem:NoErgodicFKM}FKM with a non-increasing and positive step-size sequence $\left\{ \eta_{t}\right\} $
and sampling radius $\delta_{T}$ has no weighted-regret with respect to the sequence of delays
$\left\{ d_{t}\right\}$ and $\{\eta_t\}$ as the weight sequence, if the following three conditions hold:
\begin{enumerate}
\item $\sum_{t=1}^{\infty}\eta_{t}=\infty$.
\item $\underset{t\rightarrow\infty}{\lim}\eta_{t}d_{t}<\infty$ and $\sum_{t=1}^{\infty}\eta_{t}^{2}d_{t}<\infty$.
\item $\underset{T\rightarrow\infty}{\lim}\delta_{T}=0$ and $\underset{T\rightarrow\infty}{\lim}\delta_{T}^{2}\sum_{t=1}^{T}\eta_{t}=\infty$.
\end{enumerate}
\end{lem}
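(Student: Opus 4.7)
The plan is to derive the lemma from the adaptive-adversary bound of Lemma \ref{WeightedRegretConvex} (equation \eqref{eq:35}), which is the stronger of the two bounds there. That bound is stated in terms of the weighted minimizer $\arg\min_{\boldsymbol{a}}\sum_{t}\eta_{t}l_{t}(\boldsymbol{a})$, but replacing it by the unweighted minimizer $\boldsymbol{a}^{\ast}$ of the no discounted-regret definition only increases the left-hand side, so \eqref{eq:35} remains a valid upper bound on $\sum_{t=1}^{T}\eta_{t}\mathbb{E}\{l_{t}(\boldsymbol{a}_{t})-l_{t}(\boldsymbol{a}^{\ast})\}$. I would then divide both sides by $\sum_{t=1}^{T}\eta_{t}$ and verify term by term that the resulting five quantities tend to $0$.

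Before attacking the individual terms, I would record two consequences of the hypotheses: because $d_{t}\geq 1$, condition 2 yields $\sum_{t=1}^{\infty}\eta_{t}^{2}\leq\sum_{t=1}^{\infty}d_{t}\eta_{t}^{2}<\infty$, so both sums are constants uniform in $T$; and condition 3 combined with $\delta_{T}\to 0$ also gives $\delta_{T}\sum_{t=1}^{T}\eta_{t}=\delta_{T}^{-1}\bigl(\delta_{T}^{2}\sum_{t=1}^{T}\eta_{t}\bigr)\to\infty$. With these in hand, four of the five right-hand terms are routine: the constant $|\mathcal{K}|^{2}/2$ vanishes by condition 1; the bias contribution $(3+|\mathcal{K}|)L\delta_{T}\sum_{t\notin\mathcal{M}}\eta_{t}/\sum_{t}\eta_{t}\leq(3+|\mathcal{K}|)L\delta_{T}$ vanishes by condition 3; the martingale term divided by $\sum\eta_{t}$ is $O(1/(\delta_{T}\sum_{t}\eta_{t}))\to 0$; and the gradient-variance term divided by $\sum\eta_{t}$ is $O(1/(\delta_{T}^{2}\sum_{t}\eta_{t}))\to 0$ using the finite bound on $\sum d_{t}\eta_{t}^{2}$.

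The one delicate term, and the main obstacle I expect, is the missing-sample contribution $\sum_{t\in\mathcal{M}}\eta_{t}/\sum_{t=1}^{T}\eta_{t}$. Naive bounds such as $\eta_{t}\leq C/(T-t+1)$ summed over $\mathcal{M}$ yield only $\sum_{t\in\mathcal{M}}\eta_{t}=O(\log T)$, which is too weak, since the lemma's hypotheses allow step sizes (e.g.\ $\eta_{t}=1/(t\log t\log\log t\log\log\log t)$) whose partial sums grow as slowly as $\log\log\log\log T$. The plan is therefore to prove the stronger statement that $\sum_{t\in\mathcal{M}}\eta_{t}$ stays \textbf{bounded} as $T\to\infty$, leveraging the monotonicity of $\{\eta_{t}\}$ assumed in Lemma \ref{WeightedRegretConvex} together with the inequality $d_{t}\geq T-t+1$ that holds for every $t\in\mathcal{M}$. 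Concretely, pick $T^{\ast}$ and a constant $C$ with $\eta_{t}d_{t}\leq C$ for all $t>T^{\ast}$ (available by condition 2). When $\mathcal{M}\cap(T^{\ast},T]$ is nonempty, set $t_{0}=\min(\mathcal{M}\cap(T^{\ast},T])$; then $\eta_{t_{0}}\leq C/d_{t_{0}}\leq C/(T-t_{0}+1)$, and since $|\mathcal{M}\cap(T^{\ast},T]|\leq T-t_{0}+1$, monotonicity of $\{\eta_{t}\}$ yields
\[
\sum_{t\in\mathcal{M}\cap(T^{\ast},T]}\eta_{t}\leq(T-t_{0}+1)\eta_{t_{0}}\leq C.
\]
The initial segment $\sum_{t\in\mathcal{M}\cap[1,T^{\ast}]}\eta_{t}\leq\sum_{t=1}^{T^{\ast}}\eta_{t}$ is a $T$-independent constant, so $\sum_{t\in\mathcal{M}}\eta_{t}=O(1)$, and the ratio vanishes by condition 1. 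Combining all five contributions yields \eqref{eq:2}, proving no discounted-regret even against an adaptive adversary.
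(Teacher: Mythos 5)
Your proposal is correct and follows essentially the same route as the paper: apply the adaptive-adversary bound of Lemma \ref{WeightedRegretConvex}, control $\sum_{t\in\mathcal{M}}\eta_{t}$ via $d_{t}\geq T-t+1$ for $t\in\mathcal{M}$ together with monotonicity of $\left\{ \eta_{t}\right\} $ and $\lim_{t\rightarrow\infty}\eta_{t}d_{t}<\infty$ (the paper's bound \eqref{eq:49} is exactly your $(T-t_{0}+1)\eta_{t_{0}}\leq d_{t_{0}}\eta_{t_{0}}$ estimate, with your explicit $T^{*}$-split making the ``eventually bounded'' step cleaner), and then divide by $\sum_{t=1}^{T}\eta_{t}$ and check each term against the three conditions. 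One wording slip: passing from the weighted to the unweighted minimizer \emph{decreases} (not increases) the left-hand side, which is precisely why \eqref{eq:35} transfers; the conclusion you draw from it is the right one.
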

\begin{proof}
Let $\mathcal{M}=\left\{ t\,|\,t+d_{t}>T,\,t\in\left[1,T\right]\right\} \text{.}$
Define $t^{*}\left(T\right)=\underset{t\in\mathcal{M}}{\min}t$, and
note that $t^{*}\left(T\right)\rightarrow\infty$ as $T\rightarrow\infty$
since $t+d_{t}\geq t$, and $f\left(t\right)=t$ is increasing. Since
$\eta_{t}$ is non-increasing then 
\begin{equation}
\sum_{t\in\mathcal{M}}\eta_{t}\leq\left|\mathcal{M}\right|\eta_{t^{*}\left(T\right)}\leq\left(T-t^{*}\left(T\right)+1\right)\eta_{t^{*}\left(T\right)}\leq d_{t^{*}\left(T\right)}\eta_{t^{*}\left(T\right)}.\label{eq:49}
\end{equation}
Let $A=\left(3+\left|\mathcal{K}\right|\right)L$. Therefore
\begin{multline}
\underset{T\rightarrow\infty}{\lim}\mathbb{E}\left\{ \frac{\sum_{t=1}^{T}\eta_{t}\left(l_{t}\left(\boldsymbol{a}_{t}\right)-l_{t}\left(\boldsymbol{a}^{*}\right)\right)}{\sum_{t=1}^{T}\eta_{t}}\right\} \\\underset{\left(a\right)}{\leq}\underset{T\rightarrow\infty}{\lim}\frac{d_{t^{*}\left(T\right)}\eta_{t^{*}\left(T\right)}+\frac{\left|\mathcal{K}\right|^{2}}{2}+A\delta_{T}\sum_{t=1}^{T}\eta_{t}+\frac{2n\left|\mathcal{K}\right|}{\delta_{T}}\sqrt{\sum_{t=1}^{T}\eta_{t}^{2}}+\frac{1}{2}\frac{n^{2}}{\delta_{T}^{2}}\sum_{t=1}^{T}\eta_{t}^{2}\left(1+4d_{t}\right)}{\sum_{t=1}^{T}\eta_{t}}\underset{\left(b\right)}{=}0\label{eq:50}
\end{multline}
where (a) is Lemma \ref{WeightedRegretConvex} and \eqref{eq:49}, and (b) follows since
$\underset{t\rightarrow\infty}{\lim}d_{t}\eta_{t}<\infty$, $\sum_{t=1}^{\infty}\eta_{t}^{2}d_{t}<\infty$,
$\sum_{t=1}^{\infty}\eta_{t}=\infty$, $\underset{T\rightarrow\infty}{\lim}\delta_{T}=0$
and $\underset{T\rightarrow\infty}{\lim}\delta_{T}^{2}\sum_{t=1}^{T}\eta_{t}=\infty$. 
\end{proof}
Note that one can choose $\eta_{t}=\frac{1}{t\log t\log\log t\log\log\log t}$ and $\delta_{T}=O\left(\left(\log\log\log\left(\log T\right)\right)^{-\frac{1}{3}}\right)$
to guarantee no weighted-regret for all sequences such that $d_{t}=O\left(t\log t\log\log t\right)$.
This boundary can only be slightly improved by adding $\log\left(\log\left(\ldots\log\left(T\right)\right)\right)$
iteratively in this manner as long as $\sum_{t=1}^{T}\frac{1}{d_{t}}=\infty$. Outside this boundary, we cannot guarantee that FKM has no weighted-regret even if $d_t$ is known. Hence, no knowledge of the individual terms of the sequence $d_{t}$ is required to tune $\eta_{t}$ and $\delta_{T}$ such that FKM has no weighted-regret for all sequences inside this boundary, which can be arbitrarily extended to include all the sequences for which Lemma \ref{lem:NoErgodicFKM} holds. However, if a tighter bound on
the rate of growth of $d_{t}$ is available then one can improve the
convergence rate to the set of CCE by picking a more slowly decaying
$\eta_{t}$ than $\eta_{t}=\frac{1}{t\log t\log\log t\log\log\log t}$. This still would not require knowledge of the individual terms in $d_t$.
 In general, for a given $T$, FKM gives an $\varepsilon$-CCE with
$\varepsilon=O\left(\max\left\{ \frac{1}{\delta_{T}^{2}\sum_{t=1}^{T}\eta_{t}},\delta_{T}\right\} \right)$,
as given in \eqref{eq:50}. 

The following Proposition shows that FKM can have no weighted-regret even when it has linear regret in $T$. As a result, FKM can be used to approximate a CCE in a non-cooperative game with convex cost functions or a NE in a two-player convex-concave zero-sum game despite not having no-regret guarantees. 
\begin{prop}
\label{StillConvergesFKM}There exist a delay sequence $\left\{ d_{t}\right\}$  and Lipschitz continuous convex functions $\left\{ l_{1},...,l_{T}\right\}$  on $\left[0,1\right]$ such that for a large enough $T$
\begin{equation}
    \mathbb{E}\left\{ R\left(T\right)\right\} =\sum_{t=1}^{T}\mathbb{E}\left\{ l_{t}\left(\boldsymbol{a}_{t}\right)-l_{t}\left(\boldsymbol{a}^{*}\right)\right\} \geq\frac{T}{4\left(\left|\mathcal{K}\right|+1\right)^{2}}
\end{equation}
but still the step-sizes $\left\{ \eta_{t}\right\}$ and sampling radius $\delta_{T}$ for Algorithm \ref{alg:FKM} (FKM) can be chosen such that it has no weighted-regret with respect to $\left\{ d_{t}\right\}$ and $\{\eta_{t}\}$ as the weight sequence.
\end{prop}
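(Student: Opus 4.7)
My plan is to exhibit one concrete construction that simultaneously forces linear regret on FKM and satisfies the hypotheses of Lemma \ref{lem:NoErgodicFKM}. Fix the action set $\mathcal{K}=[-1,1]\subset\mathbb{R}$ (so $|\mathcal{K}|=2$) and the delay sequence $d_{t}=t$, which is one of the entries in Table \ref{tab:Zero-Sum}. Given a horizon $T$, I define the cost functions $l_{t}\equiv 1$ for $t\le T/2$ and $l_{t}(a)=(1+a)/2$ for $t>T/2$; each is convex, $\tfrac12$-Lipschitz on $\mathcal{K}$, and takes values in $[0,1]$. Because $t+d_{t}=2t$, all feedback from rounds $t>T/2$ arrives after round $T$ and is therefore lost, while the feedback from the first half is uninformative since $l_{t}(a_{t})=1$ carries no information about $a_{t}$.

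\textbf{Linear regret.} The first step is to show that the iterate $x_{t}$ has a distribution symmetric about $0$ for every $t\in[1,T]$. The only gradient estimates ever used by the algorithm have the form $g_{s}=(n/\delta)\,u_{s}$ with $s\le T/2$, because $l_{s}(a_{s})=1$. Since $u_{s}$ is drawn uniformly on the unit sphere independently of the iterate $x_{s_{-}}$ it is applied to, the random vector $g_{s}$ is a symmetric zero-mean perturbation independent of $x_{s_{-}}$. As $\mathcal{K}_{\delta}$ is symmetric about the origin, the projection $\Pi_{\mathcal{K}_{\delta}}$ is odd; a short induction starting from $x_{1}=0$ then propagates symmetry through every update, giving $\mathbb{E}\{x_{t}\}=0$. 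Using that $u_{t}$ is independent of $x_{t}$ with zero mean, I obtain $\mathbb{E}\{a_{t}\}=0$, and hence $\mathbb{E}\{l_{t}(a_{t})\}=1/2$ for every $t>T/2$. The expected player cost is therefore
\begin{equation}
\sum_{t=1}^{T}\mathbb{E}\{l_{t}(a_{t})\}=\tfrac{T}{2}\cdot 1+\tfrac{T}{2}\cdot\tfrac{1}{2}=\tfrac{3T}{4},
\end{equation}
while the benchmark $\sum_{t}l_{t}(a)=\tfrac{T}{2}+\tfrac{T(1+a)}{4}$ is minimized at $a^{*}=-1$ with value $T/2$. This yields $\mathbb{E}\{R(T)\}\ge T/4\ge T/18=T/(2(|\mathcal{K}|+1)^{2})$ for all $T\ge 1$.

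\textbf{No discounted-regret and main obstacle.} To certify no discounted-regret against the same $d_{t}=t$, I select $\eta_{t}=1/(t\log(t+1))$ and $\delta_{T}=(\log\log T)^{-1/3}$, in agreement with Table \ref{tab:Zero-Sum}. I then verify the three conditions of Lemma \ref{lem:NoErgodicFKM}: (i) $\sum_{t}\eta_{t}=\infty$ since the partial sums behave like $\log\log T$; (ii) $\eta_{t}d_{t}=1/\log(t+1)\to 0$ and $\sum_{t}\eta_{t}^{2}d_{t}=\sum_{t}1/(t\log^{2}(t+1))<\infty$; (iii) $\delta_{T}\to 0$ and $\delta_{T}^{2}\sum_{t=1}^{T}\eta_{t}\sim(\log\log T)^{1/3}\to\infty$. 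Hence Lemma \ref{lem:NoErgodicFKM} delivers the no discounted-regret conclusion for the chosen delay sequence. The only delicate ingredient in the argument is the symmetry of the law of $x_{t}$: once one recognises that the useful gradient estimates are pure symmetric noise, the symmetry of $\mathcal{K}_{\delta}$ and of the projection makes the induction essentially automatic. Everything else reduces to substituting the tunings into the estimates of Lemma \ref{lem:NoErgodicFKM} and into the direct regret computation above.
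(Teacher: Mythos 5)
Your proof is correct and follows the same skeleton as the paper's: the delay sequence $d_{t}=t$, the tuning $\eta_{t}=\frac{1}{t\log\left(t+1\right)}$ and $\delta_{T}=\Theta\left(\left(\log\log T\right)^{-1/3}\right)$, an appeal to Lemma \ref{lem:NoErgodicFKM} for the no-discounted-regret half, and a cost sequence that is uninformative on the first $T/2$ rounds (whose feedback is all that is ever received) and penalizing on the last $T/2$ rounds (whose feedback never arrives). The difference is in the lower-bound example. The paper sets $l_{t}\equiv 0$ on the first half, so every gradient estimate is exactly zero and the unperturbed iterate is deterministically $\boldsymbol{x}_{t}=\boldsymbol{0}$ for all $t$; the second-half cost $\frac{\left\Vert \boldsymbol{a}-\frac{1}{\sqrt{n}}\boldsymbol{1}\right\Vert ^{2}}{\left(\left|\mathcal{K}\right|+1\right)^{2}}$ then gives the bound directly via Lipschitz continuity, and the construction works for any admissible $\mathcal{K}\subset\mathbb{R}^{n}$ containing the unit ball. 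You instead set $l_{t}\equiv 1$, so the iterate performs a nontrivial random walk driven by $g_{s}=\frac{1}{\delta}u_{s}$, and you must argue that its law stays symmetric about the origin (using independence of $u_{s}$ from the iterate it updates, oddness of the projection onto the symmetric $\mathcal{K}_{\delta}$, and induction from $x_{1}=0$) in order to conclude $\mathbb{E}\left\{ a_{t}\right\} =0$ and then exploit the affine second-half cost. That argument is sound, but it is extra work that the zero-cost choice avoids, and it genuinely relies on the first-half costs being constant: if they depended on $a_{t}$, the perturbation $g_{s}$ would no longer be a symmetric noise term independent of the iterate. One further remark: you fix $\mathcal{K}=\left[-1,1\right]$ and $n=1$, which satisfies the literal existence statement (and your $T/4$ indeed exceeds $T/18$), but the constant $\left(\left|\mathcal{K}\right|+1\right)^{-2}$ in the proposition signals that the intended construction should work for an arbitrary given $\mathcal{K}$, which the paper's version does and yours does not without modification.
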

\begin{proof}
Let $d_{t}=t$ and choose $\delta_{T}=0.1\left(\log\log (T+1)\right)^{-\frac{1}{3}}, \eta_{t}=\frac{1}{t\log\left(t+1\right)}$ for all $t\geq 1$, for which $\sum_{t=1}^{T}\eta_{t}=O\left(\log\log T\right) , \sum_{t=1}^{\infty}d_{t}\eta_{t}^{2}<\infty$ and also $\delta_{T}\rightarrow0$ and $\delta_{T}^{2}\sum_{t=1}^{T}\eta_{t}\rightarrow\infty$ as $T\rightarrow\infty$. Hence, by Lemma \ref{lem:NoErgodicFKM} we obtain that FKM has no weighted-regret with respect to $d_{t}$ and $\eta_{t}$. However, the feedback for the last $\frac{T}{2}$ rounds is never received. Therefore, the unperturbed action $\boldsymbol{x}_{t}$ is $\boldsymbol{x}_{\frac{T}{2}}$ for all $t\geq\frac{T}{2}$. Consider the sequence of costs $l_{t}\left(\boldsymbol{a}\right)=\boldsymbol{0}$ for all $t\leq\frac{T}{2}$ and $l_{t}\left(\boldsymbol{a}\right)=\frac{\left\Vert \boldsymbol{a}-\frac{1}{\sqrt{n}}\boldsymbol{1}\right\Vert ^{2}}{\left(\left|\mathcal{K}\right|+1\right)^{2}}$ for all $t>\frac{T}{2}$ where $\boldsymbol{1}\in\mathbb{R}^{n}$ is a vector of ones. Starting from $\boldsymbol{x}_{1}=\boldsymbol{0}$ and computing $\boldsymbol{g}_{t}=\boldsymbol{0}$ for all $t\leq\frac{T}{2}$ we have $\boldsymbol{x}_{\frac{T}{2}}=\boldsymbol{0}$. Then, from the Lipschitz continuity of $l_{t}$ we obtain for all $t>\frac{T}{2}$, for large enough $T$,
\begin{equation}
    \mathbb{E}\left\{ l_{t}\left(\boldsymbol{a}_{t}\right)\right\} =\mathbb{E}\left\{ l_{t}\left(\boldsymbol{x}_{\frac{T}{2}}+\delta_{T}\boldsymbol{u}_{t}\right)\right\} \geq\mathbb{E}\left\{ l_{t}\left(\boldsymbol{0}\right)\right\} -\delta_{T}L=\frac{1}{\left(\left|\mathcal{K}\right|+1\right)^{2}}-\delta_{T}L\geq\frac{1}{2\left(\left|\mathcal{K}\right|+1\right)^{2}}
\end{equation}
which means that this sequence yields an expected regret of at least $\frac{T}{4\left(\left|\mathcal{K}\right|+1\right)^{2}}$. 
\end{proof}

\section{The EXP3 Algorithm for Adversarial Multi-Armed Bandits with Delayed
Feedback \label{sec:EXP3-in-Adversarial}}

Consider a player that at each round $t$ picks one out of $K$
arms. Let $a_{t}$ be the arm the player chooses at round $t$. The cost at round $t$ of playing arm $i$ is $l_{t}^{\left(i\right)}\in\left[0,1\right]$,
and let \textbf{$\boldsymbol{l}_{t}=\left(l_{t}^{\left(1\right)},...,l_{t}^{\left(K\right)}\right)$}
be the cost vector. At round $t$, the EXP3 algorithm, detailed in Algorithm
\ref{alg:EXP3}, chooses an arm at random using a distribution that depends on the history of the game. The variant when $\gamma_{t}\neq0$, as we use against an adaptive adversary, is known as EXP3-IX (see \cite{neu2015explore}). We denote the vector of probabilities
of the player for choosing arms at round $t$ by $\boldsymbol{p}_{t}\in\Delta^{K}$,
where $\Delta^{K}$ denotes the $K$-simplex. This is also known as
the mixed action of the player. We also define the following filtration
\begin{equation}
\mathcal{F}_{t}=\sigma\left(\left\{ {a}_{s}\,|\,s+d_{s}\leq t\right\} \cup\left\{ \boldsymbol{l}_{s}\,|\,s\leq t\right\} \right)\label{eq:53}
\end{equation}
which is generated from all the actions for which the feedback was received up to round $t$ and all cost functions up to round $t$. Note that the mixed action
$\boldsymbol{p}_{t}$ is a $\mathcal{F}_{t}$-measurable random variable
since $\mathcal{F}_{t}$ includes everything that could affect the algorithm up to round $t$. With
a slight abuse of notation, we use $\mathcal{F}_{s_{-}}$ to denote the filtration induced from all actions for which the feedback has been received and used up to step $s_{-}$ and all the cost functions up to round $t$.

The next Lemma is the main result of this section, used to prove both
Theorem \ref{SingleAgent} and Lemma \ref{lem:NoErgodicEXP3}. 
\begin{lem}[Weighted-Regret Bound for EXP3]
\label{WeightedRegret}Let $\left\{ \eta_{t}\right\} $ be a non-increasing
step-size sequence such that $\eta_{t}\leq\frac{1}{2}e^{-2}$
for all $t$. Let $\left\{ l_{t}^{\left(i\right)}\right\} $
be a cost sequence such that $l_{t}^{\left(i\right)}\in\left[0,1\right]$
for every $t,i$. Let $\left\{ d_{t}\right\} $ be a delay sequence
such that the cost from round $t$ is received at round $t+d_{t}$.
Define the set $\mathcal{\mathcal{M^{*}}}$ of all samples that are
not received before round $T$ or that were delayed by $d_{t}\geq\frac{1}{e^{2}\eta_{t}}-1$.
Then using EXP3 (Algorithm \ref{alg:EXP3}) guarantees:
\begin{enumerate}
\item With an oblivious adversary and $\gamma_{t}=0$ for all $t$:
\begin{equation}
\mathbb{E}\left\{ \sum_{t=1}^{T}\eta_{t}l_{t}^{\left(a_{t}\right)}-\underset{i}{\min}\sum_{t=1}^{T}\eta_{t}l_{t}^{\left(i\right)}\right\} \leq\log  K+\frac{e^{2}}{2}K\sum_{t=1}^{T}\eta_{t}^{2}+4\sum_{t\notin\mathcal{M^{*}}}\eta_{t}^{2}d_{t}+\sum_{t\in\mathcal{\mathcal{M^{*}}}}\eta_{t}.\label{eq:56}
\end{equation}
\item With an adaptive adversary and $\gamma_{t}=\eta_{t}$ for all $t$:
\begin{equation}
\mathbb{E}\left\{ \sum_{t=1}^{T}\eta_{t}l_{t}^{\left(a_{t}\right)}-\underset{i}{\min}\sum_{t=1}^{T}\eta_{t}l_{t}^{\left(i\right)}\right\} \leq2+2\log  K+(1+\frac{e^{2}}{2})K\sum_{t=1}^{T}\eta_{t}^{2}+4e^{2}K\sum_{t\notin\mathcal{M^{*}}}\eta_{t}^{2}d_{t}+\sum_{t\in\mathcal{\mathcal{M^{*}}}}\eta_{t}.\label{eq:57}
\end{equation}
\end{enumerate}
\end{lem}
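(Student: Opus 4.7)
The plan is to adapt the classical potential-function analysis of EXP3 to time-varying step sizes and delayed bandit feedback. Let $w_t^{(i)}$ denote the weights EXP3 actually maintains at the start of round $t$, and let $\tilde{w}_t^{(i)} = \exp(-\sum_{s<t}\eta_s \hat{l}_s^{(i)})$ denote the ``fresh'' weights that would have been obtained with no delay, inducing the distribution $\tilde{p}_t$. I would decompose the expected discounted regret as
\begin{equation*}
\mathbb{E}\Bigl[\sum_{t}\eta_t \bigl(l_t^{(a_t)} - l_t^{(i^*)}\bigr)\Bigr] = \mathbb{E}\Bigl[\sum_{t}\eta_t \langle \tilde{p}_t - \boldsymbol{e}_{i^*}, l_t\rangle\Bigr] + \mathbb{E}\Bigl[\sum_{t}\eta_t \langle p_t - \tilde{p}_t, l_t\rangle\Bigr],
\end{equation*}
which separates a standard EXP3 term on the undelayed trajectory from a drift term that quantifies the effect of the delays.

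For the EXP3 term, I would use the fact that $\tilde{p}_t$ is $\mathcal{F}_{t_{-}}$-measurable to replace $l_t$ by $\hat{l}_t$ via the unbiasedness (oblivious, $\gamma_t=0$) or near-unbiasedness with bias at most $\gamma_t$ per arm (adaptive, IX with $\gamma_t=\eta_t$) of the importance-weighted estimator, then apply the log-potential inequality $\ln(\tilde{\Phi}_{t+1}/\tilde{\Phi}_t) \leq -\eta_t \langle \tilde{p}_t, \hat{l}_t\rangle + \tfrac{\eta_t^2}{2}\langle \tilde{p}_t, (\hat{l}_t)^2\rangle$, telescope, and compare to $-\eta_t \hat{l}_t^{(i^*)}$. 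Taking expectations, the identity $\mathbb{E}\langle \tilde{p}_t, (\hat{l}_t)^2\rangle = O(K)$ produces the $\ln K + O(K\sum_t \eta_t^2)$ contribution; in the adaptive case the IX bias together with a secondary argument that controls $\mathbb{E}[\hat{l}_t^{(i^*)}]$ relative to $l_t^{(i^*)}$ adds the $+2 + \ln K$ and the extra $K\sum_t\eta_t^2$.

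For the drift term, $w_t$ and $\tilde{w}_t$ differ multiplicatively by factors $\exp(\eta_s \hat{l}_s^{(i)})$ indexed by the outstanding samples $O_t = \{s<t : s \notin \bigcup_{r \leq t}\mathcal{S}_r\}$. When $t \notin \mathcal{M}^*$, the threshold $d_t < 1/(e^2\eta_t) - 1$ is exactly what the Multiplicative Lemma (Lemma~\ref{Multiplicative Lemma}) needs in order to guarantee that this product stays close to $1$, so $|\langle p_t - \tilde{p}_t, l_t\rangle|$ is controlled by a constant multiple of $\eta_t |O_t|$ in expectation. The Counting Delays Lemma (Lemma~\ref{Counting Delays Lemma}) then converts $\sum_t \eta_t^2 |O_t|$ into $\sum_{s\notin\mathcal{M}^*}\eta_s^2 d_s$ via a Fubini-type exchange that uses the non-increasing nature of $\eta_t$. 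For the rounds $t \in \mathcal{M}^*$ the multiplicative stability fails, so I would fall back on the trivial bound $|l_t^{(a_t)}-l_t^{(i^*)}|\leq 1$, producing the residual $\sum_{t\in\mathcal{M}^*}\eta_t$.

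The main obstacle is the adaptive case: the IX bias and the multiplicative drift need to be controlled simultaneously under time-varying $\eta_t$, and because the trajectory of the $w_t$ depends on the order in which the samples in each $\mathcal{S}_t$ are processed, every conditional expectation must be aligned with the step-wise filtration $\mathcal{F}_{s_{-}}$. Choosing $\gamma_t = \eta_t$ is precisely what bounds the IX estimator by $1/\eta_t$, which in turn keeps each exponent $\eta_s \hat{l}_s^{(i)}$ within the interval where the Multiplicative Lemma applies. Balancing the IX bias against the variance reduction it provides, together with the extra $|O_t|$ factor in the drift bound, is what yields the larger constants (the $30K$ versus $4$ in front of $\sum_{t\notin\mathcal{M}^*}\eta_t^2 d_t$) in the adaptive bound.
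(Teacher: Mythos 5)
Your plan takes a genuinely different route from the paper. The paper runs the log-potential argument directly on the weights the algorithm actually maintains, indexed by the order in which feedback arrives (the moments $s_{-},s_{+}$), and the delay cost surfaces \emph{inside} the first-order term as $\sum_{i}\bigl(p_{s_{-}}^{(i)}-p_{s}^{(i)}\bigr)\tilde{l}_{s}^{(i)}$, i.e.\ as the gap between the update-time and play-time distributions for the same sample $s$. You instead introduce a ghost ``undelayed'' trajectory $\tilde{p}_{t}$ and put the delay cost in an outer drift term $\langle p_{t}-\tilde{p}_{t},l_{t}\rangle$ at each round $t$. Both decompositions funnel into the same two ingredients (the multiplicative stability of Lemma \ref{Multiplicative Lemma} and the discounted pair-counting of Lemma \ref{Counting Delays Lemma}), and your route has the advantage that the potential telescopes in natural time order; the paper's route avoids ever having to compare against a trajectory that uses information not yet received.

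There is, however, a concrete gap in how you define the ghost. You set $\tilde{w}_{t}^{(i)}=\exp\bigl(-\sum_{s<t}\eta_{s}\hat{l}_{s}^{(i)}\bigr)$ over \emph{all} $s<t$, but the algorithm permanently discards samples with $d_{s}\geq\frac{1}{e^{2}\eta_{s}}-1$ and never incorporates samples in $\mathcal{M}$. Your fallback ``trivial bound for $t\in\mathcal{M}^{*}$'' only removes the regret \emph{at} those rounds; it does not remove their estimators from $\tilde{p}_{t}$, so every later round's drift $\|p_{t}-\tilde{p}_{t}\|_{1}$ is contaminated by factors $\exp(-\eta_{s}\hat{l}_{s}^{(i)})$ for $s\in\mathcal{M}^{*}$ that are outstanding forever and for which the threshold $d_{s}<\frac{1}{e^{2}\eta_{s}}-1$ (hence the stability needed to bound $p_{t}^{(a_{s})}/p_{s}^{(a_{s})}$ and the Fubini count $\sum_{s}\eta_{s}^{2}d_{s}$) fails. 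The fix is to define the ghost to include only $s<t$ with $s\notin\mathcal{M}^{*}$, which is exactly the restriction to $s\in\mathcal{S}_{t}$ that the paper carries throughout. Two smaller points: $\tilde{p}_{t}$ is \emph{not} measurable with respect to the received-information filtration $\mathcal{F}_{t_{-}}$ (it depends on feedback not yet delivered); the unbiasedness step must instead condition on the full history $\sigma(\{\boldsymbol{a}_{s},\boldsymbol{l}_{s}\,|\,s<t\})$, as in Lemma \ref{lem:HighProbability}. And in the adaptive case the comparator $i^{*}$ is random, so ``controlling $\mathbb{E}[\hat{l}_{t}^{(i^{*})}]$'' cannot be done termwise in expectation; you need the uniform-over-arms concentration bound of Lemma \ref{lem:HighProbability} with a union bound and tail integration, which is where the extra $\ln K+2$ comes from.
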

\begin{proof}
See Appendix.
\end{proof}

\begin{algorithm}[t]
\caption{\label{alg:EXP3}EXP3 with delays}

\textbf{Initialization: }Let $\left\{ \eta_{t}\right\} $ and $\left\{ \gamma_{t}\right\} $
be non-negative non-increasing sequences such that $\eta_{1}\leq\frac{e^{-2}}{2}$
and set $\tilde{L}_{1}^{\left(i\right)}=0$ and $p_{1}^{\left(i\right)}=\frac{1}{K}$
for $i=1,...,K$. 

\textbf{For $t=1,...,T$ do}
\begin{enumerate}
\item Choose an arm $a_{t}$ at random according to the distribution $\boldsymbol{p}_{t}$.
\item Collect in $\mathcal{S}_{t}$ all the rounds $s$ for which $l_{s}^{\left(a_{s}\right)}$ arrived at round $t$ after a delay of $d_{s}\leq\frac{1}{e^{2}\eta_{s}}-1$.
\item Set $\tilde{L}_{t}^{\left(i\right)}=\tilde{L}_{t-1}^{\left(i\right)}$ for all $i$. Update the weights of arm $a_{s}$ for all $s\in\mathcal{S}_{t}$ using
\begin{equation}
\tilde{L}_{t}^{\left(a_{s}\right)}=\tilde{L}_{t}^{\left(a_{s}\right)}+\eta_{s}\frac{l_{s}^{\left(a_{s}\right)}}{p_{s}^{\left(a_{s}\right)}+\gamma_{s}}.\label{eq:54}
\end{equation}
\item Update the mixed action for $i=1,\ldots,K$ using
\begin{equation}
p_{t+1}^{\left(i\right)}=\frac{e^{-\tilde{L}_{t}^{\left(i\right)}}}{\sum_{j=1}^{K}e^{-\tilde{L}_{t}^{\left(j\right)}}}.\label{eq:55}
\end{equation}
\end{enumerate}
\textbf{End}\\
\end{algorithm}

The following theorem establishes the expected regret bound for EXP3
with delays. It is proved by optimizing over a constant step-size
$\eta$ in Lemma \ref{WeightedRegret}.
\begin{thm}
\label{SingleAgent}Let $\left\{ l_{t}^{\left(i\right)}\right\} $ be a cost sequence
such that $l_{t}^{\left(i\right)}\in\left[0,1\right]$ for every $t,i$. Let $\left\{ d_{t}\right\} $ be a delay sequence such that the cost
from round $t$ is received at round $t+d_{t}$. Define the set $\mathcal{M}=\left\{ t\,|\,t+d_{t}>T,\,t\in\left[1,T\right]\right\}$ of all samples that are not received before round $T$. Let us choose the
fixed step-size $\eta=\frac{e^{-2}}{2}\sqrt{\frac{\log  K}{KT+\sum_{t\notin\mathcal{M}}d_{t}}}$.
Then the expected regret of EXP3 (Algorithm \ref{alg:EXP3}) against
an oblivious adversary satisfies
\begin{equation}
\mathbb{E}\left\{ R\left(T\right)\right\} =\mathbb{E}\left\{ \sum_{t=1}^{T}\left\langle \boldsymbol{l}_{t},\boldsymbol{p}_{t}\right\rangle -\underset{i}{\min}\sum_{t=1}^{T}l_{t}^{\left(i\right)}\right\} =O\left(\sqrt{\log  K\left(KT+\sum_{t\notin\mathcal{M}}d_{t}\right)}+\left|\mathcal{M}\right|\right).\label{eq:58}
\end{equation}
\end{thm}
\begin{proof}
We choose $\eta_{t}=\eta$ in \eqref{eq:56} of Lemma \ref{WeightedRegret}
and define $\mathcal{D}=\left\{ t\,|\,d_{t}\geq\frac{1}{e^{2}\eta}-1\text{ and }t+d_{t}\leq T\right\} $
so $\mathcal{M}^{*}=\mathcal{M}\cup\mathcal{\mathcal{D}}$ in the
statement of Lemma \ref{WeightedRegret}. Now divide both sides by
$\eta$:
\begin{equation}
\mathbb{E}\left\{ \sum_{t=1}^{T}l_{t}^{\left(a_{t}\right)}-\underset{i}{\min}\sum_{t=1}^{T}l_{t}^{\left(i\right)}\right\} \leq\frac{\log  K}{\eta}+\frac{e^{2}}{2}\eta KT+4\eta\sum_{t\notin\mathcal{\mathcal{M}^{*}}}d_{t}+\left|\mathcal{M}\right|+\left|\mathcal{D}\right|.\label{eq:59a}
\end{equation}
Then, choosing $\eta=\frac{e^{-2}}{2}\sqrt{\frac{\log  K}{KT+\sum_{t\notin\mathcal{M}}d_{t}}}$
yields \eqref{eq:58}. Note that for this choice $\left|\mathcal{D}\right|\leq\frac{\sum_{t\notin\mathcal{M}}d_{t}}{e^{-2}\eta^{-1}-1}\leq \sqrt{\left(\sum_{t\notin\mathcal{M}}d_{t}\right)\log  K}$,
since $\sum_{t\notin\mathcal{M}}d_{t}\geq\left(\frac{1}{e^{2}\eta}-1\right)\left|\mathcal{D}\right|$
(discarded samples in $\mathcal{D}$ are not missing).
\end{proof}
Similar to the bandit convex optimization case, the bound of Theorem
\ref{SingleAgent} is tighter than $O\left(\sqrt{\log  K\left(KT+D\right)}\right)$
for $D=\sum_{t=1}^{T}\min\left\{ d_{t},T-t+1\right\} $, as the next
Corollary shows. 
\begin{cor}
\label{tighterEXP3}Let $\eta=\frac{e^{-2}}{2}\sqrt{\frac{\log  K}{KT+\sum_{t\notin\mathcal{M}}d_{t}}}$.
Let $\left\{ l_{t}^{\left(i\right)}\right\} $ be a cost sequence
such that $l_{t}^{\left(i\right)}\in\left[0,1\right]$ for every $t,i$.
Let $\left\{ d_{t}\right\} $ be a delay sequence such that the cost
from round $t$ is received at round $t+d_{t}$. Let $D=\sum_{t=1}^{T}\min\left\{ d_{t},T-t+1\right\} $.
Then the expected regret of EXP3 (Algorithm \ref{alg:EXP3}) against
an oblivious adversary satisfies 
\begin{equation}
\mathbb{E}\left\{ R\left(T\right)\right\} =\mathbb{E}\left\{ \sum_{t=1}^{T}\left\langle \boldsymbol{l}_{t},\boldsymbol{p}_{t}\right\rangle -\underset{i}{\min}\sum_{t=1}^{T}l_{t}^{\left(i\right)}\right\} =O\left(\sqrt{\log  K\left(KT+D\right)}\right).\label{eq:60a}
\end{equation}
\end{cor}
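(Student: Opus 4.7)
The plan is to mirror almost verbatim the argument used for Corollary \ref{tighterFKM} in the FKM case, since the only gap between the bound in Theorem \ref{SingleAgent} and the desired bound is the replacement of $\sum_{t\notin\mathcal{M}}d_{t}+|\mathcal{M}|$ (with a square root wrapping the first summand) by a single quantity involving $D=\sum_{t=1}^{T}\min\{d_{t},T-t+1\}$.

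First I would observe the trivial inequality $\sum_{t\notin\mathcal{M}}d_{t}\leq D$, so that the first term of the bound in Theorem \ref{SingleAgent} already satisfies
\begin{equation}
\sqrt{\ln K\bigl(KT+\sum_{t\notin\mathcal{M}}d_{t}\bigr)}\leq\sqrt{\ln K\,(KT+D)}.
\end{equation}
It remains to absorb the additive $|\mathcal{M}|$ term. For this I would reuse the combinatorial observation from the proof of Corollary \ref{tighterFKM}: if $m=|\mathcal{M}|$ rounds are missing at horizon $T$, then the round whose feedback is most delayed must have $d_{t}\geq m$, the next at least $m-1$, and so on, so that these missing rounds alone contribute at least $\frac{m(m+1)}{2}\geq\frac{m^{2}}{2}$ to $D$. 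This yields $|\mathcal{M}|\leq\sqrt{2D}$.

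From there, since we are free to assume $K\geq2$ (the $K=1$ case is trivial) and the big-$O$ absorbs constants, $|\mathcal{M}|\leq\sqrt{2D}=O\bigl(\sqrt{D}\bigr)=O\bigl(\sqrt{\ln K\,(KT+D)}\bigr)$. Combining this with the bound on the first term gives
\begin{equation}
O\Bigl(\sqrt{\ln K\bigl(KT+\textstyle\sum_{t\notin\mathcal{M}}d_{t}\bigr)}+|\mathcal{M}|\Bigr)=O\bigl(\sqrt{\ln K\,(KT+D)}\bigr),
\end{equation}
which is exactly \eqref{eq:60a}.

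I do not expect any real obstacle here. The argument is a direct consequence of Theorem \ref{SingleAgent} plus the universal pigeonhole bound $D\geq|\mathcal{M}|^{2}/2$ already established in Corollary \ref{tighterFKM}. The only point to be careful about is the constant in $|\mathcal{M}|=O(\sqrt{\ln K\,(KT+D)})$ when $\ln K$ is small (e.g.\ $K=2$); this is handled by a harmless constant factor in the big-$O$, and does not require any case analysis analogous to the ``$D\leq T^{5/4}$'' split that appeared in the FKM proof, because EXP3's non-delayed term $\sqrt{KT\ln K}$ already dominates $\sqrt{D}$ whenever $D\lesssim KT$ and the delayed term $\sqrt{D\ln K}$ dominates otherwise.
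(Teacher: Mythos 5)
Your proposal is correct and follows essentially the same route as the paper: the key step in both is the pigeonhole observation that the $m=|\mathcal{M}|$ missing rounds each contribute $T-t+1$ to $D$, summing to at least $\frac{m(m+1)}{2}$, so that $|\mathcal{M}|=O(\sqrt{D})$ and both terms of the bound in Theorem \ref{SingleAgent} are absorbed into $O\left(\sqrt{\ln K\left(KT+D\right)}\right)$. The only cosmetic difference is that the paper handles the two terms jointly via the concavity of $\sqrt{x}$ while you bound them separately, and your closing remark that no case split analogous to the FKM proof's $D\leq T^{5/4}$ analysis is needed is also accurate.
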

\begin{proof}
The $m=\left|\mathcal{M}\right|$ missing samples contribute at least
$\frac{m\left(m+1\right)}{2}$ to $D$ (as in Corollary \ref{tighterFKM}),
so
\begin{multline}
\sqrt{\log  K\left(KT+D\right)}\geq\sqrt{\log  K\left(KT+\sum_{t\notin\mathcal{M}}d_{t}+\frac{m\left(m+1\right)}{2}\right)}\\
\underset{\left(a\right)}{\geq}\frac{1}{2}\sqrt{2\log  K\left(KT+\sum_{t\notin\mathcal{M}}d_{t}\right)}+\frac{1}{2}\sqrt{\log  Km\left(m+1\right)}\geq O\left(\sqrt{\log  K\left(KT+\sum_{t\notin\mathcal{M}}d_{t}\right)}\right)+\frac{\left|\mathcal{M}\right|}{4}\label{eq:61a}
\end{multline}
where (a) follows from the concavity of $f\left(x\right)=\sqrt{x}$.
\end{proof}

\subsection{Agnostic EXP3}
The step-size $\eta=\frac{e^{-2}}{2}\sqrt{\frac{\log  K}{KT+\sum_{t\notin\mathcal{M}}d_{t}}}$
used in Algorithm \ref{alg:EXP3} requires knowing the horizon $T$
and the sum of delays $D$. When these parameters are unknown, we can
apply Algorithm \ref{alg:DoublingTrickWrapper} to wrap EXP3. The next
Theorem is an immediate application Theorem \ref{2DDoublingTrickTheorem}
on the EXP3 regret bound for this agnostic case. The resulting bound
retains the same order of magnitude as the bound of Corollary \ref{tighterEXP3}
even though $D$ and $T$ are unknown. The only difference with the
bound of Theorem \ref{SingleAgent} arises because the doubling trick
discards samples that cross super-epochs. Hence, the bound below uses
$D=\sum_{t=1}^{T}\min\left\{ d_{t},T-t+1\right\} $ instead of $\sum_{t\notin\mathcal{M}}d_{t}$
and $\left|\mathcal{M}\right|$.
\begin{thm}
\label{thm:EXP3-Doubling} Let $\left\{ l_{t}^{\left(i\right)}\right\} $
be a cost sequence such that $l_{t}^{\left(i\right)}\in\left[0,1\right]$
for every $t,i$. Let $\left\{ d_{t}\right\} $ be a delay sequence
such that the cost from round $t$ is received at round $t+d_{t}$.
Let $D=\sum_{t=1}^{T}\min\left\{ d_{t},T-t+1\right\} $. If the player uses Algorithm \ref{alg:DoublingTrickWrapper} to wrap EXP3 (Algorithm \ref{alg:EXP3}) with step
size $\eta_{h,w}=\frac{e^{-2}}{2}\sqrt{\frac{\log  K}{K2^{h}+2^{w}}}$
for epoch $\left(h,w\right)$, then the regret against an oblivious
adversary satisfies 
\begin{equation}
\mathbb{E}\left\{ R\left(T\right)\right\} =\mathbb{E}\left\{ \sum_{t=1}^{T}\left\langle \boldsymbol{l}_{t},\boldsymbol{p}_{t}\right\rangle -\underset{i}{\min}\sum_{t=1}^{T}l_{t}^{\left(i\right)}\right\} =O\left(\sqrt{\log  K\left(KT+D\right)}\right).\label{eq:62a}
\end{equation}
\end{thm}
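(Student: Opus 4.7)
The plan is to apply Theorem \ref{2DDoublingTrickTheorem} as an off-the-shelf wrapper to the EXP3 regret bound of Corollary \ref{tighterEXP3}, so most of the work reduces to verifying the two assumptions of that theorem for EXP3 and checking that the prescribed step size $\eta_{w,h}$ matches the optimal tuning $\mathcal{P}^{*}(2^{h},2^{w})$ up to a universal constant.

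First, I would cast the EXP3 regret bound $O(\sqrt{\ln K(KT+D)})$ into the additive form $k_{1}D^{d}+k_{2}T^{c}+k_{3}T^{a}D^{b}$ required by Assumption~1 of Theorem \ref{2DDoublingTrickTheorem}. Using $\sqrt{a+b}\le\sqrt{a}+\sqrt{b}$, the bound is at most $C\sqrt{K\ln K}\,T^{1/2}+C\sqrt{\ln K}\,D^{1/2}$, matching the template with $k_{2}=C\sqrt{K\ln K}$, $c=1/2$, $k_{1}=C\sqrt{\ln K}$, $d=1/2$, and $k_{3}=0$ (the exponents $a,b$ are then irrelevant).

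Next, I would verify Assumption~2. From \eqref{eq:59a}, the pre-optimization EXP3 regret bound reads $\tfrac{\ln K}{\eta}+4\eta KT+4\eta\sum_{t\notin\mathcal{M}^{*}}d_{t}+|\mathcal{M}|+|\mathcal{D}|$, and for any fixed $\eta$ this is visibly non-decreasing in both $T$ and $D$, since $\sum_{t\notin\mathcal{M}^{*}}d_{t}$, $|\mathcal{M}|$, and $|\mathcal{D}|$ can only grow with $D$. Third, I would confirm that the prescribed step size $\eta_{w,h}=\tfrac{1}{2}e^{-2}\sqrt{\ln K/\max\{K2^{h},2^{w}\}}$ agrees, up to a universal multiplicative constant in $[1/\sqrt{2},1]$, with the optimal $\mathcal{P}^{*}(2^{h},2^{w})=\tfrac{1}{2}e^{-2}\sqrt{\ln K/(K2^{h}+2^{w})}$ coming from Theorem \ref{SingleAgent} applied to a super-epoch of length $2^{h}$ and cumulative capped delays $2^{w}$. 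This is immediate from $\max\{K2^{h},2^{w}\}\le K2^{h}+2^{w}\le 2\max\{K2^{h},2^{w}\}$, and the constant mismatch is swallowed by the hidden $O(\cdot)$ constant.

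With both assumptions in place and $\eta_{w,h}$ matching $\mathcal{P}^{*}$ up to a constant factor, Theorem \ref{2DDoublingTrickTheorem} delivers $\mathbb{E}\{R(T)\}=O(k_{1}D^{1/2}+k_{2}T^{1/2})=O(\sqrt{\ln K(KT+D)})$, which is exactly \eqref{eq:62a}. The only point needing care, rather than a genuine obstacle, is the bookkeeping inside each super-epoch: since Algorithm \ref{alg:DoublingTrickWrapper} discards feedback that crosses super-epoch boundaries, Corollary \ref{tighterEXP3} must be applied per super-epoch with the within-super-epoch horizon and cumulative delays, bounded respectively by $2^{h}$ and $2^{w}$ by Lemma \ref{lem:doublingStuff}. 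This is precisely what the proof of Theorem \ref{2DDoublingTrickTheorem} already handles in each of its three super-epoch types ($\mathcal{S}_{h}$, $\mathcal{S}_{w}$, and the isolated $(w,h)$), so no further argument is required.
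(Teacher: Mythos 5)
Your proposal is correct and follows essentially the same route as the paper: both reduce the theorem to an application of Theorem \ref{2DDoublingTrickTheorem}, split $\sqrt{\ln K(KT+D)}$ into $\sqrt{K\ln K}\,T^{1/2}+\sqrt{\ln K}\,D^{1/2}$ to match Assumption 1 (with $c=d=\tfrac{1}{2}$ and the cross term absent), and check Assumption 2 via the monotonicity of the pre-optimization bound \eqref{eq:59a} in $T$ and $D$. Your explicit verification that $\eta_{w,h}$ agrees with $\mathcal{P}^{*}(2^{h},2^{w})$ up to a factor in $[1/\sqrt{2},1]$ is a detail the paper leaves implicit, but it does not change the argument.
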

\begin{proof}
Consider the regret bound in \eqref{eq:59a} after bounding $\left|\mathcal{M}\right|\leq\sqrt{2D}$, (as in the proof of Corollary \ref{tighterEXP3}), $T-\left|\mathcal{M}\right|\leq T$, $\sum_{t\notin\mathcal{M^{*}}}d_{t}\leq D$ and $\left|\mathcal{D}\right|\leq\sqrt{D\log  K}$ (as in the proof of Theorem \ref{SingleAgent}). For $\eta=\frac{e^{-2}}{2}\sqrt{\frac{\log  K}{KT+D}}$, this bound yields that of Corollary \ref{tighterEXP3} which is of the form $\sqrt{TK\log  K}+\sqrt{D\log  K}$, so it matches Assumption 1 in Theorem \ref{2DDoublingTrickTheorem} with $a=b=0$ and $c=d=\frac{1}{2}$. This bound also satisfies Assumption 2 since it is increasing with $T$ and $D$ for any $\eta$.
\end{proof}

\subsection{No Weighted-Regret Property for EXP3}

In this subsection, we provide conditions for EXP3 to have no weighted-regret with respect to the delay sequence and its step-size sequence as the weight sequence of Section \ref{sec:Games}. As discussed in Section
\ref{sec:Games}, $\sum_{t=1}^{\infty}\eta_{t}=\infty$, is necessary for the no weighted-regret property to be non-trivial. All other conditions of Lemma \ref{lem:NoErgodicEXP3} are as tight as the bound of Lemma \ref{WeightedRegret}. 

\begin{lem}
\label{lem:NoErgodicEXP3}EXP3 with a non-increasing and positive step-size sequence $\left\{ \eta_{t}\right\}$ such that $\eta_{t}\leq\frac{1}{2}e^{-2}$ for all $t$ has no weighted-regret with respect to the sequence of delays
$\left\{ d_{t}\right\}$ and $\{\eta_t\}$ as the weight sequence, if the following two conditions hold:
\begin{enumerate}
\item $\sum_{t=1}^{\infty}\eta_{t}=\infty$.
\item $\underset{t\rightarrow\infty}{\lim}\eta_{t}d_{t}<\infty$ and $\sum_{t=1}^{\infty}\eta_{t}^{2}d_{t}<\infty$. 
\end{enumerate}
\end{lem}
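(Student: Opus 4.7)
The plan is to mirror the proof of Lemma \ref{lem:NoErgodicFKM}: apply the weighted regret bound from Lemma \ref{WeightedRegret}, divide through by $\sum_{t=1}^{T}\eta_{t}$, and check that each term in the numerator stays bounded (or grows more slowly than $\sum_{t=1}^{T}\eta_{t}$) under the two stated conditions. For the oblivious case I would invoke \eqref{eq:56}, which bounds the expected discounted regret by $\ln K + 4K\sum_{t=1}^{T}\eta_{t}^{2} + 4\sum_{t\notin\mathcal{M}^{*}}\eta_{t}^{2}d_{t} + \sum_{t\in\mathcal{M}^{*}}\eta_{t}$, where $\mathcal{M}^{*}=\mathcal{M}\cup\mathcal{D}$ decomposes into genuinely missing samples $\mathcal{M}$ (those with $t+d_{t}>T$) and samples $\mathcal{D}$ discarded by the algorithm because $d_{t}\geq e^{-2}\eta_{t}^{-1}-1$. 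The adaptive case is identical up to constants, using \eqref{eq:57}.

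The first easy observation is that $\sum_{t=1}^{\infty}\eta_{t}^{2}\leq\sum_{t=1}^{\infty}\eta_{t}^{2}d_{t}<\infty$ because $d_{t}\geq1$, so both the $4K\sum_{t=1}^{T}\eta_{t}^{2}$ and $4\sum_{t\notin\mathcal{M}^{*}}\eta_{t}^{2}d_{t}$ terms are uniformly bounded in $T$. The main technical step is controlling $\sum_{t\in\mathcal{M}^{*}}\eta_{t}$. For the $\mathcal{M}$ part I would reuse the estimate \eqref{eq:49}: letting $t^{*}(T)=\min\mathcal{M}$, $|\mathcal{M}|\leq T-t^{*}+1\leq d_{t^{*}}$ and the monotonicity of $\eta_{t}$ gives $\sum_{t\in\mathcal{M}}\eta_{t}\leq d_{t^{*}}\eta_{t^{*}}$; since $t^{*}\to\infty$ as $T\to\infty$ and $\lim_{t}\eta_{t}d_{t}<\infty$, this quantity stays bounded.

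The subtlety specific to EXP3, and the step I expect to be the main obstacle, is bounding $\sum_{t\in\mathcal{D}}\eta_{t}$ using the threshold definition of $\mathcal{D}$. Observe that $t\in\mathcal{D}$ forces $\eta_{t}d_{t}\geq e^{-2}-\eta_{t}$; since $\sum\eta_{t}^{2}d_{t}<\infty$ with $d_{t}\geq1$ implies $\eta_{t}\to0$, for $t$ larger than some $T_{0}$ we have $\eta_{t}d_{t}\geq e^{-2}/2$, so $\eta_{t}\leq2e^{2}\eta_{t}^{2}d_{t}$. Summing, $\sum_{t\in\mathcal{D},\,t\geq T_{0}}\eta_{t}\leq 2e^{2}\sum_{t=1}^{\infty}\eta_{t}^{2}d_{t}<\infty$, and the finitely many small-$t$ terms contribute a constant. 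So $\sum_{t\in\mathcal{M}^{*}}\eta_{t}$ is bounded uniformly in $T$.

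Putting it together, the numerator in the discounted-regret ratio is $O(1)$ as $T\to\infty$, while condition 1 guarantees $\sum_{t=1}^{T}\eta_{t}\to\infty$ in the denominator; dividing and taking the limit yields
\begin{equation}
\lim_{T\to\infty}\mathbb{E}\left\{\frac{\sum_{t=1}^{T}\eta_{t}\bigl(l_{t}^{(a_{t})}-l_{t}^{(i^{*})}\bigr)}{\sum_{t=1}^{T}\eta_{t}}\right\}=0,
\end{equation}
which is the no discounted-regret property. The adaptive-adversary version follows verbatim after replacing \eqref{eq:56} by \eqref{eq:57}; the larger constants $2+2\ln K$, $5K$, and $30K$ do not affect the limit.
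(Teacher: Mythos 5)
Your proposal is correct and follows essentially the same route as the paper's proof: invoke Lemma \ref{WeightedRegret}, split $\mathcal{M}^{*}$ into the truly missing samples $\mathcal{M}$ and the discarded samples $\mathcal{D}$, bound each numerator term uniformly in $T$, and let $\sum_{t=1}^{T}\eta_{t}\rightarrow\infty$ kill the ratio; your treatment of $\mathcal{M}$ via $t^{*}(T)\rightarrow\infty$ and $\sum_{t\in\mathcal{M}}\eta_{t}\leq d_{t^{*}}\eta_{t^{*}}$ is exactly the paper's \eqref{eq:63b}. The one place you genuinely deviate is the discarded set: the paper argues that $\eta_{t}d_{t}\rightarrow0$ ``if this limit exists,'' concludes $\underset{T\rightarrow\infty}{\lim}\left|\mathcal{D}_{T}\right|<\infty$, and bounds $\sum_{t\in\mathcal{D}_{T}}\eta_{t}\leq\eta_{1}\left|\mathcal{D}_{T}\right|$, whereas you use the membership condition $\eta_{t}d_{t}\geq e^{-2}-\eta_{t}\geq e^{-2}/2$ (which in fact holds for all $t$ since the algorithm requires $\eta_{1}<e^{-2}/2$, so you do not even need the threshold $T_{0}$) to get the pointwise bound $\eta_{t}\leq2e^{2}\eta_{t}^{2}d_{t}$ on $\mathcal{D}$ and then sum against condition 2. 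Your version is slightly more robust: it bounds $\sum_{t\in\mathcal{D}}\eta_{t}$ directly by $2e^{2}\sum_{t}\eta_{t}^{2}d_{t}<\infty$ without requiring the limit of $\eta_{t}d_{t}$ to exist or the cardinality of $\mathcal{D}$ to be finite, which cleanly covers delay sequences where $\eta_{t}d_{t}$ oscillates.
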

\begin{proof}
Define the set of missing samples $\mathcal{M}=\left\{ t\,|\,t+d_{t}>T\right\}$
and the set of discarded samples $\mathcal{\mathcal{D}}_{T}=\left\{ t\,|d_{t}\eta_{t}>e^{-2}-\eta_{t}\right\} $.
Define $t^{*}\left(T\right)=\underset{t\in\mathcal{M}}{\min}t$, and
note that $t^{*}\left(T\right)\rightarrow\infty$ as $T\rightarrow\infty$
since $t+d_{t}\geq t$, and $f\left(t\right)=t$ is increasing. Since
$\eta_{t}$ is non-increasing then 
\begin{equation}
\sum_{t\in\mathcal{M}}\eta_{t}\leq\left|\mathcal{M}\right|\eta_{t^{*}\left(T\right)}\leq\left(T-t^{*}\left(T\right)+1\right)\eta_{t^{*}\left(T\right)}\leq d_{t^{*}\left(T\right)}\eta_{t^{*}\left(T\right)}.\label{eq:63b}
\end{equation}
Given $\sum_{t=1}^{\infty}\eta_{t}^{2}d_{t}<\infty$ and $\sum_{t=1}^{\infty}\eta_{t}=\infty$
we must have $\underset{t\rightarrow\infty}{\lim}\eta_{t}d_{t}=0$
if this limit exists, so $\underset{T\rightarrow\infty}{\lim}\left|\mathcal{D}_{T}\right|<\infty$.
Therefore for the optimal arm $i^{*}$
\begin{equation}
\underset{T\rightarrow\infty}{\lim}\frac{\mathbb{E}\left\{ \sum_{t=1}^{T}\eta_{t}\left(l_{t}^{\left(a_{t}\right)}-l_{t}^{\left(i^{*}\right)}\right)\right\} }{\sum_{t=1}^{T}\eta_{t}}\underset{\left(a\right)}{\leq}\underset{T\rightarrow\infty}{\lim}\frac{\eta_{1}\left|\mathcal{D}_{T}\right|+d_{t^{*}\left(T\right)}\eta_{t^{*}\left(T\right)}+4\log  K+5K\sum_{t=1}^{T}\eta_{t}^{2}\left(1+6d_{t}\right)}{\sum_{t=1}^{T}\eta_{t}}\underset{\left(b\right)}{=}0\label{eq:55a}
\end{equation}
where (a) is Lemma \ref{WeightedRegret} and \eqref{eq:63b}, and (b) uses $d_{t}\eta_{t}\rightarrow0$
as $t\rightarrow\infty$, $\sum_{t=1}^{\infty}\eta_{t}=\infty$ and
$\sum_{t=1}^{\infty}d_{t}\eta_{t}^{2}<\infty$. 
\end{proof}

Note that one can choose $\eta_{t}=\frac{1}{t\log t\log\log t\log\log\log t}$
to guarantee no weighted-regret for all sequences such that $d_{t}=\left(t\log t\log\log t\right)$.
This boundary can only be slightly improved by adding $\log\left(\log\left(\ldots\log\left(T\right)\right)\right)$
iteratively in this manner as long as $\sum_{t=1}^{\infty}\frac{1}{d_{t}}=\infty$.
Outside this boundary, we cannot guarantee that EXP3 has no weighted-regret even if $d_t$ is known. Hence, no knowledge of the individual terms of the sequence $d_{t}$ is required to tune $\eta_{t}$ such that EXP3 has no weighted-regret for all sequences inside this boundary, which can be arbitrarily extended to include all the sequences for which Lemma \ref{lem:NoErgodicEXP3} holds. However, if a tighter bound on
the rate of growth of $d_{t}$ is available then one can improve the
convergence rate to the set of CCE by picking a more slowly decaying
$\eta_{t}$ than $\eta_{t}=\frac{1}{t\log t\log\log t\log\log\log t}$. This still would not require knowledge of the individual terms in $d_t$.
 In general, for a given $T$, EXP3 gives an $\varepsilon$-CCE with
$\varepsilon=O\left(\frac{1}{\sum_{t=1}^{T}\eta_{t}}\right)$, as
given in \eqref{eq:55a}. 

The following Proposition shows that EXP3 can have no weighted-regret even when it has linear regret in $T$. As a result, EXP3 can be used to approximate a CCE in a discrete non-cooperative game or a NE in a discrete two-player zero-sum game despite not having no-regret guarantees.
\begin{prop}
\label{StillConvergesEXP3}There exist a delay sequence $\left\{ d_{t}\right\}$ and a cost sequence $\left\{ l_{t}^{\left(1\right)},...,l_{t}^{\left(K\right)}\right\} _{t}$ with $0\leq l_{t}^{\left(i\right)}\leq1$ for all $t$ and $i$, such that 
\begin{equation}
    \mathbb{E}\left\{ R\left(T\right)\right\} =\mathbb{E}\left\{ \sum_{t=1}^{T}\left\langle \boldsymbol{l}_{t},\boldsymbol{p}_{t}\right\rangle -\underset{i}{\min}\sum_{t=1}^{T}l_{t}^{\left(i\right)}\right\} \geq\left(1-\frac{1}{K}\right)\frac{T}{2}
\end{equation}
but still the step-sizes $\left\{ \eta_{t}\right\}$  for Algorithm \ref{alg:EXP3} (EXP3) can be chosen such that it has no weighted-regret with respect to $\left\{ d_{t}\right\}$ and $\{\eta_{t}\}$ as the weight sequence.
\end{prop}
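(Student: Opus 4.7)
The plan is to mimic the construction used for Proposition \ref{StillConvergesFKM}, adapted to the multi-armed bandit setting. The core idea is to pick a delay sequence large enough that no feedback is effectively used in the second half of the game, and then arrange costs so that EXP3 is stuck at the uniform distribution while a clearly better arm exists.

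First I would take $d_t = t$ for every $t$, so that the feedback from any round $t > T/2$ arrives only at round $2t > T$ and is therefore never received. Next I would choose a step size of the form $\eta_t = \frac{c}{t \log(t+1)}$, with the constant $c > 0$ small enough to satisfy the initialization requirement $\eta_1 < e^{-2}/2$ of Algorithm \ref{alg:EXP3}. This choice meets all hypotheses of Lemma \ref{lem:NoErgodicEXP3}: by the integral test, $\sum_{t \geq 1} \eta_t$ diverges; the product $d_t \eta_t = c/\log(t+1)$ is bounded (and in fact tends to $0$); and $\sum_{t \geq 1} d_t \eta_t^2 = \sum_{t \geq 1} \tfrac{c^2}{t \log^2(t+1)} < \infty$. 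Therefore EXP3 with this step size sequence has no discounted-regret with respect to $\{d_t\}$.

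To exhibit linear expected regret I would construct the cost vectors so that EXP3 never leaves its initial uniform distribution. Set $l_t^{(i)} = 0$ for all arms $i$ and all $t \leq T/2$, and set $l_t^{(1)} = 0$ with $l_t^{(i)} = 1$ for $i \neq 1$ for every $t > T/2$. During the first $T/2$ rounds, every feedback value the algorithm could possibly receive (whether used or discarded) is zero, so the cumulative estimates $\tilde{L}_t^{(i)}$ stay at $0$ and the mixed strategy is $\boldsymbol{p}_t = (1/K,\ldots,1/K)$. For $t > T/2$, the delay $d_t = t$ means the corresponding feedback never arrives before round $T$, so no further updates occur and $\boldsymbol{p}_t$ remains uniform. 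Hence for every $t > T/2$, $\langle \boldsymbol{l}_t, \boldsymbol{p}_t \rangle = (K-1)/K$, while arm $1$ incurs zero total cost; summing over $t > T/2$ gives $\mathbb{E}\{R(T)\} \geq (1 - 1/K) T/2$.

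The only subtlety, and the main thing to verify, is that the sample-discarding rule in step 2 of Algorithm \ref{alg:EXP3} does not interfere. That rule can only \emph{remove} samples from use, and any discarded sample in the first $T/2$ rounds has value $0$, which leaves the iterate unchanged; any sample from $t > T/2$ is missing in the first place. Consequently $\boldsymbol{p}_t$ is uniform throughout $[1,T]$ regardless of whether the discarding rule triggers, and the regret lower bound stands.
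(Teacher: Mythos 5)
Your proposal is correct and follows essentially the same route as the paper's proof: the same delay sequence $d_{t}=t$, the same step size $\eta_{t}=\Theta\left(\frac{1}{t\log\left(t+1\right)}\right)$ verified against Lemma \ref{lem:NoErgodicEXP3}, and the same cost construction with zero costs in the first half forcing $\boldsymbol{p}_{t}$ to remain uniform while arm $1$ is strictly best in the second half. Your extra care about the discarding rule and the constant $c$ ensuring $\eta_{1}<e^{-2}/2$ is a minor refinement of the same argument.
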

\begin{proof}
Let $d_{t}=t$ and $\eta_{t}=\frac{1}{t\log\left(t+1\right)}$
for all $t$, for which $d_{t}\eta_{t}^{2}=\frac{1}{t\log^{2}\left(t+1\right)}$
so $\sum_{t=1}^{\infty}\eta_{t}=\infty$, $\sum_{t=1}^{\infty}d_{t}\eta_{t}^{2}<\infty$
and $\underset{t\rightarrow\infty}{\lim}\eta_{t}d_{t}=0$. Hence,
by Lemma \ref{lem:NoErgodicEXP3} we obtain that EXP3 has no weighted-regret with respect to $d_{t}$ and $\eta_{t}$. However, the feedback for the last
$\frac{T}{2}$ rounds is never received. Therefore, the mixed action $\boldsymbol{p}_{t}$
does not change for all $t\geq\frac{T}{2}$. Then the
cost sequence such that $l_{t}^{\left(i\right)}=0$ for all $i$
and all $t\leq\frac{T}{2}$ and $l_{t}^{\left(1\right)}=0$, $l_{t}^{\left(j\right)}=1$
for all $j>1$ and all $t>\frac{T}{2}$ yields an expected
regret of exactly $\left(1-\frac{1}{K}\right)\frac{T}{2}$.
\end{proof}

\section{Conclusions\label{sec:Conclusions}}

We studied the weighted-regret of online learning with adversarial
bandit feedback and an arbitrary delay sequence $\left\{ d_{t}\right\}$. Our results have implications both for the single-agent and multi-agent cases.

For the single-agent case, our weighted-regret bounds yield standard regret bounds as a special case. We showed an expected regret bound of $O\left( nT^{\frac{3}{4}}+\sqrt{n}T^{\frac{1}{3}}D^{\frac{1}{3}} \right)$ for FKM and $O\left(\sqrt{\log  K\left(KT+D\right)}\right)$ for EXP3, where $D=\sum_{t=1}^{T}\min\left\{ d_{t},T-t+1\right\} $. These bounds
hold even if $D,T$ are unknown thanks to a novel doubling
trick. Our doubling trick can be applied to any online learning algorithm with delays in a plug-and-play manner. Under mild conditions, the novel doubling trick provably retains the order of magnitude dependence on $D$, $T$ (and other parameters) of the regret bound for when $D,T$ are known. 

Our single-agent results in this paper focus on FKM and EXP3 since they are the most widely used algorithms for bandit convex optimization and multi-armed bandits, respectively. Therefore it is crucial to understand how they perform under delays, which are prevalent in practical systems. However, this leaves open the question of what are the best algorithms for delayed bandit feedback.

For multi-armed bandits the lower bound is $O\left(\sqrt{KT+D\log  {K}}\right)$, which is achieved by the algorithm of \cite{zimmert2019optimal}. EXP3, which has lower computational complexity, achieves this lower bound up to the $\log {K}$ that factors $KT$, which is negligible if the average delay is larger than $O\left(\frac{K}{\log  K}\right)$. 

For bandit convex optimization, much less is understood. A breakthrough was made in \citet{bubeck2017kernel}, that introduced a bandit convex optimization algorithm 
that achieves an expected regret of $O\left(n^{9.5}\sqrt{T}\log^{7.5}T\right)$. Recently,  it was shown in \cite{lattimore2020improved} that an algorithm exists that achieves an expected regret of $O(n^{2.5}\sqrt T\log T)$, which improves the bound from \cite{bubeck2016multi}. However, the result in \cite{lattimore2020improved} is non-constructive so an algorithm that achieves the improved bound is still unknown. 
Compared to FKM, the algorithm in \cite{bubeck2017kernel}  suffers from a few drawbacks. First, the $n$ dependence is a high-degree polynomial, which is much worse than the linear dependence of FKM. Second, the algorithm proposed in \citet{bubeck2017kernel} has a $T$ dependent complexity per round. Since this new algorithm may need a very large $T$ to have lower regret than FKM, a $T$ dependant complexity is a serious practical concern.
Finally, it is an open question how robust the algorithm in \citet{bubeck2017kernel} is to delays.  The main difficulty seems to be that their algorithm requires increasing the step-size multiplicatively by a factor larger than one once a certain condition holds. An increasing step-size sequence conflicts with the $T, D$-dependent tuning that optimizes the expected regret with delays or the decreasing step-size sequence that guarantees no weighted-regret for unbounded delay sequences. In contrast, FKM gives a weighted-regret bound as a function of $\eta_{t}$ that can be easily tuned.

For the multi-agent case, we proved that if the algorithms have no weighted-regret, then the expected weighted ergodic distribution of play converges to the set of coarse correlated equilibria (CCE) for a general non-cooperative game. For a two-player zero-sum game, the weighted ergodic average of play converges in $L^{1}$ to the set of Nash equilibria. Then, we showed that FKM and EXP3 have no weighted-regret with their step-size sequence as the weight sequence even under significant unbounded delay sequences (e.g., $d_{t}=O\left(t\log t\right)$) for which their regret is $\Theta(T)$. Hence, by simulating a game and endowing the players with FKM or EXP3, we can use the weighted ergodic distribution or average to approximate an equilibrium. By tuning the weights according to the conditions we provide, this approximation method can still converge even if the algorithms have linear regret. Since delays are prevalent when simulating model-free multi-agent interactions (i.e., games), this extends the set of tools that can approximate equilibria in practice. Approximating equilibria of model-free games in a simulated environment can help to predict their outcomes in practice, or design distributed algorithms in case the equilibria have good global performance.

Our results highlight the role of no weighted-regret in online learning under delayed feedback. This motivates to further study the analogy between no weighted-regret under delays to no-regret in the standard no-delay case. In particular, it might be possible to prove results on the internal weighted-regret, based on the techniques introduced in \cite{blum2007external} for multi-armed bandits. An internal weighted-regret bound will allow approximating the correlated equilibrium in delayed feedback environments, generalizing our result on external weighted-regret and the CCE.

\bibliography{DelayedFeedback}

\newpage
\section{Proof of Theorem \ref{CorrelatedEquilibrium}}

We start by showing that $\mathcal{\mathbb{E}}\left\{\rho_{T}\right\}=\mathcal{\mathbb{E}}\left\{\frac{\sum_{t=1}^{T}\eta_{t}\delta_{\boldsymbol{a}_{t}}}{\sum_{t=1}^{T}\eta_{t}}\right\}$ converges to an $\varepsilon$-CCE of the game as
$T\rightarrow\infty$, for every $\varepsilon>0$. For each $n$, define the cost function $l_{n,t}\left(\boldsymbol{a}_{n}\right)=1-u_{n}\left(\boldsymbol{a}_{n},\boldsymbol{a}_{-n,t}\right)$.
Let $\varepsilon>0$. Since each player $n$ has no weighted-regret with respect to $d_{t}^{n}$ and $\eta_{t}$, then there exists a $T_{0}>0$
such that for all $T>T_{0}$, we have for every $n$ and every action $\boldsymbol{a}_{n}\in\mathcal{A}$:
\begin{align*}
\mathcal{\mathbb{E}}\left\{ \mathbb{E}^{\boldsymbol{a}^{*}\sim\rho_{T}}\left\{ u_{n}\left(\boldsymbol{a}_{n},\boldsymbol{a}_{-n}^{*}\right)-u_{n}\left(\boldsymbol{a}^{*}\right)\right\} \right\} & \underset{\left(a\right)}{=}\mathbb{E}\left\{ \frac{\sum_{t=1}^{T}\eta_{t}\left(u_{n}\left(\boldsymbol{a}_{n},\boldsymbol{a}_{-n,t}\right)-u_{n}\left(\boldsymbol{a}_{t}\right)\right)}{\sum_{t=1}^{T}\eta_{t}}\right\} \\
&=\mathbb{E}\left\{ \frac{\sum_{t=1}^{T}\eta_{t}\left(l_{n,t}\left(\boldsymbol{a}_{n,t}\right)-l_{n,t}\left(\boldsymbol{a}_{n}\right)\right)}{\sum_{t=1}^{T}\eta_{t}}\right\} \underset{\left(b\right)}{\leq}\varepsilon\numberthis\label{eq:66b}
\end{align*}
where (a) uses the definition of $\rho_{T}$ and (b) follows since
player $n$ has no weighted-regret. Now pick $\boldsymbol{a}_{n}=\underset{\boldsymbol{a}_{n}'\in\mathcal{A}}{\arg\max}\mathbb{E}^{\boldsymbol{a}^{*}\sim\rho_{T}}\left\{ u_{n}\left(\boldsymbol{a}_{n}',\boldsymbol{a}_{-n}^{*}\right)\right\} $ in \eqref{eq:66b}. Since $\mathbb{E}^{\boldsymbol{a}^{*}\sim\rho_{T}}\left\{ u_{n}\left(\boldsymbol{a}_{n},\boldsymbol{a}_{-n}^{*}\right)-u_{n}\left(\boldsymbol{a}^{*}\right)\right\} $
is linear in $\rho_{T}$, then
\begin{equation}
\mathcal{\mathbb{E}}\left\{ \mathbb{E}^{\boldsymbol{a}^{*}\sim\rho_{T}}\left\{ u_{n}\left(\boldsymbol{a}_{n},\boldsymbol{a}_{-n}^{*}\right)-u_{n}\left(\boldsymbol{a}^{*}\right)\right\} \right\} =\mathbb{E}^{\boldsymbol{a}^{*}\sim\mathbb{E}\left\{ \rho_{T}\right\} }\left\{ u_{n}\left(\boldsymbol{a}_{n},\boldsymbol{a}_{-n}^{*}\right)-u_{n}\left(\boldsymbol{a}^{*}\right)\right\}
\label{eq:57-1}
\end{equation}
so we conclude that by definition $\mathbb{E}\left\{ \rho_{T}\right\} $
is an $\varepsilon$-CCE. 

Let $\Delta>0$. From Lemma \ref{lem:epsilonNash}, we know that there exists an $\varepsilon_{\Delta}>0$ such that for all $\varepsilon\leq\varepsilon_{\Delta}$
we have $\underset{\rho^{*}\in\mathcal{C}_{0}}{\min}\left\Vert \rho{}_{\varepsilon}-\rho^{*}\right\Vert \leq\Delta$
for all $\rho_{\varepsilon}\in\mathcal{C}_{\varepsilon}$. From \eqref{eq:66b} we know that there exists a large enough
$T_{1}$ such that for all $T>T_{1}$ we have $\mathbb{E}\left\{ \rho_{T}\right\} \in\mathcal{C}_{\varepsilon_{\Delta}}$, which implies that $\underset{\rho^{*}\in\mathcal{C}_{0}}{\min}\left\Vert \mathbb{E}\left\{ \rho_{T}\right\} -\rho^{*}\right\Vert \leq\Delta$. Therefore, $\mathbb{E}\left\{ \rho_{T}\right\} $ converges
to the set of CCE $\mathcal{C}_{0}$ as $T\rightarrow\infty$.

\section{Proof of Theorem \ref{ZeroSumTheorem}}

Recall that the weighted ergodic average of $\boldsymbol{a}_{t}$ is $\boldsymbol{\bar{a}}_{T}\triangleq\frac{\sum_{t=1}^{T}\eta_{t}\boldsymbol{a}_{t}}{\sum_{t=1}^{T}\eta_{t}}$. Let $\varepsilon>0$. Define the ergodic
average of the value of the game by
\begin{equation}
\overline{u}_{T}=\frac{\sum_{t=1}^{T}\eta_{t}u\left(\boldsymbol{y}_{t},\boldsymbol{z}_{t}\right)}{\sum_{t=1}^{T}\eta_{t}}.\label{eq:68}
\end{equation}
Define the row cost function $l_{r,t}\left(\boldsymbol{y}\right)=u\left(\boldsymbol{y},\boldsymbol{z}_{t}\right)$.
Since the row player has no weighted-regret with respect to $d_{t}^{r}$ and $\eta_{t}$ then there exists a $T_{1}>0$ such that for all $T>T_{1}$ and every $\boldsymbol{y}\in\mathcal{A}$ (even in hindsight):
\begin{align*}
\mathbb{E}\left\{ \overline{u}_{T}-u\left(\boldsymbol{y},\boldsymbol{\bar{z}}_{T}\right)\right\} &\underset{\left(a\right)}{\leq}\mathbb{E}\left\{ \frac{\sum_{t=1}^{T}\eta_{t}\left(u\left(\boldsymbol{y}_{t},\boldsymbol{z}_{t}\right)-u\left(\boldsymbol{y},\boldsymbol{z}_{t}\right)\right)}{\sum_{t=1}^{T}\eta_{t}}\right\} \\
&=\mathbb{E}\left\{ \frac{\sum_{t=1}^{T}\eta_{t}\left(l_{r,t}\left(\boldsymbol{y}_{t}\right)-l_{r,t}\left(\boldsymbol{y}\right)\right)}{\sum_{t=1}^{T}\eta_{t}}\right\} \underset{\left(b\right)}{\leq}\frac{\varepsilon}{2}\numberthis\label{eq:69}
\end{align*}
where (a) uses the concavity of $u\left(\boldsymbol{y},\boldsymbol{\bar{z}}_{T}\right)$
in $\boldsymbol{\bar{z}}_{T}$ and (b) uses
the no weighted-regret of the algorithm.

Define the column cost function
$l_{c,t}\left(\boldsymbol{z}\right)=1-u\left(\boldsymbol{y}_{t},\boldsymbol{z}\right)$.
Since the column player has no weighted-regret with respect to $d_{t}^{c}$ and $\eta_{t}$,
then there exists a $T_{2}>0$ such that for all $T>T_{2}$ and for
every  $\boldsymbol{z}\in\mathcal{A}$ (even in hindsight):
\begin{align*}
\mathbb{E}\left\{ u\left(\boldsymbol{\bar{y}}_{T},\boldsymbol{z}\right)-\overline{u}_{T}\right\} & \underset{\left(a\right)}{\leq}\mathbb{E}\left\{ \frac{\sum_{t=1}^{T}\eta_{t}\left(u\left(\boldsymbol{y}_{t},\boldsymbol{z}\right)-u\left(\boldsymbol{y}_{t},\boldsymbol{z}_{t}\right)\right)}{\sum_{t=1}^{T}\eta_{t}}\right\} \\ &
=\mathbb{E}\left\{ \frac{\sum_{t=1}^{T}\eta_{t}\left(l_{c,t}\left(\boldsymbol{z}_{t}\right)-l_{c,t}\left(\boldsymbol{z}\right)\right)}{\sum_{t=1}^{T}\eta_{t}}\right\} \underset{\left(b\right)}{\leq}\frac{\varepsilon}{2}\numberthis\label{eq:70a}
\end{align*}
where (a) uses the convexity of $u\left(\boldsymbol{\bar{y}}_{T},\boldsymbol{z}\right)$
in $\boldsymbol{\bar{y}}_{T}$ and (b) uses the
no weighted-regret of the algorithm.

Now define the best-response to $\boldsymbol{\boldsymbol{\bar{z}}}_{T}$
as $\boldsymbol{y}_{T}^{b}=\underset{\boldsymbol{y}'}{\arg\min}\,u\left(\boldsymbol{y}',\boldsymbol{\boldsymbol{\bar{z}}}_{T}\right)$
and the best-response to $\boldsymbol{\boldsymbol{\bar{y}}}_{T}$
as $\boldsymbol{z}_{T}^{b}=\underset{\boldsymbol{z}'}{\arg\max\,}u\left(\boldsymbol{\bar{y}}_{T},\boldsymbol{z}'\right).$
By choosing $\boldsymbol{y}=\boldsymbol{y}_{T}^{b}$, $\boldsymbol{z}=\boldsymbol{\bar{z}}_{T}$
in \eqref{eq:69} and \eqref{eq:70a} and adding them together we
conclude that for all $T>\max\left\{ T_{1},T_{2}\right\} $
\begin{equation}
\mathbb{E}\left\{ \left|u\left(\boldsymbol{\bar{y}}_{T},\boldsymbol{\bar{z}}_{T}\right)-\underset{\boldsymbol{y}'}{\min}u\left(\boldsymbol{y}',\boldsymbol{\boldsymbol{\bar{z}}}_{T}\right)\right|\right\} \underset{\left(a\right)}{=}\mathbb{E}\left\{ \overline{u}_{T}-u\left(\boldsymbol{y}_{T}^{b},\boldsymbol{\bar{z}}_{T}\right)\right\} +\mathbb{E}\left\{ u\left(\boldsymbol{\bar{y}}_{T},\boldsymbol{\bar{z}}_{T}\right)-\overline{u}_{T}\right\} \leq\varepsilon\label{eq:71}
\end{equation}
where (a) follows since $u\left(\boldsymbol{\bar{y}}_{T},\boldsymbol{\bar{z}}_{T}\right)\geq u\left(\boldsymbol{y}_{T}^{b},\boldsymbol{\bar{z}}_{T}\right)$.
By choosing instead $\boldsymbol{y}=\boldsymbol{\bar{y}}_{T}$, $\boldsymbol{z}=\boldsymbol{z}_{T}^{b}$
in \eqref{eq:69} and \eqref{eq:70a} and adding them together we conclude that for all $T>\max\left\{ T_{1},T_{2}\right\} $
\begin{equation}
\mathbb{E}\left\{ \left|u\left(\boldsymbol{\bar{y}}_{T},\boldsymbol{\bar{z}}_{T}\right)-\underset{\boldsymbol{z}'}{\max}u\left(\boldsymbol{\boldsymbol{\bar{y}}}_{T},\boldsymbol{z}'\right)\right|\right\} \underset{\left(a\right)}{=}\mathbb{E}\left\{ \overline{u}_{T}-u\left(\boldsymbol{\bar{y}}_{T},\boldsymbol{\bar{z}}_{T}\right)\right\} +\mathbb{E}\left\{ u\left(\boldsymbol{\bar{y}}_{T},\boldsymbol{z}_{T}^{b}\right)-\overline{u}_{T}\right\} \leq\varepsilon\label{eq:72}
\end{equation}
where (a) follows since $u\left(\boldsymbol{\bar{y}}_{T},\boldsymbol{\bar{z}}_{T}\right)\leq u\left(\boldsymbol{\bar{y}}_{T},\boldsymbol{z}_{T}^{b}\right)$. 

Let $\Delta>0$. From Lemma \ref{lem:epsilonNash}, we know that there
exists an $\varepsilon_{\Delta}>0$ such that for all $\varepsilon\leq\varepsilon_{\Delta}$
we have $\underset{\boldsymbol{x}^{*}\in\mathcal{N}_{0}}{\min}\left\Vert \boldsymbol{x}_{\varepsilon}-\boldsymbol{x}^{*}\right\Vert \leq\Delta$
for all $\boldsymbol{x}_{\varepsilon}\in\mathcal{N}_{\varepsilon}$.
Let $\delta>0$ and let $\varepsilon=\frac{\varepsilon_{\Delta}\delta}{2}>0$.
Then from \eqref{eq:71},\eqref{eq:72} we know that there exists
a large enough $T_{3}$ such that for all $T>T_{3}$, using Markov inequality:

\begin{equation}
\mathbb{P}\left(\left|u\left(\boldsymbol{\bar{y}}_{T},\boldsymbol{\bar{z}}_{T}\right)-\underset{\boldsymbol{y}'}{\min}u\left(\boldsymbol{y}',\boldsymbol{\boldsymbol{\bar{z}}}_{T}\right)\right|\geq\varepsilon_{\Delta}\right)\leq\frac{\mathbb{E}\left\{ \left|u\left(\boldsymbol{\bar{y}}_{T},\boldsymbol{\bar{z}}_{T}\right)-\underset{\boldsymbol{y}'}{\min}u\left(\boldsymbol{y}',\boldsymbol{\boldsymbol{\bar{z}}}_{T}\right)\right|\right\} }{\varepsilon_{\Delta}}=\frac{\delta}{2}\label{eq:64-1}
\end{equation}
and 
\begin{equation}
\mathbb{P}\left(\left|u\left(\boldsymbol{\bar{y}}_{T},\boldsymbol{\bar{z}}_{T}\right)-\underset{\boldsymbol{z}'}{\max}u\left(\boldsymbol{\boldsymbol{\bar{y}}}_{T},\boldsymbol{z}'\right)\right|\geq\varepsilon_{\Delta}\right)\leq\frac{\mathbb{E}\left\{ \left|u\left(\boldsymbol{\bar{y}}_{T},\boldsymbol{\bar{z}}_{T}\right)-\underset{\boldsymbol{z}'}{\max}u\left(\boldsymbol{\boldsymbol{\bar{y}}}_{T},\boldsymbol{z}'\right)\right|\right\} }{\varepsilon_{\Delta}}=\frac{\delta}{2}.\label{eq:65-1}
\end{equation}
Hence by the union bound over \eqref{eq:64-1} and \eqref{eq:65-1}:
\begin{equation}
\mathbb{P}\left(\underset{\boldsymbol{x}^{*}\in\mathcal{N}_{0}}{\min}\left\Vert \left(\boldsymbol{\bar{y}}_{T},\boldsymbol{\bar{z}}_{T}\right)-\boldsymbol{x}^{*}\right\Vert \leq\Delta\right)\geq\mathbb{P}\left(\left(\boldsymbol{\bar{y}}_{T},\boldsymbol{\bar{z}}_{T}\right)\in\mathcal{N}_{\varepsilon_{\Delta}}\right)\geq1-\delta\label{eq:64a}
\end{equation}
so $\left(\boldsymbol{\bar{y}}_{T},\boldsymbol{\bar{z}}_{T}\right)$
converges in probability to the set of NE. Since $\left(\boldsymbol{\bar{y}}_{T},\boldsymbol{\bar{z}}_{T}\right)-\boldsymbol{x}^{*}$
is bounded, it implies that $\mathbb{E}\left\{ \underset{\boldsymbol{x}^{*}\in\mathcal{N}_{0}}{\min}\left\Vert \left(\boldsymbol{\bar{y}}_{T},\boldsymbol{\bar{z}}_{T}\right)-\boldsymbol{x}^{*}\right\Vert \right\} \rightarrow0$
as $T\rightarrow\infty$. Since $u$ is continuous, 
$u\left(\boldsymbol{\bar{y}}_{T},\boldsymbol{\bar{z}}_{T}\right)\rightarrow v$ in probability as $T\rightarrow\infty$
 where $v$ is the value of the game. Since $u$ is bounded, $u\left(\boldsymbol{\bar{y}}_{T},\boldsymbol{\bar{z}}_{T}\right)\rightarrow v$ in $L^{1}$ as $T\rightarrow\infty$.

\section{The Set of Approximate Equilibria Approaches the Set of Equilibria}

The following lemma shows that for a given game, the sets of $\varepsilon$-NE
and $\varepsilon$-CCE converge to the sets of NE and CCE, respectively,
when $\varepsilon\rightarrow0$. It allows us to convert convergence
to $\mathcal{N}_{\varepsilon}$ and $\mathcal{C}_{\varepsilon}$ for
each $\varepsilon>0$ to convergence to $\mathcal{N}_{0}$ and $\mathcal{C}_{0}$.
\begin{lem}
\label{lem:epsilonNash}Let $d_{\mathcal{N}}\left(\varepsilon\right)=\underset{\boldsymbol{x}_{\varepsilon}\in\mathcal{N}_{\varepsilon}}{\max}\underset{\boldsymbol{x}^{*}\in\mathcal{N}_{0}}{\min}\left\Vert \boldsymbol{x}_{\varepsilon}-\boldsymbol{x}^{*}\right\Vert $
and $d_{\mathcal{C}}\left(\varepsilon\right)=\underset{\rho_{\varepsilon}\in\mathcal{C}_{\varepsilon}}{\max}\underset{\rho^{*}\in\mathcal{C}_{0}}{\min}\left\Vert \rho_{\varepsilon}-\rho^{*}\right\Vert .$
Then $d_{\mathcal{N}}\left(\varepsilon\right)\rightarrow0$ and $d_{\mathcal{C}}\left(\varepsilon\right)\rightarrow0$
as $\varepsilon\rightarrow0$.
\end{lem}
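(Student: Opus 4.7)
The plan is a standard compactness argument that I will run in parallel for $\mathcal{N}$ and $\mathcal{C}$, since they share the same structural ingredients. The three key observations that make it work are: (i) both families are monotone in $\varepsilon$ (i.e.\ $\mathcal{N}_{\varepsilon'} \subseteq \mathcal{N}_\varepsilon$ and $\mathcal{C}_{\varepsilon'} \subseteq \mathcal{C}_\varepsilon$ for $\varepsilon' \leq \varepsilon$); (ii) their intersections satisfy $\bigcap_{\varepsilon > 0}\mathcal{N}_\varepsilon = \mathcal{N}_0$ and $\bigcap_{\varepsilon > 0}\mathcal{C}_\varepsilon = \mathcal{C}_0$, which is immediate from the defining inequalities by letting $\varepsilon \downarrow 0$; and (iii) each $\mathcal{N}_\varepsilon$ and $\mathcal{C}_\varepsilon$ is a closed subset of a compact ambient space.

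To establish (iii) for $\mathcal{N}_\varepsilon$, note that $\mathcal{A}\times\mathcal{A}$ is compact by assumption. The maps $(\boldsymbol{y}^*,\boldsymbol{z}^*) \mapsto \min_{\boldsymbol{y}\in\mathcal{A}} u(\boldsymbol{y},\boldsymbol{z}^*)$ and $(\boldsymbol{y}^*,\boldsymbol{z}^*) \mapsto \max_{\boldsymbol{z}\in\mathcal{A}} u(\boldsymbol{y}^*,\boldsymbol{z})$ are continuous by Berge's maximum theorem (since $u$ is continuous and $\mathcal{A}$ is compact), so $\mathcal{N}_\varepsilon$ is the intersection of two closed sublevel sets. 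For $\mathcal{C}_\varepsilon$, the ambient space $\Delta(\mathcal{A}^N)$ is compact and metrizable in the weak-$*$ topology (Prokhorov, since $\mathcal{A}^N$ is a compact metric space; in the finite case this is just a simplex). For each $n$ and each fixed $a_n$, continuity of the bounded function $u_n$ makes $\rho \mapsto \mathbb{E}^{\boldsymbol{a}^*\sim\rho}\{u_n(a_n,\boldsymbol{a}^*_{-n})\}$ weak-$*$ continuous, and then Berge again makes $\rho \mapsto \max_{a_n\in\mathcal{A}}\mathbb{E}^{\boldsymbol{a}^*\sim\rho}\{u_n(a_n,\boldsymbol{a}^*_{-n})\}$ continuous, so $\mathcal{C}_\varepsilon$ is closed.

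With (i)--(iii) in hand, I run the contradiction argument. Suppose $d_{\mathcal{N}}(\varepsilon) \not\to 0$; then there exist $\delta>0$, a sequence $\varepsilon_k \downarrow 0$, and points $\boldsymbol{x}_k \in \mathcal{N}_{\varepsilon_k}$ with $\operatorname{dist}(\boldsymbol{x}_k,\mathcal{N}_0)\geq \delta$. By compactness, pass to a subsequence $\boldsymbol{x}_{k_j} \to \boldsymbol{x}^*$. For any fixed $\varepsilon > 0$, nesting gives $\boldsymbol{x}_{k_j}\in\mathcal{N}_\varepsilon$ for all large $j$, and closedness of $\mathcal{N}_\varepsilon$ then yields $\boldsymbol{x}^* \in \mathcal{N}_\varepsilon$. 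Taking the intersection over $\varepsilon > 0$ and using (ii), $\boldsymbol{x}^* \in \mathcal{N}_0$. But $\operatorname{dist}(\cdot,\mathcal{N}_0)$ is $1$-Lipschitz, so $\operatorname{dist}(\boldsymbol{x}^*,\mathcal{N}_0)\geq \delta$, a contradiction. The same argument, run in $(\Delta(\mathcal{A}^N), d_{\mathrm{w}^*})$ for a compatible metric $d_{\mathrm{w}^*}$, handles $d_{\mathcal{C}}(\varepsilon)\to 0$.

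The only place I expect to spend care is the continuous case of $\mathcal{C}$: verifying that the distance involved in $d_{\mathcal{C}}$ is consistent with the metric inducing weak-$*$ convergence used in the compactness step. Since the theorems invoking this lemma only need the qualitative statement ``$\varepsilon$-CCE approach $\mathcal{C}_0$ in whatever distance is used throughout the paper,'' choosing $d_{\mathrm{w}^*}$ as the metric (e.g.\ Lévy--Prokhorov or a bounded-Lipschitz metric) and using the same metric in the definition of $d_{\mathcal{C}}(\varepsilon)$ resolves this. Everything else is a routine nested-compact-sets convergence argument and requires no real calculation.
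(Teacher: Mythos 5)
Your proposal is correct and follows essentially the same route as the paper: both arguments rest on the nestedness of the $\varepsilon$-equilibrium sets, their closedness/compactness in a compact ambient space (weak-$*$ for the measure case, via Prokhorov), the identity $\bigcap_{\varepsilon>0}\mathcal{N}_{\varepsilon}=\mathcal{N}_{0}$ (resp. for $\mathcal{C}$), and extraction of a convergent subsequence whose limit must lie in the zero-$\varepsilon$ set. The only differences are cosmetic — you phrase the final step as a contradiction where the paper computes the monotone limit of $d_{\mathcal{N}}\left(\frac{1}{n}\right)$ directly, and you invoke Berge's maximum theorem where the paper writes $\mathcal{N}_{\varepsilon}$ as an intersection of preimages of closed intervals.
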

\begin{proof}
Let $A_{\varepsilon}\left(\boldsymbol{y}\right)=\left\{ \boldsymbol{x}\in\mathcal{A}^{N}\,|\,u_{i}\left(\boldsymbol{x}\right)\geq u_{i}\left(\boldsymbol{y}_{i},\boldsymbol{x}_{-i}\right)-\varepsilon\,,\forall i\right\} $.
This is a compact set since $A_{\varepsilon}\left(\boldsymbol{y}\right)=\bigcap_{i}f_{i,\boldsymbol{y}}^{-1}\left(\left[-\varepsilon,u_{i,\max}\right]\right)$
for the continuous $f_{i,\boldsymbol{y}}\left(\boldsymbol{x}\right)=u_{i}\left(\boldsymbol{x}\right)-u_{i}\left(\boldsymbol{y}_{i},\boldsymbol{x}_{-i}\right)$
and $u_{i,\max}=\underset{\boldsymbol{x}\in\mathcal{A}^{N}}{\max}f_{i,\boldsymbol{y}}\left(\boldsymbol{x}\right)$.
Now note that $\mathcal{N_{\varepsilon}}=\bigcap_{\boldsymbol{y}\in\mathcal{A}^{N}}A_{\varepsilon}\left(\boldsymbol{y}\right)$,
since $\mathcal{N_{\varepsilon}}$ only includes action profiles $\boldsymbol{x}$ where no deviation $\boldsymbol{y}$ gives any player more than $\varepsilon$ gain.
 Hence $\mathcal{N}_{\varepsilon}$ is compact for all
$\varepsilon\geq0$. 
Since $\mathcal{N}_{\frac{1}{n}}$ and $\mathcal{N}_{0}$ are compact,
we can define the sequence $\left\{ \tilde{\boldsymbol{x}}_{n}\right\} $
such that
\begin{equation}
\tilde{\boldsymbol{x}}_{n}\in\underset{\boldsymbol{x}_{\varepsilon}\in\mathcal{N}_{\frac{1}{n}}}{\arg\max}\underset{\boldsymbol{x}^{*}\in\mathcal{N}_{0}}{\min}\left\Vert \boldsymbol{x}_{\varepsilon}-\boldsymbol{x}^{*}\right\Vert. \label{eq:67}
\end{equation}
Since $\mathcal{A}^{N}$ is compact then the infinite sequence $\tilde{\boldsymbol{x}}_{n}$
has a subsequence $\tilde{\boldsymbol{x}}_{n_{k}}$ that converges
to a point $\tilde{\boldsymbol{x}}\in\mathcal{A}^{N}$ (i.e., Bolzano--Weierstrass
Theorem, see \citet{simmons1963introduction}). Since $\mathcal{N}_{\varepsilon_{1}}\subseteq\mathcal{N}_{\varepsilon_{2}}$
if $\varepsilon_{2}\geq\varepsilon_{1}$,  we must have $\tilde{\boldsymbol{x}}\in\bigcap_{n=1}^{\infty}\mathcal{N}_{\frac{1}{n}}$
so $\underset{\boldsymbol{y}_{i}}{\max}\: u_{i}\left(\boldsymbol{y}_{i},\boldsymbol{\tilde{\boldsymbol{x}}}_{-i}\right)-u_{i}\left(\tilde{\boldsymbol{x}}\right)\leq\frac{1}{n}$
for all $n$, implying that $\underset{\boldsymbol{y}_{i}}{\max}\: u_{i}\left(\boldsymbol{y}_{i},\boldsymbol{\tilde{\boldsymbol{x}}}_{-i}\right)\leq u_{i}\left(\tilde{\boldsymbol{x}}\right)$
and $\tilde{\boldsymbol{x}}\in\mathcal{N}_{0}$. The fact that  $\mathcal{N}_{\varepsilon_{1}}\subseteq\mathcal{N}_{\varepsilon_{2}}$
if $\varepsilon_{2}\geq\varepsilon_{1}$ also implies that $d_{\mathcal{N}}\left(\varepsilon\right)$
is non-increasing. Additionally, $d_{\mathcal{N}}\left(\varepsilon\right)$
is bounded since $\mathcal{A}^{N}$ is bounded. Hence, $\underset{n\rightarrow\infty}{\lim}d_{\mathcal{N}}\left(\frac{1}{n}\right)$
exists. However, since $\tilde{\boldsymbol{x}}_{n_{k}}\in\mathcal{N}_{\frac{1}{n_{k}}}$
and $\tilde{\boldsymbol{x}}\in\mathcal{N}_{0}$ then 
\begin{equation}
d_{\mathcal{N}}\left(\frac{1}{n_{k}}\right)=\underset{\boldsymbol{x}_{\varepsilon}\in\mathcal{N}_{\frac{1}{n_{k}}}}{\max}\underset{\boldsymbol{x}^{*}\in\mathcal{N}_{0}}{\min}\left\Vert \boldsymbol{x}_{\varepsilon}-\boldsymbol{x}^{*}\right\Vert =\underset{\boldsymbol{x}^{*}\in\mathcal{N}_{0}}{\min}\left\Vert \tilde{\boldsymbol{x}}_{n_{k}}-\boldsymbol{x}^{*}\right\Vert \leq\left\Vert \tilde{\boldsymbol{x}}_{n_{k}}-\tilde{\boldsymbol{x}}\right\Vert \label{eq:70-1}
\end{equation}
so $\underset{k\rightarrow\infty}{\lim}d_{\mathcal{N}}\left(\frac{1}{n_{k}}\right)=0$
since $\tilde{\boldsymbol{x}}_{n_{k}}\rightarrow\tilde{\boldsymbol{x}}$
. Hence, we must have $\underset{n\rightarrow\infty}{\lim}d_{\mathcal{N}}\left(\frac{1}{n}\right)=0$
and $\underset{\varepsilon\rightarrow0}{\lim}d_{\mathcal{N}}\left(\varepsilon\right)=0$. 

Let $\mathcal{P}\left(\mathcal{A}^{N}\right)$ be the set of all Borel probability
measures over $\mathcal{A}^{N}$, equipped with the weak-{*} topology
(see \citet{simmons1963introduction}). As discussed in Section \ref{sec:Games}, a CCE is a CE when all of the departure functions are constant and therefore continuous. Hence, by simply replacing the zero constant of the half-space with $-\varepsilon$ in the last line of the proof of \citet[Theorem 9, Page 194]{stoltz2007learning}, we conclude from their Theorem 9 that $\mathcal{C}_{\varepsilon}$ is compact for all $\varepsilon\geq0$.
Define $\tilde{\rho}_{n}\in\underset{\rho{}_{\varepsilon}\in\mathcal{C}_{\frac{1}{n}}}{\arg\max}\underset{\rho^{*}\in\mathcal{C}_{0}}{\min}\left\Vert \rho_{\varepsilon}-\rho^{*}\right\Vert $. Then,  Prokhorov's Theorem, given as Proposition 8 in \cite[Page 194]{stoltz2007learning}, states that there exists a subsequence $\tilde{\rho}_{n_{k}}$ that converges to a point $\tilde{\rho}\in\mathcal{\mathcal{P}\left(\mathcal{A}^{N}\right)}$.
Since $\mathcal{C}_{\varepsilon_{1}}\subseteq\mathcal{C}_{\varepsilon_{2}}$
if $\varepsilon_{2}\geq\varepsilon_{1}$ then $\tilde{\rho}\in\bigcap_{n=1}^{\infty}\mathcal{C}_{\frac{1}{n}}$
and therefore $\mathbb{E}^{\boldsymbol{x}^{*}\sim\tilde{\rho}}\left\{ \underset{\boldsymbol{y}_{i}}{\max}u_{i}\left(\boldsymbol{y}_{i},\boldsymbol{x}_{-i}^{*}\right)-u_{i}\left(\boldsymbol{x}^{*}\right)\right\} \leq 0$
so $\tilde{\rho}\in\mathcal{C}_{0}$. Following the same argument
as in \eqref{eq:70-1} we conclude that $\underset{\varepsilon\rightarrow0}{\lim}d_{\mathcal{C}}\left(\varepsilon\right)=0$.
\end{proof}

\section{Upper Bound on the Effect of the Delays on the Regret}
\begin{lem}
\label{Counting Delays Lemma}Let $\left\{ \eta_{t}\right\} $ be
a non-increasing step-size sequence. Let $\left\{ d_{t}\right\} $ be a delay 
such that the cost from round $t$ is received at round $t+d_{t}$. Let $\mathcal{S}_{t}$ be the
set of feedback samples received and used at round $t$. Define the
set $\mathcal{\mathcal{M}^{*}}$ of all samples that are not received
and used before round $T$. Then
\begin{equation}
\sum_{t=1}^{T}\sum_{s\in\mathcal{S}_{t}}\eta_{s}\left(\sum_{r=s}^{t-1}\sum_{q\in\mathcal{S}_{r}}\eta_{q}+\sum_{q\in\mathcal{S}_{t},q<s}\eta_{q}\right)\leq2\sum_{t\notin\mathcal{\mathcal{\mathcal{M}^{*}}}}\eta_{t}^{2}d_{t}.\label{eq:73}
\end{equation}
\end{lem}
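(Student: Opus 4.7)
The plan is to rewrite the left-hand side as an ordered-pair sum $\sum_{(s,q)}\eta_s\eta_q$ over pairs satisfying a geometric constraint, and then split by the sign of $q-s$ and bound each case by elementary counting plus monotonicity. Let $t_s$ denote the unique round at which sample $s$ is used (so $s\in\mathcal{S}_{t_s}$ and $d_s=t_s-s$ whenever $s\notin\mathcal{M}^{*}$). Reading the inner parenthesis of the statement, it equals $\sum_{q\ne s} \eta_q\,\mathbf{1}[C_{s,q}]$, where $C_{s,q}$ is the event that $q$ is used strictly before $s$ in processing order while $s$'s feedback is still in flight, namely
\begin{equation*}
C_{s,q}\ :\ \bigl(s\le t_q\le t_s-1\bigr)\ \ \text{or}\ \ \bigl(t_q=t_s\text{ and }q<s\bigr).
\end{equation*}
Only indices that are received and used appear in any $\mathcal{S}_{t}$, so every term implicitly restricts to $s,q\notin\mathcal{M}^{*}$.

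Next I would partition contributing pairs by the sign of $q-s$. In the case $q>s$, the tie-breaking branch is impossible, so $C_{s,q}$ forces $t_q\le t_s-1$; combined with $q\le t_q$ this gives $s<q\le t_q\le t_s-1$. Thus for fixed $s\notin\mathcal{M}^{*}$ the eligible $q$ lie in $\{s+1,\ldots,t_s-1\}$, a set of cardinality at most $d_s-1\le d_s$. Since $\{\eta_t\}$ is non-increasing, $\eta_q\le\eta_s$ and hence $\eta_s\eta_q\le\eta_s^{2}$, so the contribution of this case is at most $\sum_{s\notin\mathcal{M}^{*}}d_s\eta_s^{2}$. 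In the case $q<s$, $C_{s,q}$ collapses to $s\le t_q\le t_s$; in particular $s\le t_q$, so for fixed $q\notin\mathcal{M}^{*}$ the eligible $s$ lie in $\{q+1,\ldots,t_q\}$, a set of cardinality at most $d_q$. Monotonicity now points the other way, $\eta_s\le\eta_q$, giving $\eta_s\eta_q\le\eta_q^{2}$, so after swapping the order of summation the contribution is at most $\sum_{q\notin\mathcal{M}^{*}}d_q\eta_q^{2}$. Summing the two cases yields the required bound $2\sum_{t\notin\mathcal{M}^{*}}\eta_t^{2}d_t$.

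The main obstacle is conceptual rather than computational: it is the initial rewriting of the triple-nested sum as "$\sum_{(s,q)}\eta_s\eta_q$ over pairs where $q$ is used while $s$'s cost is still pending," together with the clean description of this set in terms of $t_s$ and $t_q$. Once that dictionary is in place, everything reduces to counting integers in an interval of length $d_s$ (respectively $d_q$), and the factor of $2$ in the final bound is simply the bookkeeping cost of applying $\eta$-monotonicity to whichever of $s,q$ has the larger index in each of the two symmetric cases.
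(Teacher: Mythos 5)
Your proof is correct and follows essentially the same route as the paper's: the paper likewise splits the inner discounted count $Q_{s,t}$ into contributions from $q\geq s$ (at most $d_{s}$ terms, each bounded by $\eta_{s}^{2}$ via monotonicity) and from $q<s$ (each round $q$ feeding at most $d_{q}$ terms, each bounded by $\eta_{q}^{2}$), then sums the two cases to get the factor of $2$. Your version merely makes the bookkeeping more explicit by introducing $t_{s}$ and the pair condition $C_{s,q}$.
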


\begin{proof}
Up to the weights $\eta_q$, the quantity inside the parentheses in \eqref{eq:73}:
\begin{equation}
    Q_{s,t}\triangleq\sum_{r=s}^{t-1}\sum_{q\in\mathcal{S}_{r}}\eta_{q}+\sum_{q\in\mathcal{S}_{t},q<s}\eta_{q}\label{eq:68b}
\end{equation}
counts of the number of feedback samples received and used between round $s$ and round $t$ such that $s\in\mathcal{S}_{t}$, before the feedback from round $s$ was used. To prove \eqref{eq:73}, we want to upper bound $\sum_{t=1}^{T}\sum_{s\in\mathcal{S}_{t}}\eta_{s}Q_{s,t}$ for all possible delay sequences $\left\{ d_{t}\right\}$. A sample can be received and used between round $s$ and round $t$ if it belongs to one of two types:

\begin{enumerate}
    \item The first type is a feedback sample from $q\geq s$ that is received and used before the feedback from round s is used. There are a maximum of $d_{s}$ feedback samples of this type. We denote the contribution of these samples to $Q_{s,t}$ by $Q_{s,t}^{1}$. Each feedback sample can contribute $\eta_{q}\leq\eta_{s}$ with $q\geq s$ (since $\eta_{t}$ is non-increasing) to $Q_{s,t}^{1}$ for $s\in\mathcal{S}_{t}$. We over count them by giving each $Q_{s,t}^{1}$ term all of its $d_{s}$ possible samples of this type. Summing over all $t$:
    \begin{equation}
       \sum_{t=1}^{T}\sum_{s\in\mathcal{S}_{t}}\eta_{s}Q_{s,t}^{1}\leq\sum_{t=1}^{T}\sum_{s\in\mathcal{S}_{t}}\eta_{s}^{2}d_{s}=\sum_{t\notin\mathcal{\mathcal{\mathcal{\mathcal{M^{*}}}}}}\eta_{s}^{2}d_{s}.\label{eq:67a}
    \end{equation}
    
    \item The second type is a feedback sample from $q<s$ that is received and used before $s$ is used. We denote the contribution of these samples to $Q_{s,t}$ by $Q_{s,t}^{2}$. The samples from round $q$ can contribute to a maximum of $d_{q}$ different $Q_{s,t}^{2}$ terms, all with $s\geq q$. This follows simply because the feedback sample of $q$ is not received before $q+d_{q}$. Let $\Gamma_{q}$ be the set of rounds $s$ such that the samples from round $q$ contribute to $Q_{s,t}^{2}$. Then
    \begin{equation}    \sum_{t=1}^{T}\sum_{s\in\mathcal{S}_{t}}\eta_{s}Q_{s,t}^{2}\underset{\left(a\right)}{=}\sum_{q\notin\mathcal{\mathcal{\mathcal{\mathcal{M^{*}}}}}}\sum_{s\in\Gamma_{q}}\eta_{s}\eta_{q}\underset{\left(b\right)}{\leq}\sum_{q\notin\mathcal{\mathcal{\mathcal{\mathcal{M^{*}}}}}}\eta_{q}^{2}\left|\Gamma_{q}\right|\leq\sum_{q\notin\mathcal{\mathcal{\mathcal{\mathcal{M^{*}}}}}}\eta_{q}^{2}d_{q}\label{eq:68a}
    \end{equation}
    where (a) follows since only rounds $q$ that their feedback is received and used sometime before round $T$ are counted in $Q_{s,t}^{2}$ for some $s,t$. Inequality (b) uses $\eta_{s}\leq\eta_{q}$ since $\eta_{t}$ is non-increasing and $s\geq q$ for all $s\in\Gamma_{q}$.
\end{enumerate}

Summing the contribution to \eqref{eq:68b} from these two possible types of samples, we have $Q_{s,t}=Q_{s,t}^{1}+Q_{s,t}^{2}$ so by summing \eqref{eq:67a} and \eqref{eq:68a} we obtain \eqref{eq:73}.
\end{proof}

\section{Proof of Lemma \ref{WeightedRegretConvex}: Weighted-Regret of FKM with Delays}

Recall that $\mathcal{S}_{t}$ is the
set of feedback samples received and used at round $t$, and that $\mathcal{M}=\left\{ t\,|\,t+d_{t}>T\right\}$ is the set of samples that are not received before round $T$. Let $\boldsymbol{a}^{*}=\underset{\boldsymbol{a}\in\mathcal{K}}{\arg\min}\sum_{t=1}^{T}\eta_{t}l_{t}\left(\boldsymbol{a}\right)$ and note that $\boldsymbol{a}^{*}$ is random for an adaptive adversary.
We have
\begin{align*}
\sum_{t=1}^{T}\eta_{t}\mathbb{E}\left\{ l_{t}\left(\boldsymbol{a}_{t}\right)-l_{t}\left(\boldsymbol{a}^{*}\right)\right\} & =\mathbb{E}\left\{ \sum_{t=1}^{T}\sum_{s\in\mathcal{S}_{t}}\eta_{s}\left(l_{s}\left(\boldsymbol{a}_{s}\right)-l_{s}\left(\boldsymbol{a}^{*}\right)\right)\right\} +\mathbb{E}\left\{ \sum_{t\in\mathcal{M}}\eta_{t}\left(l_{t}\left(\boldsymbol{a}_{t}\right)-l_{t}\left(\boldsymbol{a}^{*}\right)\right)\right\} \\
& \underset{\left(a\right)}{\leq}\mathbb{E}\left\{ \sum_{t=1}^{T}\sum_{s\in\mathcal{S}_{t}}\eta_{s}\left(l_{s}\left(\boldsymbol{a}_{s}\right)-l_{s}\left(\boldsymbol{a}^{*}\right)\right)\right\} +\sum_{t\in\mathcal{M}}\eta_{t}\\
& \underset{\left(b\right)}{\leq}\mathbb{E}\left\{ \sum_{t=1}^{T}\sum_{s\in\mathcal{S}_{t}}\eta_{s}\left(l_{s}\left(\boldsymbol{x}_{s}\right)-l_{s}\left(\boldsymbol{a}^{*}\right)\right)\right\} +\sum_{t\in\mathcal{M}}\eta_{t}+\delta L\sum_{t\notin\mathcal{M}}\eta_{t}\numberthis \label{eq:70}
\end{align*}
where (a) uses $l_{t}\left(\boldsymbol{a}\right)\in\left[0,1\right]$
and (b) uses $\left|l_{s}\left(\boldsymbol{x}_{s}\right)-l_{s}\left(\boldsymbol{a}_{s}\right)\right|\leq L\left\Vert \boldsymbol{x}_{s}-\boldsymbol{a}_{s}\right\Vert \leq\delta L.$

Define $s_{-},s_{+}$ as the step a moment before and a moment after the algorithm uses the feedback from round $s$, which updates the action from $\boldsymbol{a}_{s_{-}}$ to $\boldsymbol{a}_{s_{+}}$. Both $s_{-}$
and $s_{+}$ are algorithm update steps that take place in round $t$ of the game if $s\in\mathcal{S}_{t}$.
Define the projection $\boldsymbol{a}_{\delta}^{*}=\prod_{\mathcal{K}_{\delta}}\left(\boldsymbol{a}^{*}\right)$, where $\mathcal{K}_{\delta}=\left\{ \boldsymbol{x}\,|\,\frac{1}{1-\delta}\boldsymbol{x}\in\mathcal{K}\right\}$ as defined in Algorithm \ref{alg:FKM}.
Recall that $\hat{l}\left(\boldsymbol{x}\right)=\mathbb{E}^{\boldsymbol{u}\in\mathbb{S}_{1}}\left\{ l\left(\boldsymbol{x}+\delta\boldsymbol{u}\right)\right\}$. We bound the first term in \eqref{eq:70} as follows
\begin{align*}
\sum_{t=1}^{T}\sum_{s\in\mathcal{S}_{t}}\eta_{s}\left(l_{s}\left(\boldsymbol{x}_{s}\right)-l_{s}\left(\boldsymbol{a}^{*}\right)\right) & \underset{\left(a\right)}{\leq}L\left|\mathcal{K}\right|\delta\sum_{t\notin\mathcal{M}}\eta_{t}+\sum_{t=1}^{T}\sum_{s\in\mathcal{S}_{t}}\eta_{s}\left(l_{s}\left(\boldsymbol{x}_{s}\right)-l_{s}\left(\boldsymbol{a}_{\delta}^{*}\right)\right)\\ &  \underset{\left(b\right)}{\leq}\left(2+\left|\mathcal{K}\right|\right)L\delta\sum_{t\notin\mathcal{M}}\eta_{t}+\sum_{t=1}^{T}\sum_{s\in\mathcal{S}_{t}}\eta_{s}\left(\hat{l}_{s}\left(\boldsymbol{x}_{s}\right)-\hat{l}_{s}\left(\boldsymbol{a}_{\delta}^{*}\right)\right)\numberthis\label{eq:80}
\end{align*}
where (a) follows from the Lipschitz continuity of $l_{s}$ since $\left\Vert \boldsymbol{a}^{*}-\prod_{\mathcal{K}_{\delta}}\left(\boldsymbol{a}^{*}\right)\right\Vert \leq\left\Vert \boldsymbol{a}^{*}-\left(1-\delta\right)\boldsymbol{a}^{*}\right\Vert \leq\delta\left|\mathcal{K}\right|$.
Inequality (b) uses the Lipschitz continuity again, this time on $l_{s}\left(\boldsymbol{x}_{s}\right)$ and $l_{s}\left(\boldsymbol{a}_{\delta}^{*}\right)$:
\begin{equation}
\hat{l}_{s}\left(\boldsymbol{x}\right)-l_{s}\left(\boldsymbol{x}\right)=\mathbb{E}^{\boldsymbol{u}\in\mathbb{S}_1}\left\{ l_{s}\left(\boldsymbol{x}+\delta\boldsymbol{u}\right)-l_{s}\left(\boldsymbol{x}\right)\right\} \leq\delta L\mathbb{E}^{\boldsymbol{u}\in\mathbb{S}_{1}}\left\{ \left\Vert \boldsymbol{u}\right\Vert \right\} =\delta L.\label{eq:81}
\end{equation}
Now recall that $g_{t}=\frac{n}{\delta}l_{t}\left(\boldsymbol{x}_{t}+\delta\boldsymbol{u}_{t}\right)\boldsymbol{u}_{t}$
where $\boldsymbol{u}_{t}$ is on the unit sphere $\mathbb{S}_{1}$,
and define
\begin{equation}
h_{t}\left(\boldsymbol{x}\right)\triangleq\hat{l}_{t}\left(\boldsymbol{x}\right)+\left(\boldsymbol{g}_{t}-\nabla\hat{l}_{t}\left(\boldsymbol{x}_{t}\right)\right)^{T}\boldsymbol{x}\label{eq:82}
\end{equation}
for which $\nabla h_{t}\left(\boldsymbol{x}_{t}\right)=\boldsymbol{g}_{t}$
and $\mathbb{E}\left\{ h_{t}\left(\boldsymbol{x}\right)\right\} =\mathbb{E}\left\{ \hat{l}_{t}\left(\boldsymbol{x}\right)\right\} $ for all $\boldsymbol{x}$, and also $\mathbb{E}\left\{ h_{t}\left(\boldsymbol{x}_{t}\right)\right\} =\mathbb{E}\left\{ \hat{l}_{t}\left(\boldsymbol{x}_{t}\right)\right\} $ since $\boldsymbol{u}_{t}$ is independent of $\boldsymbol{x}_{t}$ and $l_{t}$
(see Lemma \ref{SPSA}). Next note that 
\begin{multline}
\left\Vert \boldsymbol{x}_{s_{+}}-\boldsymbol{a}_{\delta}^{*}\right\Vert ^{2}=\left\Vert \prod_{\mathcal{K}_{\delta}}\left(\boldsymbol{x}_{s_{-}}-\eta_{s}\nabla h_{s}\left(\boldsymbol{x}_{s}\right)\right)-\boldsymbol{a}_{\delta}^{*}\right\Vert ^{2}\underset{\left(a\right)}{\leq}\left\Vert \left(\boldsymbol{x}_{s_{-}}-\boldsymbol{a}_{\delta}^{*}\right)-\eta_{s}\nabla h_{s}\left(\boldsymbol{x}_{s}\right)\right\Vert ^{2}\\
=\left\Vert \boldsymbol{x}_{s_{-}}-\boldsymbol{a}_{\delta}^{*}\right\Vert ^{2}-2\eta_{s}\left\langle \boldsymbol{x}_{s_{-}}-\boldsymbol{a}_{\delta}^{*},\nabla h_{s}\left(\boldsymbol{x}_{s}\right)\right\rangle +\eta_{s}^{2}\left\Vert \nabla h_{s}\left(\boldsymbol{x}_{s}\right)\right\Vert ^{2}=\left\Vert \boldsymbol{x}_{s_{-}}-\boldsymbol{a}_{\delta}^{*}\right\Vert ^{2}+\eta_{s}^{2}\left\Vert \nabla h_{s}\left(\boldsymbol{x}_{s}\right)\right\Vert ^{2}\\
-2\eta_{s}\left\langle \boldsymbol{x}_{s}-\boldsymbol{a}_{\delta}^{*},\nabla h_{s}\left(\boldsymbol{x}_{s}\right)\right\rangle -2\eta_{s}\left\langle \sum_{r=s}^{t-1}\sum_{q\in\mathcal{S}_{r}}\left(\boldsymbol{x}_{q_{+}}-\boldsymbol{x}_{q_{-}}\right)+\sum_{q\in\mathcal{S}_{t},q<s}\left(\boldsymbol{x}_{q_{+}}-\boldsymbol{x}_{q_{-}}\right),\nabla h_{s}\left(\boldsymbol{x}_{s}\right)\right\rangle \\
\underset{\left(b\right)}{\leq}\left\Vert \boldsymbol{x}_{s_{-}}-\boldsymbol{a}_{\delta}^{*}\right\Vert ^{2}-2\eta_{s}\left(h_{s}\left(\boldsymbol{x}_{s}\right)-h_{s}\left(\boldsymbol{a}_{\delta}^{*}\right)\right)+\eta_{s}^{2}\left\Vert \boldsymbol{g}_{s}\right\Vert ^{2}\\
-2\eta_{s}\left\langle \sum_{r=s}^{t-1}\sum_{q\in\mathcal{S}_{r}}\left(\boldsymbol{x}_{q_{+}}-\boldsymbol{x}_{q_{-}}\right)+\sum_{q\in\mathcal{S}_{t},q<s}\left(\boldsymbol{x}_{q_{+}}-\boldsymbol{x}_{q_{-}}\right),\nabla h_{s}\left(\boldsymbol{x}_{s}\right)\right\rangle \label{eq:72a}
\end{multline}
where (a) follows since $\prod_{\mathcal{K}_{\delta}}$ is the projection
of $\boldsymbol{x}_{s_{-}}-\eta_{s}\nabla h_{s}\left(\boldsymbol{x}_{s}\right)$
onto the convex $\mathcal{K}_{\delta}$. Inequality
(b) uses the convexity and differentiability of $h_{s}$ on $\mathcal{K}_{\delta}$, so
$h_{s}\left(\boldsymbol{a}_{\delta}^{*}\right)\geq h_{s}\left(\boldsymbol{x}_{s}\right)+\left\langle \boldsymbol{a}_{\delta}^{*}-\boldsymbol{x}_{s},\nabla h_{s}\left(\boldsymbol{x}_{s}\right)\right\rangle $. 

\subsection{Adaptive Adversary}

First note that for any $\boldsymbol{x}\in\mathcal{K}$
\begin{align*}
\left|\sum_{t\notin\mathcal{M}}\eta_{t}\left(\hat{l}_{t}\left(\boldsymbol{x}\right)-h_{t}\left(\boldsymbol{x}\right)\right)\right| & \underset{\left(a\right)}{=}\left|\sum_{t\notin\mathcal{M}}\eta_{t}\left\langle \boldsymbol{g}_{t}-\nabla\hat{l}_{t}\left(\boldsymbol{x}_{t}\right),\boldsymbol{x}\right\rangle \right| \\ & \leq\left\Vert \boldsymbol{x}\right\Vert \left\Vert \sum_{t\notin\mathcal{M}}\eta_{t}\left(\boldsymbol{g}_{t}-\nabla\hat{l}_{t}\left(\boldsymbol{x}_{t}\right)\right)\right\Vert \leq\left|\mathcal{K}\right|\left\Vert \sum_{t\notin\mathcal{M}}\eta_{t}\left(\boldsymbol{g}_{t}-\nabla\hat{l}_{t}\left(\boldsymbol{x}_{t}\right)\right)\right\Vert\numberthis\label{eq:78a}
\end{align*}
where (a) uses \eqref{eq:82}. The expectation of the last term can be bounded as follows 
\begin{multline}
\mathbb{E}^{2}\left\{ \left\Vert \sum_{t\notin\mathcal{M}}\eta_{t}\left(\boldsymbol{g}_{t}-\nabla\hat{l}_{t}\left(\boldsymbol{x}_{t}\right)\right)\right\Vert \right\} \leq\mathbb{E}\left\{ \left\Vert \sum_{t\notin\mathcal{M}}\eta_{t}\left(\boldsymbol{g}_{t}-\nabla\hat{l}_{t}\left(\boldsymbol{x}_{t}\right)\right)\right\Vert ^{2}\right\} =\\\sum_{t\notin\mathcal{M}}\eta_{t}^{2}\mathbb{E}\left\{ \left\Vert \boldsymbol{g}_{t}-\nabla\hat{l}_{t}\left(\boldsymbol{x}_{t}\right)\right\Vert ^{2}\right\} +\sum_{t_{1}\notin\mathcal{M}}\sum_{t_{1}\neq t_{2}}\eta_{t_{1}}\eta_{t_{2}}\mathbb{E}\left\{ \left\langle \boldsymbol{g}_{t_{1}}-\nabla\hat{l}_{t_{1}}\left(\boldsymbol{x}_{t_{1}}\right),\boldsymbol{g}_{t_{2}}-\nabla\hat{l}_{t_{2}}\left(\boldsymbol{x}_{t_{2}}\right)\right\rangle \right\} \underset{\left(a\right)}{\leq}\\2\sum_{t\notin\mathcal{M}}\eta_{t}^{2}\mathbb{E}\left\{ \left\Vert \boldsymbol{g}_{t}\right\Vert ^{2}+\left\Vert \nabla\hat{l}_{t}\left(\boldsymbol{x}_{t}\right)\right\Vert ^{2}\right\} \underset{\left(b\right)}\leq2\sum_{t\notin\mathcal{M}}\eta_{t}^{2}\left(\frac{n^{2}}{\delta^{2}}+L^{2}\right)\label{eq:79a}
\end{multline}
where (a) uses that $\left\langle \boldsymbol{g}_{t_{1}}-\nabla\hat{l}_{t_{1}}\left(\boldsymbol{x}_{t_{1}}\right),\mathbb{E}\left\{ \boldsymbol{g}_{t_{2}}-\nabla\hat{l}_{t_{2}}\left(\boldsymbol{x}_{t_{2}}\right)\,|\,\sigma\left(\mathcal{F}_{t_{2}},\boldsymbol{u}_{t_{1}}\right)\right\} \right\rangle =0$
for all $t_{1}<t_{2}$, which follows from Lemma \ref{SPSA}. Inequality (b) uses that $\hat{l}_{t}$ is differentiable and $L$-Lipschitz continuous.

Then
\begin{multline}
\mathbb{E}\left\{ \sum_{t=1}^{T}\sum_{s\in\mathcal{S}_{t}}\eta_{s}\left(\hat{l}_{s}\left(\boldsymbol{x}_{s}\right)-\hat{l}_{s}\left(\boldsymbol{a}_{\delta}^{*}\right)\right)\right\} \underset{\left(a\right)}{\leq}\\
\mathbb{E}\left\{ \sum_{t=1}^{T}\sum_{s\in\mathcal{S}_{t}}\eta_{s}\left(h_{s}\left(\boldsymbol{x}_{s}\right)-h_{s}\left(\boldsymbol{a}_{\delta}^{*}\right)\right)\right\} +\left|\mathcal{K}\right|\sqrt{2\sum_{t=1}^{T}\sum_{s\in\mathcal{S}_{t}}\eta_{s}^{2}\left(\frac{n^{2}}{\delta^{2}}+L^{2}\right)}
\underset{\left(b\right)}{\leq}\\\frac{1}{2}\sum_{t=1}^{T}\sum_{s\in\mathcal{S}_{t}}\mathbb{E}\left\{ \left\Vert \boldsymbol{x}_{s_{-}}-\boldsymbol{a}_{\delta}^{*}\right\Vert ^{2}-\left\Vert \boldsymbol{x}_{s_{+}}-\boldsymbol{a}_{\delta}^{*}\right\Vert ^{2}\right\} +\frac{1}{2}\sum_{t=1}^{T}\sum_{s\in\mathcal{S}_{t}}\eta_{s}^{2}\mathbb{E}\left\{ \left\Vert \boldsymbol{g}_{s}\right\Vert ^{2}\right\} +\left|\mathcal{K}\right|\sqrt{2\sum_{t\notin\mathcal{M}}\eta_{t}^{2}\left(\frac{n^{2}}{\delta^{2}}+L^{2}\right)}\\
-\sum_{t=1}^{T}\sum_{s\in\mathcal{S}_{t}}\eta_{s}\mathbb{E}\left\{ \left\langle \sum_{r=s}^{t-1}\sum_{q\in\mathcal{S}_{r}}\left(\boldsymbol{x}_{q_{+}}-\boldsymbol{x}_{q_{-}}\right)+\sum_{q\in\mathcal{S}_{t},q<s}\left(\boldsymbol{x}_{q_{+}}-\boldsymbol{x}_{q_{-}}\right),\mathbb{E}\left\{ \nabla h_{s}\left(\boldsymbol{x}_{s}\right)\,|\,\mathcal{F}_{s_{-}}\right\} \right\rangle \right\} 
\underset{\left(c\right)}{\leq}\\\frac{\left|\mathcal{K}\right|^{2}}{2}+\frac{n^{2}}{2\delta^{2}}\sum_{t\notin\mathcal{M}}\eta_{t}^{2}+\left|\mathcal{K}\right|\sqrt{2\sum_{t\notin\mathcal{M}}\eta_{t}^{2}\left(\frac{n^{2}}{\delta^{2}}+L^{2}\right)}\\
+\frac{n}{\delta}\sum_{t=1}^{T}\sum_{s\in\mathcal{S}_{t}}\eta_{s}\left(\sum_{r=s}^{t-1}\sum_{q\in\mathcal{S}_{r}}\mathbb{E}\left\{ \left\Vert \boldsymbol{x}_{q_{+}}-\boldsymbol{x}_{q_{-}}\right\Vert \right\} +\sum_{q\in\mathcal{S}_{t},q<s}\mathbb{E}\left\{ \left\Vert \boldsymbol{x}_{q_{+}}-\boldsymbol{x}_{q_{-}}\right\Vert \right\} \right)\label{eq:80a}
\end{multline}
where (a) uses \eqref{eq:78a} and \eqref{eq:79a} on $\boldsymbol{x}=\boldsymbol{a}_{\delta}^{*}$ and $\mathbb{E}\left\{ h_{s}\left(\boldsymbol{x}_{s}\right)\right\} =\mathbb{E}\left\{ \hat{l}_{s}\left(\boldsymbol{x}_{s}\right)\right\}$, (b) uses \eqref{eq:72a}
and (c) uses the telescopic sum with $\left\Vert \boldsymbol{x}_{0}-\boldsymbol{a}_{\delta}^{*}\right\Vert ^{2}-\left\Vert \boldsymbol{x}_{T}-\boldsymbol{a}_{\delta}^{*}\right\Vert ^{2}\leq\left|\mathcal{K}\right|^{2}$,
that $\boldsymbol{g}_{s}=\frac{n}{\delta}l_{s}\left(\boldsymbol{x}_{s}+\delta\boldsymbol{u}_{s}\right)\boldsymbol{u}_{s}$,
and also Cauchy-Schwarz and then applying Lemma \ref{SPSA} to obtain 
$\left\Vert \mathbb{E}\left\{ \nabla h_{s}\left(\boldsymbol{x}_{s}\right)\,|\,\mathcal{F}_{s_{-}}\right\} \right\Vert \leq\mathbb{E}\left\{ \left\Vert \boldsymbol{g}_{s}\right\Vert \,|\,\mathcal{F}_{s_{-}}\right\} \leq\frac{n}{\delta}$.
Finally we bound the last term by bounding 
\begin{equation}
\left\Vert \boldsymbol{x}_{q_{+}}-\boldsymbol{x}_{q_{-}}\right\Vert _{2}=\left\Vert \prod_{\mathcal{K}_{\delta}}\left(\boldsymbol{x}_{q_{-}}-\eta_{q}\boldsymbol{g}_{q}\right)-\boldsymbol{x}_{q_{-}}\right\Vert _{2}\underset{\left(a\right)}{\leq}\eta_{q}\left\Vert \boldsymbol{g}_{q}\right\Vert _{2}\leq n\frac{\eta_{q}}{\delta}\label{eq:82a}
\end{equation}
where (a) follows since $\prod_{\mathcal{K}_{\delta}}$ is the projection
of $\boldsymbol{x}_{q_{-}}-\eta_{q}\boldsymbol{g}_{q}$ onto the convex
$\mathcal{K}_{\delta}$. We obtain
\begin{multline}
\frac{n}{\delta}\sum_{t=1}^{T}\sum_{s\in\mathcal{S}_{t}}\eta_{s}\left(\sum_{r=s}^{t-1}\sum_{q\in\mathcal{S}_{r}}\mathbb{E}\left\{ \left\Vert \boldsymbol{x}_{q_{+}}-\boldsymbol{x}_{q_{-}}\right\Vert \right\} +\sum_{q\in\mathcal{S}_{t},q<s}\mathbb{E}\left\{ \left\Vert \boldsymbol{x}_{q_{+}}-\boldsymbol{x}_{q_{-}}\right\Vert \right\} \right)\\
\leq\frac{n^{2}}{\delta^{2}}\sum_{t=1}^{T}\sum_{s\in\mathcal{S}_{t}}\eta_{s}\left(\sum_{r=s}^{t-1}\sum_{q\in\mathcal{S}_{r}}\eta_{q}+\sum_{q\in\mathcal{S}_{t},q<s}\eta_{q}\right)\underset{\left(a\right)}{\leq}2\frac{n^{2}}{\delta^{2}}\sum_{t\notin\mathcal{M}}\eta_{t}^{2}d_{t}\label{eq:86b}
\end{multline}
where (a) uses Lemma \ref{Counting Delays Lemma}. We conclude by
applying \eqref{eq:86b} on \eqref{eq:80a} and adding \eqref{eq:80}
and \eqref{eq:70}.

\subsection{Oblivious Adversary }
For an oblivious adversary, $l_{q}$ is not random and for $q>s$ does not depend on $\boldsymbol{a}_{s}$, so by Lemma \ref{SPSA}
\begin{equation}
\left\Vert \mathbb{E}\left\{ \nabla h_{s}\left(\boldsymbol{x}_{s}\right)\,|\,\mathcal{F}_{s_{-}}\right\} \right\Vert \underset{\left(a\right)}{=}\left\Vert \mathbb{E}\left\{ \frac{n}{\delta}l_{s}\left(\boldsymbol{x}_{s}+\delta\boldsymbol{u}_{s}\right)\boldsymbol{u}_{s}\,|\,\mathcal{F}_{s_{-}}\right\} \right\Vert =\left\Vert \nabla\hat{l}_{s}\left(\boldsymbol{x}_{s}\right)\right\Vert \underset{\left(b\right)}{\leq}L\label{eq:79b}
\end{equation}
where (a) uses \eqref{eq:82} and (b) follows since $\hat{l}_{s}$ is differentiable and $L$-Lipschitz continuous. Then
\begin{multline}
\sum_{t=1}^{T}\sum_{s\in\mathcal{S}_{t}}\eta_{s}\mathbb{E}\left\{ \hat{l}_{s}\left(\boldsymbol{x}_{s}\right)-\hat{l}_{s}\left(\boldsymbol{a}_{\delta}^{*}\right)\right\} \underset{\left(a\right)}{=}\sum_{t=1}^{T}\sum_{s\in\mathcal{S}_{t}}\eta_{s}\mathbb{E}\left\{ h_{s}\left(\boldsymbol{x}_{s}\right)-h_{s}\left(\boldsymbol{a}_{\delta}^{*}\right)\right\} \\\underset{\left(b\right)}{\leq}\frac{1}{2}\sum_{t=1}^{T}\sum_{s\in\mathcal{S}_{t}}\mathbb{E}\left\{ \left\Vert \boldsymbol{x}_{s_{-}}-\boldsymbol{a}_{\delta}^{*}\right\Vert ^{2}-\left\Vert \boldsymbol{x}_{s_{+}}-\boldsymbol{a}_{\delta}^{*}\right\Vert ^{2}\right\} +\frac{1}{2}\sum_{t=1}^{T}\sum_{s\in\mathcal{S}_{t}}\eta_{s}^{2}\mathbb{E}\left\{ \left\Vert \boldsymbol{g}_{s}\right\Vert ^{2}\right\} -\\\sum_{t=1}^{T}\sum_{s\in\mathcal{S}_{t}}\eta_{s}\mathbb{E}\left\{ \left\langle \sum_{r=s}^{t-1}\sum_{q\in\mathcal{S}_{r}}\left(\boldsymbol{x}_{q_{+}}-\boldsymbol{x}_{q_{-}}\right)+\sum_{q\in\mathcal{S}_{t},q<s}\left(\boldsymbol{x}_{q_{+}}-\boldsymbol{x}_{q_{-}}\right),\mathbb{E}\left\{ \nabla h_{s}\left(\boldsymbol{x}_{s}\right)\,|\,\mathcal{F}_{s_{-}}\right\} \right\rangle \right\} \\\underset{\left(c\right)}{\leq}\frac{\left|\mathcal{K}\right|^{2}}{2}+\frac{n^{2}}{2\delta^{2}}\sum_{t\notin\mathcal{M}}\eta_{t}^{2}+L\sum_{t=1}^{T}\sum_{s\in\mathcal{S}_{t}}\eta_{s}\left(\sum_{r=s}^{t-1}\sum_{q\in\mathcal{S}_{r}}\mathbb{E}\left\{ \left\Vert \boldsymbol{x}_{q_{+}}-\boldsymbol{x}_{q_{-}}\right\Vert \right\} +\sum_{q\in\mathcal{S}_{t},q<s}\mathbb{E}\left\{ \left\Vert \boldsymbol{x}_{q_{+}}-\boldsymbol{x}_{q_{-}}\right\Vert \right\} \right)\label{eq:88a}
\end{multline}
where (a) uses $\mathbb{E}\left\{ \hat{l}_{s}\left(\boldsymbol{x}_{s}\right)\right\} =\mathbb{E}\left\{ h_{s}\left(\boldsymbol{x}_{s}\right)\right\}$  and $\mathbb{E}\left\{ \hat{l}_{s}\left(\boldsymbol{a}_{\delta}^{*}\right)\right\} =\mathbb{E}\left\{ h_{s}\left(\boldsymbol{a}_{\delta}^{*}\right)\right\}$ as explained below \eqref{eq:82} (since with an oblivious adversary, $\boldsymbol{a}_{\delta}^{*}$ is not random),  (b) uses \eqref{eq:72a} and (c) uses the telescopic sum with
$\left\Vert \boldsymbol{x}_{0}-\boldsymbol{a}_{\delta}^{*}\right\Vert ^{2}-\left\Vert \boldsymbol{x}_{T}-\boldsymbol{a}_{\delta}^{*}\right\Vert ^{2}\leq\left|\mathcal{K}\right|^{2}$,
$\left\Vert \boldsymbol{g}_{s}\right\Vert \leq\frac{n}{\delta}$,
and Cauchy-Schwarz with \eqref{eq:79b}. Finally, we use \eqref{eq:82a}
to bound the last term in \eqref{eq:88a}: 
\begin{multline}
L\sum_{t=1}^{T}\sum_{s\in\mathcal{S}_{t}}\eta_{s}\left(\sum_{r=s}^{t-1}\sum_{q\in\mathcal{S}_{r}}\mathbb{E}\left\{ \left\Vert \boldsymbol{x}_{q_{+}}-\boldsymbol{x}_{q_{-}}\right\Vert \right\} +\sum_{q\in\mathcal{S}_{t},q<s}\mathbb{E}\left\{ \left\Vert \boldsymbol{x}_{q_{+}}-\boldsymbol{x}_{q_{-}}\right\Vert \right\} \right)\\
\leq\frac{Ln}{\delta}\sum_{t=1}^{T}\sum_{s\in\mathcal{S}_{t}}\eta_{s}\left(\sum_{r=s}^{t-1}\sum_{q\in\mathcal{S}_{r}}\eta_{q}+\sum_{q\in\mathcal{S}_{t},q<s}\eta_{q}\right)\underset{\left(a\right)}{\leq}2L\frac{n}{\delta}\sum_{t\notin\mathcal{M}}\eta_{t}^{2}d_{t}\label{eq:86-1}
\end{multline}
where (a) uses Lemma \ref{Counting Delays Lemma}. We conclude by
applying \eqref{eq:86-1} on \eqref{eq:88a} and adding \eqref{eq:80}
and \eqref{eq:70}.

\newpage
\section{Proof of Lemma \ref{lem:NoErgodicEXP3}: Weighted-Regret of EXP3
with Delays }
Recall that $\mathcal{M^{*}}$ is the set of missing or discarded samples. Let $s_{-},s_{+}$
as the step a moment before and a moment after the algorithm uses the feedback
from round $s$, which updates the mixed action from $\boldsymbol{p}_{s_{-}}$
to $\boldsymbol{p}_{s_{+}}$. Both $s_{-}$
and $s_{+}$ are algorithm update steps that take place in round $t$ of the game if $s\in\mathcal{S}_{t}$. Let $s_{T}$ be the last
feedback to be updated. 

We begin by the standard EXP3 analysis for arbitrary $\tilde{\boldsymbol{l}}_{s}$
\citep{lattimore2020bandit}, but with careful consideration to both
the interleaved arrivals and the weight $\eta_{s}$. Recall that $\tilde{L}_{t}^{\left(i\right)}=\sum_{t\notin\mathcal{M^{*}}}\eta_{t}\frac{l_{t}^{\left(i\right)}1_{\left\{ a_{t}=i\right\} }}{p_{t}^{\left(i\right)}+\gamma_{t}}$, as defined in Algorithm \ref{alg:EXP3}. Define $\Phi\left(t\right)=-\log \left(\sum_{i=1}^{K}e^{-\tilde{L}_{t}^{\left(i\right)}}\right)$
and $\tilde{\boldsymbol{l}}_{t}=\left(0,...,\frac{l_{t}^{\left(a_{t}\right)}}{p_{t}^{\left(a_{t}\right)}+\gamma_{t}},...,0\right)$.
Then
\begin{align*}
\Phi\left(s_{+}\right)-\Phi\left(s_{-}\right)=-\log \left(\frac{\sum_{i=1}^{K}e^{-\tilde{L}_{s_{-}}^{\left(i\right)}}e^{-\eta_{s}\tilde{l}_{s}^{\left(i\right)}}}{\sum_{j=1}^{K}e^{-\tilde{L}_{s_{-}}^{\left(j\right)}}}\right)&=-\log \left(\sum_{i=1}^{K}p_{s_{-}}^{\left(i\right)}e^{-\eta_{s}\tilde{l}_{s}^{\left(i\right)}}\right)\\
&\underset{\left(a\right)}{\geq}-\log \left(\sum_{i=1}^{K}p_{s_{-}}^{\left(i\right)}\left(1-\eta_{s}\tilde{l}_{s}^{\left(i\right)}+\frac{1}{2}\eta_{s}^{2}\left(\tilde{l}_{s}^{\left(i\right)}\right)^{2}\right)\right)\\&=-\log \left(1-\sum_{i=1}^{K}p_{s_{-}}^{\left(i\right)}\left(\eta_{s}\tilde{l}_{s}^{\left(i\right)}-\frac{1}{2}\eta_{s}^{2}\left(\tilde{l}_{s}^{\left(i\right)}\right)^{2}\right)\right)\\
&\underset{\left(b\right)}{\geq}\eta_{s}\sum_{i=1}^{K}p_{s_{-}}^{\left(i\right)}\tilde{l}_{s}^{\left(i\right)}-\frac{\eta_{s}^{2}}{2}\sum_{i=1}^{K}p_{s_{-}}^{\left(i\right)}\left(\tilde{l}_{s}^{\left(i\right)}\right)^{2}\numberthis\label{eq:85a}
\end{align*}
where (a) is $e^{-x}\leq1-x+\frac{1}{2}x^{2}$ and (b) is $\log \left(1-x\right)\leq-x$.
Hence, iterating \eqref{eq:85a} over $s$ yields
\begin{equation}
\Phi\left(s_{T}^{+}\right)-\Phi\left(1\right)=\sum_{t=1}^{T}\sum_{s\in\mathcal{S}_{t}}\left(\Phi\left(s_{+}\right)-\Phi\left(s_{-}\right)\right)\geq\sum_{t=1}^{T}\sum_{s\in\mathcal{S}_{t}}\eta_{s}\sum_{i=1}^{K}p_{s_{-}}^{\left(i\right)}\tilde{l}_{s}^{\left(i\right)}-\frac{1}{2}\sum_{t=1}^{T}\sum_{s\in\mathcal{S}_{t}}\eta_{s}^{2}\sum_{i=1}^{K}p_{s_{-}}^{\left(i\right)}\left(\tilde{l}_{s}^{\left(i\right)}\right)^{2}.\label{eq:86a}
\end{equation}
Next we upper bound $\Phi\left(s_{T}^{+}\right)-\Phi\left(1\right)$.
We have for $i^{*}\triangleq{\arg\underset{i}\min}\sum_{t=1}^{T}\eta_{t}l_{t}^{\left(i\right)}$ that
\begin{equation}
\Phi\left(s_{T}^{+}\right)-\Phi\left(1\right)=-\log \left(\sum_{j=1}^{K}e^{-\tilde{L}_{s_{T}^{+}}^{\left(j\right)}}\right)+\log  K\underset{\left(a\right)}{\leq}\tilde{L}_{s_{T}^{+}}^{\left(i^{*}\right)}+\log  K=\sum_{t=1}^{T}\sum_{s\in\mathcal{S}_{t}}\eta_{s}\tilde{l}_{s}^{\left(i^{*}\right)}+\log  K\label{eq:87}
\end{equation}
where (a) omits positive terms from $\sum_{j=1}^{K}e^{-\tilde{L}_{s_{T}^{+}}^{\left(j\right)}}$.
We conclude from \eqref{eq:86a} and \eqref{eq:87} that 
\begin{equation}
\sum_{t=1}^{T}\sum_{s\in\mathcal{S}_{t}}\eta_{s}\sum_{i=1}^{K}p_{s_{-}}^{\left(i\right)}\tilde{l}_{s}^{\left(i\right)}-\sum_{t=1}^{T}\sum_{s\in\mathcal{S}_{t}}\eta_{s}\tilde{l}_{s}^{\left(i^{*}\right)}\leq\log  K+\frac{1}{2}\sum_{t=1}^{T}\sum_{s\in\mathcal{S}_{t}}\eta_{s}^{2}\sum_{i=1}^{K}p_{s_{-}}^{\left(i\right)}\left(\tilde{l}_{s}^{\left(i\right)}\right)^{2}.\label{eq:88}
\end{equation}
Now observe that
\begin{align*}
\sum_{i=1}^{K}p_{s_{-}}^{\left(i\right)}\tilde{l}_{s}^{\left(i\right)}=\sum_{i=1}^{K}p_{s_{-}}^{\left(i\right)}\frac{l_{s}^{\left(i\right)}1_{\left\{ a_{s}=i\right\} }}{p_{s}^{\left(i\right)}+\gamma_{s}}&=\sum_{i=1}^{K}\left(p_{s}^{\left(i\right)}+\gamma_{s}\right)\frac{l_{s}^{\left(i\right)}1_{\left\{ a_{s}=i\right\} }}{p_{s}^{\left(i\right)}+\gamma_{s}}-\sum_{i=1}^{K}\left(p_{s}^{\left(i\right)}+\gamma_{s}-p_{s_{-}}^{\left(i\right)}\right)\frac{l_{s}^{\left(i\right)}1_{\left\{ a_{s}=i\right\} }}{p_{s}^{\left(i\right)}+\gamma_{s}}\\
&=l_{s}^{\left(a_{s}\right)}-\gamma_{s}\sum_{i=1}^{K}\tilde{l}_{s}^{\left(i\right)}+\sum_{i=1}^{K}\left(p_{s_{-}}^{\left(i\right)}-p_{s}^{\left(i\right)}\right)\tilde{l}_{s}^{\left(i\right)}.\numberthis\label{eq:89}
\end{align*}
Using $\frac{p_{s_{-}}^{\left(i\right)}}{p_{s}^{\left(i\right)}}\leq e^{2}$
from Lemma \ref{Multiplicative Lemma}, we obtain
\begin{equation}
\sum_{i=1}^{K}p_{s_{-}}^{\left(i\right)}\left(\tilde{l}_{s}^{\left(i\right)}\right)^{2}=\sum_{i=1}^{K}p_{s_{-}}^{\left(i\right)}\left(\frac{l_{s}^{\left(i\right)}1_{\left\{ a_{s}=i\right\} }}{p_{s}^{\left(i\right)}+\gamma_{s}}\right)^{2}\leq e^{2}\sum_{i=1}^{K}\frac{p_{s}^{\left(i\right)}}{p_{s}^{\left(i\right)}+\gamma_{s}}\frac{l_{s}^{\left(i\right)}1_{\left\{ a_{s}=i\right\} }}{p_{s}^{\left(i\right)}+\gamma_{s}}\leq e^{2}\sum_{i=1}^{K}\tilde{l}_{s}^{\left(i\right)}.\label{eq:90}
\end{equation}
Taking the weighted sum of \eqref{eq:89} over $t\notin\mathcal{M^{*}}$
and subtracting $\sum_{t=1}^{T}\sum_{s\in\mathcal{S}_{t}}\eta_{s}\tilde{l}_{s}^{\left(i^{*}\right)}$
from both sides:
\begin{align*}
\sum_{t=1}^{T}\sum_{s\in\mathcal{S}_{t}}\eta_{s}l_{s}^{\left(a_{s}\right)}-\sum_{t=1}^{T}\sum_{s\in\mathcal{S}_{t}}\gamma_{s}\eta_{s}\sum_{i=1}^{K}\tilde{l}_{s}^{\left(i\right)}&+\sum_{t=1}^{T}\sum_{s\in\mathcal{S}_{t}}\eta_{s}\sum_{i=1}^{K}\left(p_{s_{-}}^{\left(i\right)}-p_{s}^{\left(i\right)}\right)\tilde{l}_{s}^{\left(i\right)}-\sum_{t=1}^{T}\sum_{s\in\mathcal{S}_{t}}\eta_{s}\tilde{l}_{s}^{\left(i^{*}\right)}\\
&=\sum_{t=1}^{T}\sum_{s\in\mathcal{S}_{t}}\eta_{s}\sum_{i=1}^{K}p_{s_{-}}^{\left(i\right)}\tilde{l}_{s}^{\left(i\right)}-\sum_{t=1}^{T}\sum_{s\in\mathcal{S}_{t}}\eta_{s}\tilde{l}_{s}^{\left(i^{*}\right)}\\&\underset{\left(a\right)}{\leq}\log  K+\frac{1}{2}\sum_{t=1}^{T}\sum_{s\in\mathcal{S}_{t}}\eta_{s}^{2}\sum_{i=1}^{K}p_{s_{-}}^{\left(i\right)}\left(\tilde{l}_{s}^{\left(i\right)}\right)^{2}\\
&\underset{\left(b\right)}{\leq}\log  K+\frac{1}{2}\sum_{t=1}^{T}\sum_{s\in\mathcal{S}_{t}}e^{2}\eta_{s}^{2}\sum_{i=1}^{K}\tilde{l}_{s}^{\left(i\right)}\numberthis\label{eq:91}
\end{align*}
where (a) uses \eqref{eq:88} and (b) uses \eqref{eq:90}. Rearranging
\eqref{eq:91} and subtracting $\sum_{t=1}^{T}\sum_{s\in\mathcal{S}_{t}}\eta_{s}l_{s}^{\left(i^{*}\right)}$ from both sides gives
\begin{multline}
\sum_{t=1}^{T}\sum_{s\in\mathcal{S}_{t}}\eta_{s}\left(l_{s}^{\left(a_{s}\right)}-l_{s}^{\left(i^{*}\right)}\right)
\leq\\\underbrace{\sum_{t=1}^{T}\sum_{s\in\mathcal{S}_{t}}\eta_{s}\sum_{i=1}^{K}\left(p_{s}^{\left(i\right)}-p_{s_{-}}^{\left(i\right)}\right)\tilde{l}_{s}^{\left(i\right)}}_{A}+\log  K+\sum_{t=1}^{T}\sum_{s\in\mathcal{S}_{t}}\eta_{s}\left(\tilde{l}_{s}^{\left(i^{*}\right)}-l_{s}^{\left(i^{*}\right)}\right)+\sum_{t=1}^{T}\sum_{s\in\mathcal{S}_{t}}\left(\gamma_{s}\eta_{s}+\frac{e^{2}}{2}\eta_{s}^{2}\right)\sum_{i=1}^{K}\tilde{l}_{s}^{\left(i\right)}.\label{eq:92}
\end{multline}

\subsection{Adaptive Adversary with $\gamma_{t}=\eta_{t}$}

\subsubsection{Taking the Expectation}

Define $W_{1}^{\left(i\right)}=\sum_{t=1}^{T}\sum_{s\in\mathcal{S}_{t}}\eta_{s}\left(\tilde{l}_{s}^{\left(i\right)}-l_{s}^{\left(i\right)}\right)-\log  K$.
From Lemma \ref{lem:HighProbability} with $\delta\leftarrow\frac{\delta}{K}$ and $\alpha_{s}^{\left(i\right)}=\eta_{s}$
and $\alpha_{s}^{\left(j\right)}=0$ for all $j\neq i$ we get by
using the union bound that
\begin{equation}
\mathbb{P}\left(\underset{i}{\max}W_{1}^{\left(i\right)}\geq\log \frac{1}{\delta}\right)\leq K\mathbb{P}\left(\sum_{t=1}^{T}\sum_{s\in\mathcal{S}_{t}}\eta_{s}\left(\tilde{l}_{s}^{\left(i\right)}-l_{s}^{\left(i\right)}\right)\geq\log \frac{K}{\delta}\right)\leq\delta\label{eq:93a}
\end{equation}
 so by substituting $x=\log \frac{1}{\delta}$ so $dx=-\frac{d\delta}{\delta}$
\begin{equation}
\mathbb{E}\left\{ \underset{i}{\max}W_{1}^{\left(i\right)}\right\} \leq\intop_{0}^{\infty}\mathbb{P}\left(\underset{i}{\max}W_{1}^{\left(i\right)}\geq x\right)dx=\intop_{0}^{1}\frac{1}{\delta}\mathbb{P}\left(\underset{i}{\max}W_{1}^{\left(i\right)}\geq\log \frac{1}{\delta}\right)d\delta\leq1.\label{eq:94a}
\end{equation}
Define $W_{2}=\sum_{t=1}^{T}\sum_{s\in\mathcal{S}_{t}}\left(\gamma_{s}\eta_{s}+\frac{e^{2}}{2}\eta_{s}^{2}\right)\sum_{i=1}^{K}\left(\tilde{l}_{s}^{\left(i\right)}-l_{s}^{\left(i\right)}\right)$.
From Lemma \ref{lem:HighProbability} with $\alpha_{s}^{\left(i\right)}=\gamma_{s}\eta_{s}+\frac{e^{2}}{2}\eta_{s}^{2}\leq2\eta_{s}$
for all $i$ we get $\mathbb{P}\left(W_{2}\geq\log \frac{1}{\delta}\right)\leq\delta$,
so using \eqref{eq:94a} on $W_{2}$ we obtain $\mathbb{E}\left\{ W_{2}\right\} \leq1$. 

From $\mathbb{E}\left\{ \underset{i}{\max}W_{1}^{\left(i\right)}\right\} \leq1$
and $\mathbb{E}\left\{ W_{2}\right\} \leq1$ we conclude that
\begin{multline}
\mathbb{E}\left\{ \underset{i}{\max}\sum_{t=1}^{T}\sum_{s\in\mathcal{S}_{t}}\eta_{s}\left(\tilde{l}_{s}^{\left(i\right)}-l_{s}^{\left(i\right)}\right)+\sum_{t=1}^{T}\sum_{s\in\mathcal{S}_{t}}\left(\gamma_{s}\eta_{s}+\frac{e^{2}}{2}\eta_{s}^{2}\right)\sum_{i=1}^{K}\tilde{l}_{s}^{\left(i\right)}\right\} \leq\\\log  K+2+\sum_{t=1}^{T}\sum_{s\in\mathcal{S}_{t}}\left(\gamma_{s}\eta_{s}+\frac{e^{2}}{2}\eta_{s}^{2}\right)\sum_{i=1}^{K}l_{s}^{\left(i\right)}\leq\log  K+2+K\sum_{t=1}^{T}\sum_{s\in\mathcal{S}_{t}}\left(\gamma_{s}\eta_{s}+\frac{e^{2}}{2}\eta_{s}^{2}\right).\label{eq:94}
\end{multline}

\subsubsection{The Effect of Delays}

Next we bound the $A$ term in \eqref{eq:92}, which quantifies the effect of the delays. Let $s\in \mathcal{S}_{t}$, and let $q$ be a round for which the feedback is used after or at round $s$, but before the feedback from round $s$ is used. Define
$h_{i}\left(\boldsymbol{\tilde{L}}_{q_{-}}\right)\triangleq p_{q_{-}}^{\left(i\right)}=\frac{e^{-\tilde{L}_{q_{-}}^{\left(i\right)}}}{\sum_{j=1}^{K}e^{-\tilde{L}_{q_{-}}^{\left(j\right)}}}$, so $p_{q^{+}}^{\left(i\right)}=h_{i}\left(\boldsymbol{\tilde{L}}_{q_{-}}+\eta_{q}\tilde{\boldsymbol{l}}_{q}\right)$.
Using Lemma \ref{Lipchitz Lemma} with $\boldsymbol{x}=\boldsymbol{\tilde{L}}_{q_{-}}$
and $\Delta=\eta_{q}\tilde{\boldsymbol{l}}_{q}$ (so $\boldsymbol{h}\left(\boldsymbol{x}\right)=\boldsymbol{p}_{q_{-}}$)
yields
\begin{align*}
\left\Vert \boldsymbol{p}_{q_{-}}-\boldsymbol{p}_{q_{+}}\right\Vert _{1}&\leq2\eta_{q}\sum_{i=1}^{K}p_{q_{-}}^{\left(i\right)}\tilde{l}_{q}^{\left(i\right)}=2\eta_{q}\sum_{i=1}^{K}p_{q_{-}}^{\left(i\right)}\frac{l_{q}^{\left(i\right)}1_{\left\{ a_{q}=i\right\} }}{p_{q}^{\left(i\right)}+\gamma_{q}}\\&\underset{\left(a\right)}{\leq}2e^{2}\eta_{q}\sum_{i=1}^{K}p_{q}^{\left(i\right)}\frac{l_{q}^{\left(i\right)}1_{\left\{ a_{q}=i\right\} }}{p_{q}^{\left(i\right)}+\gamma_{q}}\leq2e^{2}\eta_{q}\sum_{i=1}^{K}l_{q}^{\left(i\right)}1_{\left\{ a_{q}=i\right\} }=2e^{2}\eta_{q}l_{q}^{\left(q_{q}\right)}\leq2e^{2}\eta_{q}\numberthis\label{eq:96}
\end{align*}
where (a) uses $\frac{p_{q_{-}}^{\left(i\right)}}{p_{q}^{\left(i\right)}}\leq e^{2}$
from Lemma \ref{Multiplicative Lemma}. Hence
\begin{equation}
\left\langle \boldsymbol{\tilde{l}}_{s},\boldsymbol{p}_{q_{-}}-\boldsymbol{p}_{q_{+}}\right\rangle =\sum_{i=1}^{K}\left(p_{q_{-}}^{\left(i\right)}-p_{q_{+}}^{\left(i\right)}\right)\frac{l_{s}^{\left(i\right)}1_{\left\{ a_{s}=i\right\} }}{p_{s}^{\left(i\right)}+\gamma_{s}}\underset{\left(a\right)}{\leq}2e^{2}\eta_{q}\sum_{i=1}^{K}\frac{l_{s}^{\left(i\right)}1_{\left\{ a_{s}=i\right\} }}{p_{s}^{\left(i\right)}+\gamma_{s}}\leq2e^{2}\eta_{q}\sum_{i=1}^{K}\frac{1_{\left\{ a_{s}=i\right\} }}{p_{s}^{\left(i\right)}+\gamma_{s}}\label{eq:97}
\end{equation}
where (a)  follows since $p_{q_{-}}^{\left(i\right)}-p_{q_{+}}^{\left(i\right)}\leq\left|p_{q_{-}}^{\left(i\right)}-p_{q_{+}}^{\left(i\right)}\right|\leq\left\Vert \boldsymbol{p}_{q_{-}}-\boldsymbol{p}_{q_{+}}\right\Vert _{1}\leq2e^{2}\eta_{q}$. Using \eqref{eq:97} we can write
\begin{equation}
\mathbb{E}\left\{ \left\langle \boldsymbol{\tilde{l}}_{s},\boldsymbol{p}_{q_{-}}-\boldsymbol{p}_{q_{+}}\right\rangle \,|\,\mathcal{F}_{s}\right\} \leq2e^{2}\eta_{q}\sum_{i=1}^{K}\mathbb{E}\left\{ \frac{1_{\left\{ a_{s}=i\right\} }}{p_{s}^{\left(i\right)}+\gamma_{s}}\,|\,\mathcal{F}_{s}\right\} \underset{\left(a\right)}{=}2e^{2}\eta_{q}\sum_{i=1}^{K}\frac{p_{s}^{\left(i\right)}}{p_{s}^{\left(i\right)}+\gamma_{s}}\leq2e^{2}\eta_{q}K\label{eq:98a}
\end{equation}
where (a) uses that $p_{s}^{\left(i\right)}$ is $\mathcal{F}_{s}$-measurable and that $a_{s}=i$ with probability $p_{s}^{\left(i\right)}$. Hence the A term in \eqref{eq:92} can be bounded as
\begin{align*}
&
\mathbb{E}\left\{ \sum_{t=1}^{T}\sum_{s\in\mathcal{S}_{t}}\eta_{s}\left\langle \boldsymbol{\tilde{l}}_{s},\boldsymbol{p}_{s}-\boldsymbol{p}_{s_{-}}\right\rangle \right\} \\&=\mathbb{E}\left\{ \sum_{t=1}^{T}\sum_{s\in\mathcal{S}_{t}}\eta_{s}\left(\left\langle \boldsymbol{\tilde{l}}_{s},\boldsymbol{p}_{t}-\boldsymbol{p}_{s_{-}}\right\rangle +\sum_{r=s}^{t-1}\left\langle \boldsymbol{\tilde{l}}_{s},\boldsymbol{p}_{r}-\boldsymbol{p}_{r+1}\right\rangle \right)\right\} \\&=\mathbb{E}\left\{ \sum_{t=1}^{T}\sum_{s\in\mathcal{S}_{t}}\eta_{s}\left(\sum_{q\in\mathcal{S}_{t},q<s}\left\langle \boldsymbol{\tilde{l}}_{s},\boldsymbol{p}_{q_{-}}-\boldsymbol{p}_{q_{+}}\right\rangle +\sum_{r=s}^{t-1}\sum_{q\in\mathcal{S}_{r}}\left\langle \boldsymbol{\tilde{l}}_{s},\boldsymbol{p}_{q_{-}}-\boldsymbol{p}_{q_{+}}\right\rangle \right)\right\} \\&=\sum_{t=1}^{T}\sum_{s\in\mathcal{S}_{t}}\eta_{s}\left(\sum_{q\in\mathcal{S}_{t},q<s}\mathbb{E}\left\{ \left\langle \boldsymbol{\tilde{l}}_{s},\boldsymbol{p}_{q_{-}}-\boldsymbol{p}_{q_{+}}\right\rangle \right\} +\sum_{r=s}^{t-1}\sum_{q\in\mathcal{S}_{r}}\mathbb{E}\left\{ \left\langle \boldsymbol{\tilde{l}}_{s},\boldsymbol{p}_{q_{-}}-\boldsymbol{p}_{q_{+}}\right\rangle \right\} \right)\\&\underset{\left(a\right)}{\leq}2e^{2}K\sum_{t=1}^{T}\sum_{s\in\mathcal{S}_{t}}\eta_{s}\left(\sum_{q\in\mathcal{S}_{t},q<s}\eta_{q}+\sum_{r=s}^{t-1}\sum_{q\in\mathcal{S}_{r}}\eta_{q}\right)\underset{\left(b\right)}{\leq}4e^{2}K\sum_{t\notin\mathcal{M^{*}}}\eta_{t}^{2}d_{t}\numberthis\label{eq:99}
\end{align*}
where (a) uses \eqref{eq:98a} with the tower rule, and (b) uses Lemma \ref{Counting Delays Lemma}.

\subsubsection{Concluding the Proof}
We conclude that for $i^{*}\triangleq{\arg\underset{i}\min}\sum_{t=1}^{T}\eta_{t}l_{t}^{\left(i\right)}$:
\begin{align*}
\mathbb{E}\left\{ \sum_{t=1}^{T}\eta_{t}\left(l_{t}^{\left(a_{t}\right)}-l_{t}^{\left(i^{*}\right)}\right)\right\} &\underset{\left(a\right)}{\leq}\mathbb{E}\left\{ \sum_{t=1}^{T}\sum_{s\in\mathcal{S}_{t}}\eta_{s}\left(l_{s}^{\left(a_{s}\right)}-l_{s}^{\left(i^{*}\right)}\right)\right\} +\sum_{t\in\mathcal{\mathcal{M^{*}}}}\eta_{t}\\&\underset{\left(b\right)}{\leq}\mathbb{E}\left\{ \sum_{t=1}^{T}\sum_{s\in\mathcal{S}_{t}}\eta_{s}\sum_{i=1}^{K}\left(p_{s}^{\left(i\right)}-p_{s_{-}}^{\left(i\right)}\right)\tilde{l}_{s}^{\left(i\right)}\right\} +\log  K\\&+\mathbb{E}\left\{ \sum_{t=1}^{T}\sum_{s\in\mathcal{S}_{t}}\left(\eta_{s}\left(\tilde{l}_{s}^{\left(i^{*}\right)}-l_{s}^{\left(i^{*}\right)}\right)+\left(\gamma_{s}\eta_{s}+\frac{e^{2}}{2}\eta_{s}^{2}\right)\sum_{i=1}^{K}\tilde{l}_{s}^{\left(i\right)}\right)\right\} +\sum_{t\in\mathcal{\mathcal{M^{*}}}}\eta_{t}\\&\underset{\left(c\right)}{\leq}4e^{2}K\sum_{t\notin\mathcal{M^{*}}}\eta_{t}^{2}d_{t}+2+\left(1+\frac{e^{2}}{2}\right)K\sum_{t=1}^{T}\sum_{s\in\mathcal{S}_{t}}\eta_{s}^{2}+2\log K+\sum_{t\in\mathcal{\mathcal{M^{*}}}}\eta_{t}\numberthis\label{eq:100}
\end{align*}
where (a) uses that $0\leq l_{t}^{\left(i\right)}\leq1$ for every $i$ and $t$, (b) is \eqref{eq:92} and (c) is \eqref{eq:94} and  \eqref{eq:99} for $\gamma_{s}=\eta_{s}$.
\subsection{Oblivious Adversary with $\gamma_{t}=0$}

\subsubsection{Taking the Expectation}

With an oblivious adversary $\mathbb{E}\left\{ \tilde{l}_{s}^{\left(i\right)}\right\} =l_{s}^{\left(i\right)}\mathbb{E}\left\{ \frac{1_{\left\{ a_{s}=i\right\} }}{p_{s}^{\left(i\right)}}\right\} =l_{s}^{\left(i\right)}$ for each $s$ and $i$, since $l_{s}^{\left(i\right)}$ is not random. Then for any $i$, in particular $i^{*}\triangleq{\arg\underset{i}\min}\sum_{t=1}^{T}\eta_{t}l_{t}^{\left(i\right)}$, we have
\begin{equation}
\mathbb{E}\left\{ \sum_{t=1}^{T}\sum_{s\in\mathcal{S}_{t}}\eta_{s}\left(\tilde{l}_{s}^{\left(i^{*}\right)}-l_{s}^{\left(i^{*}\right)}\right)+\frac{e^{2}}{2}\sum_{t=1}^{T}\sum_{s\in\mathcal{S}_{t}}\eta_{s}^{2}\sum_{i=1}^{K}\tilde{l}_{s}^{\left(i\right)}\right\} =\frac{e^{2}}{2}\sum_{t=1}^{T}\sum_{s\in\mathcal{S}_{t}}\eta_{s}^{2}\sum_{i=1}^{K}l_{s}^{\left(i\right)}\leq\frac{e^{2}}{2}K\sum_{t=1}^{T}\sum_{s\in\mathcal{S}_{t}}\eta_{s}^{2}.\label{eq:101}
\end{equation}

\subsubsection{The Effect of Delays}

Let $s\in \mathcal{S}_{t}$, and let $q$ be a round for which the feedback is used after or at round $s$, but before the feedback from round $s$ is used. Using Lemma \ref{Lipchitz Lemma} with $\boldsymbol{x}=\boldsymbol{\tilde{L}}_{q_{-}}$
and $\Delta=\eta_{q}\tilde{\boldsymbol{l}}_{q}$, so $\boldsymbol{h}\left(\boldsymbol{x}\right)=\boldsymbol{p}_{q_{-}}$
yields
\begin{multline}
\mathbb{E}\left\{ \left\Vert \boldsymbol{p}_{q_{-}}-\boldsymbol{p}_{q_{+}}\right\Vert _{1}\,|\,\mathcal{F}_{q_{-}}\right\} \leq2\eta_{q}\mathbb{E}\left\{ \sum_{i=1}^{K}p_{q_{-}}^{\left(i\right)}\tilde{l}_{q}^{\left(i\right)}\,|\,\mathcal{F}_{q_{-}}\right\} \\\underset{\left(a\right)}{=}2\eta_{q}\sum_{i=1}^{K}p_{q_{-}}^{\left(i\right)}\mathbb{E}\left\{ \tilde{l}_{q}^{\left(i\right)}\,|\,\mathcal{F}_{q_{-}}\right\} \underset{\left(b\right)}{=}2\eta_{q}\sum_{i=1}^{K}p_{q_{-}}^{\left(i\right)}l_{q}^{\left(i\right)}\leq2\eta_{q}\sum_{i=1}^{K}p_{q_{-}}^{\left(i\right)}=2\eta_{q}\label{eq:102}
\end{multline}
where (a) uses that  $p_{q_{-}}^{\left(i\right)}$ is $\mathcal{F}_{q_{-}}$-measurable
and (b) uses that $p_{q}^{\left(i\right)}$ is $\mathcal{F}_{q_{-}}$-measurable (since
$q<q_{-}$) and that $\tilde{l}_{q}^{\left(i\right)}$
is $\frac{l_{q}^{\left(i\right)}}{p_{q}^{\left(i\right)}}$ with probability
$p_{q}^{\left(i\right)}$ and zero otherwise. Note that $a_{q}$ given
$\mathcal{F}_{q}$ is independent of $\mathcal{F}_{q_{-}}$ since
by definition the feedback from $a_{q}$ was not received until round
$q_{-}$. This is unique to the oblivious adversary case when
$\boldsymbol{l}_{r}$ for $r>s$ is not a random variable that depends on
$a_{q}$, which the adversary observes already at the end of round
$q$.  Then for every $q\in\mathcal{S}_{r}$ for $r<t$ or $q\in\mathcal{S}_{t}$ such that $q<s$ we have
\begin{align*}
\mathbb{E}\left\{ \left\langle \tilde{\boldsymbol{l}}_{s},\boldsymbol{p}_{q_{-}}-\boldsymbol{p}_{q_{+}}\right\rangle \right\} &=\mathbb{E}\left\{ \mathbb{E}\left\{ \sum_{i=1}^{K}\left(p_{q_{-}}^{\left(i\right)}-p_{q_{+}}^{\left(i\right)}\right)\frac{l_{s}^{\left(i\right)}1_{\left\{ a_{s}=i\right\} }}{p_{s}^{\left(i\right)}}\,|\,\mathcal{F}_{s_{-}}\right\} \right\} \\&=\mathbb{E}\left\{ \sum_{i=1}^{K}\left(p_{q_{-}}^{\left(i\right)}-p_{q_{+}}^{\left(i\right)}\right)\mathbb{E}\left\{ \frac{l_{s}^{\left(i\right)}1_{\left\{ a_{s}=i\right\} }}{p_{s}^{\left(i\right)}}\,|\,\mathcal{F}_{s_{-}}\right\} \right\} \\&=\mathbb{E}\left\{ \sum_{i=1}^{K}\left(p_{q_{-}}^{\left(i\right)}-p_{q_{+}}^{\left(i\right)}\right)l_{s}^{\left(i\right)}\right\} \\&=\mathbb{E}\left\{ \left\langle \boldsymbol{l}_{s},\boldsymbol{p}_{q_{-}}-\boldsymbol{p}_{q_{+}}\right\rangle \right\} \underset{\left(a\right)}{\leq}\mathbb{E}\left\{ \left\Vert \boldsymbol{l}_{s}\right\Vert _{\infty}\left\Vert \boldsymbol{p}_{q_{-}}-\boldsymbol{p}_{q_{+}}\right\Vert _{1}\right\} \\&\underset{\left(b\right)}{\leq}\mathbb{E}\left\{ \left\Vert \boldsymbol{p}_{q_{-}}-\boldsymbol{p}_{q_{+}}\right\Vert _{1}\right\} \underset{\left(c\right)}{\leq}2\eta_{q}\numberthis\label{eq:103a}
\end{align*}
where (a) is H\"{o}lder's inequality, (b) uses $0 \leq l_{t}^{\left(i\right)}\leq1$
and (c) uses \eqref{eq:102} and the tower rule.  Therefore
\begin{align*}
&
\mathbb{E}\left\{ \sum_{t=1}^{T}\sum_{s\in\mathcal{S}_{t}}\eta_{s}\left\langle \tilde{\boldsymbol{l}}_{s},\boldsymbol{p}_{s}-\boldsymbol{p}_{s_{-}}\right\rangle \right\} \\&=\mathbb{E}\left\{ \sum_{t=1}^{T}\sum_{s\in\mathcal{S}_{t}}\eta_{s}\left(\left\langle \tilde{\boldsymbol{l}}_{s},\boldsymbol{p}_{t}-\boldsymbol{p}_{s_{-}}\right\rangle +\sum_{r=s}^{t-1}\left\langle \tilde{\boldsymbol{l}}_{s},\boldsymbol{p}_{r}-\boldsymbol{p}_{r+1}\right\rangle \right)\right\} \\&=\sum_{t=1}^{T}\sum_{s\in\mathcal{S}_{t}}\eta_{s}\left(\sum_{q\in\mathcal{S}_{t},q<s}\mathbb{E}\left\{ \left\langle \tilde{\boldsymbol{l}}_{s},\boldsymbol{p}_{q_{-}}-\boldsymbol{p}_{q_{+}}\right\rangle \right\} +\sum_{r=s}^{t-1}\sum_{q\in\mathcal{S}_{r}}\mathbb{E}\left\{ \left\langle \tilde{\boldsymbol{l}}_{s},\boldsymbol{p}_{q_{-}}-\boldsymbol{p}_{q_{+}}\right\rangle \right\} \right)\\&\underset{\left(a\right)}{\leq}2\sum_{t=1}^{T}\sum_{s\in\mathcal{S}_{t}}\eta_{s}\left(\sum_{q\in\mathcal{S}_{t},q<s}\eta_{q}+\sum_{r=s}^{t-1}\sum_{q\in\mathcal{S}_{r}}\eta_{q}\right)\underset{\left(b\right)}{\leq}4\sum_{t\notin\mathcal{M^{*}}}\eta_{t}^{2}d_{t}\numberthis\label{eq:103}
\end{align*}
where (a) follows from \eqref{eq:103a} and (b) follows from Lemma
\ref{Counting Delays Lemma}. 

\subsubsection{Concluding the Proof}
We conclude that for $i^{*}\triangleq{\arg\underset{i}\min}\sum_{t=1}^{T}\eta_{t}l_{t}^{\left(i\right)}$:
\begin{multline}
\mathbb{E}\left\{ \sum_{t=1}^{T}\eta_{t}l_{t}^{\left(a_{t}\right)}-\sum_{t=1}^{T}\eta_{t}l_{t}^{\left(i^{*}\right)}\right\} \underset{\left(a\right)}{\leq}\\
\mathbb{E}\left\{ \sum_{t=1}^{T}\sum_{s\in\mathcal{S}_{t}}\eta_{s}\left(l_{s}^{\left(a_{s}\right)}-l_{s}^{\left(i^{*}\right)}\right)\right\} +\sum_{t\in\mathcal{\mathcal{M^{*}}}}\eta_{t}\underset{\left(b\right)}{\leq}\log K+\frac{e^{2}}{2}K\sum_{t=1}^{T}\eta_{t}^{2}+4\sum_{t\notin\mathcal{M^{*}}}\eta_{t}^{2}d_{t}+\sum_{t\in\mathcal{\mathcal{M^{*}}}}\eta_{t}\label{eq:104}
\end{multline}
where (a) uses that $0\leq l_{t}^{\left(i\right)}\leq1$ for every
$i$ and $t$, and (b) uses \eqref{eq:92}, \eqref{eq:101} and \eqref{eq:103}.

\subsection{EXP3 Auxiliary Lemmas}

The following lemma generalizes Lemma 2 from \citet{cesa2019delay}
to a sequence of delays $\left\{ d_{t}\right\}$ and a sequence of
step-sizes $\left\{ \eta_{t}\right\}$.
\begin{lem}
\label{Multiplicative Lemma}Let $\left\{ \eta_{t}\right\} $ be a
positive non-increasing step-size sequence such that $\eta_{t}\leq\frac{1}{2}e^{-2}$
for all $t$. Let $\mathcal{D}=\left\{ t\,|\,d_{t}\geq\frac{1}{e^{2}\eta_{t}}-1\right\} $.
Then for every $s,t$ such that $s\in\mathcal{S}_{t}$ (so $s\notin\mathcal{D}$)
Algorithm \ref{alg:EXP3} maintains for all $i=1,...,K$ both $\frac{p_{s_{+}}^{\left(i\right)}}{p_{s_{-}}^{\left(i\right)}}\leq\frac{1}{1-e^{2}\eta_{s}}$
and $\frac{p_{s_{-}}^{\left(i\right)}}{p_{s}^{\left(i\right)}}\leq e^{2}$.
\end{lem}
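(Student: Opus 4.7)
The plan is to prove both inequalities jointly, by strong induction on the order in which feedback samples are processed by Algorithm~\ref{alg:EXP3}, using at each step the previously established claims. The base case is the very first feedback processed: at that moment $p_{s_{-}}=p_s$, so the second inequality holds with equality, and the first then follows from the one-step computation that appears in the inductive step below.

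In the inductive step I would first derive the first inequality from the (inductively assumed) second. Only the coordinate $a_s$ of $\tilde{L}$ is modified when $s$'s feedback is applied, so for every $i$ we have $p_{s_{+}}^{(i)}/p_{s_{-}}^{(i)}\le Z_{s_{-}}/Z_{s_{+}}$, with equality when $i\ne a_s$. A direct algebraic calculation yields $Z_{s_{+}}=Z_{s_{-}}\bigl(1-p_{s_{-}}^{(a_s)}(1-e^{-\eta_s\tilde{l}_s^{(a_s)}})\bigr)$. Applying $1-e^{-x}\le x$, the loss bound $l_s^{(a_s)}\le 1$ together with $\gamma_s\ge 0$, and finally the second inequality $p_{s_{-}}^{(a_s)}/p_s^{(a_s)}\le e^{2}$ supplied by the induction, gives $p_{s_{-}}^{(a_s)}(1-e^{-\eta_s\tilde{l}_s^{(a_s)}})\le \eta_s\,p_{s_{-}}^{(a_s)}/p_s^{(a_s)}\le e^{2}\eta_s$, hence $p_{s_{+}}^{(i)}/p_{s_{-}}^{(i)}\le 1/(1-e^{2}\eta_s)$; the denominator stays positive because $\eta_s\le \tfrac12 e^{-2}$.

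For the second inequality, let $\mathcal{Q}_s$ be the set of updates applied strictly after $a_s$ is drawn and strictly before the update for $s$. Since $\tilde{L}$ (and hence $p$) is constant between updates, telescoping over $\mathcal{Q}_s$ and plugging in the first inequality (already available by induction for each $q\in\mathcal{Q}_s$) gives
\begin{equation*}
\frac{p_{s_{-}}^{(i)}}{p_s^{(i)}}=\prod_{q\in\mathcal{Q}_s}\frac{p_{q_{+}}^{(i)}}{p_{q_{-}}^{(i)}}\le \prod_{q\in\mathcal{Q}_s}\frac{1}{1-e^{2}\eta_q}.
\end{equation*}
Because $e^{2}\eta_q\le 1/2$, the elementary inequality $-\ln(1-x)\le 2x$ on $[0,1/2]$ reduces the second inequality to showing $\sum_{q\in\mathcal{Q}_s}\eta_q\le e^{-2}$.

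This summation bound is the crux of the argument, and the main obstacle. Every $q\in\mathcal{Q}_s$ satisfies $q\notin\mathcal{D}$, equivalently $\eta_q(d_q+1)<e^{-2}$, and its arrival $q+d_q$ lies in the window $(s,s+d_s]$ of length $d_s$. I would split $\mathcal{Q}_s$ at $s$. For $q>s$, the inclusion $q+d_q\le s+d_s-1$ forces $q\in\{s+1,\dots,s+d_s-1\}$, and the step-size monotonicity $\eta_q\le \eta_s$ combined with $\eta_s(d_s+1)<e^{-2}$ controls that portion. For $q<s$, the inclusion $q+d_q>s$ forces $d_q+1\ge s-q+2$, so $\eta_q<e^{-2}/(s-q+2)$; combining this with $\eta_q\le \eta_{q_{\min}}$ for the smallest contributing $q_{\min}$ (by monotonicity) and the observation that each round index contributes at most one term, this portion is controlled in turn. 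Carrying out this book-keeping so that the two portions fit together into a total at most $e^{-2}$ is the delicate step on which the proof hinges.
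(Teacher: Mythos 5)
Your induction scheme, the derivation of the first inequality from the second via the normalizing-constant computation, and the telescoping product
$\frac{p_{s_{-}}^{(i)}}{p_{s}^{(i)}}=\prod_{q\in\mathcal{Q}_s}\frac{p_{q_{+}}^{(i)}}{p_{q_{-}}^{(i)}}\le\prod_{q\in\mathcal{Q}_s}\frac{1}{1-e^{2}\eta_{q}}$
all match the paper's argument. The gap is in the final step. You reduce the claim to $\sum_{q\in\mathcal{Q}_s}\eta_{q}\le e^{-2}$ via $-\ln(1-x)\le 2x$, and you correctly flag this sum bound as the unresolved crux --- but it is not merely delicate, it is false in general. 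Splitting $\mathcal{Q}_s$ at $s$ as you propose, the rounds $q>s$ number at most $d_s$ and each has $\eta_q\le\eta_s$, so that portion can approach $d_s\eta_s<e^{-2}-\eta_s$; the rounds $q<s$ number at most $d_{s_0}$ (for $s_0$ the earliest contributing round) and each has $\eta_q\le\eta_{s_0}$, so that portion can independently approach $e^{-2}-\eta_{s_0}$. The total can therefore approach $2e^{-2}$, and no bookkeeping will push it below $e^{-2}$. Your per-$q$ estimate $\eta_q<e^{-2}/(s-q+2)$ for the $q<s$ part does not rescue this either, since summed over $q$ it is a divergent harmonic tail. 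Plugging the true bound $2e^{-2}$ into your reduction gives $\exp(2e^{2}\cdot 2e^{-2})=e^{4}$, not $e^{2}$.

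The paper closes this step differently: it does not convert the product into a sum. It groups the factors into the two blocks above and bounds each block by
$\left(1+\frac{1}{e^{-2}\eta^{-1}-1}\right)^{d}$ with $d\le e^{-2}\eta^{-1}-1$, which is at most $e$ by the sharp elementary inequality $(1+1/m)^{m}\le e$. Two blocks then give $e\cdot e=e^{2}$. The essential point is that the constant $e^{2}$ in the lemma is exactly the product of one factor of $e$ per block; your logarithmic linearization $-\ln(1-x)\le 2x$ loses roughly a factor of $2$ in the exponent per block and cannot recover it. To repair your proof, replace the sum reduction by the blockwise $(1+1/m)^{m}\le e$ estimate (or, equivalently, use $-\ln(1-x)\le \frac{x}{1-x}$ together with $d\le\frac{1-e^{2}\eta}{e^{2}\eta}$ per block, which yields exponent at most $1$ per block).
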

\begin{proof}
The proof follows by induction on the feedback arrival index. Let
$s$ be the first feedback to arrive. Before that, at $s_{-}$, we have $p_{s_{-}}^{\left(i\right)}=\frac{1}{K}$ and $\frac{p_{s_{-}}^{\left(i\right)}}{p_{s}^{\left(i\right)}}=1$ for all $i$. Then, 
the first update satisfies
\begin{equation}
\frac{p_{s_{-}}^{\left(i\right)}}{p_{s_{+}}^{\left(i\right)}}=\frac{\frac{1}{K}}{\frac{\frac{1}{K}e^{-\eta_{s}\tilde{l}_{s}^{\left(i\right)}}}{\sum_{j=1}^{K}\frac{1}{K}e^{-\eta_{s}\tilde{l}_{s}^{\left(j\right)}}}}\geq1-\frac{1}{K}+\frac{1}{K}e^{-\eta_{s}\frac{l_{s}^{\left(a_{s}\right)}}{\frac{1}{K}+\gamma_{s}}}\geq1+\frac{1}{K}\left(e^{-\eta_{s}Kl_{s}^{\left(a_{s}\right)}}-1\right)\geq1-\eta_{s}l_{s}^{\left(a_{s}\right)}\geq1-e^{2}\eta_{s}.\label{eq:114}
\end{equation}
Now let $s$ be any arbitrary round for which the feedback arrives
at time $t$. According to the inductive hypothesis, we have $\frac{p_{q_{+}}^{\left(i\right)}}{p_{q_{-}}^{\left(i\right)}}\leq\frac{1}{1-e^{2}\eta_{q}}$
for all $q\in\left\{ r\in\mathcal{S}_{t},r<s\right\} \cup\left\{ \bigcup_{r=s}^{t-1}\mathcal{S}_{r}\right\} $.
Define $s_{0}$ as the minimal $q<s$ such that $s\leq q+d_{q}\leq t$
and $q\notin\mathcal{D}$ (if it exists). Then for all $i=1,...,K$
\begin{align*}
\frac{p_{s_{-}}^{\left(i\right)}}{p_{s}^{\left(i\right)}}=\prod_{r=s}^{t-1}\prod_{q\in\mathcal{S}_{r}}\prod_{q\in\mathcal{S}_{t},q<s}\frac{p_{q_{+}}^{\left(i\right)}}{p_{q_{-}}^{\left(i\right)}}& \underset{\left(a\right)}{\leq}\prod_{r=s}^{t-1}\prod_{q\in\mathcal{S}_{r}}\prod_{q\in\mathcal{S}_{t},q<s}\left(1+\frac{e^{2}\eta_{q}}{1-e^{2}\eta_{q}}\right)\\
& \underset{\left(b\right)}{\leq}\left(1+\frac{1}{e^{-2}\eta_{s_{0}}^{-1}-1}\right)^{d_{s_{0}}}\left(1+\frac{1}{e^{-2}\eta_{s}^{-1}-1}\right)^{d_{s}}\underset{\left(c\right)}{\leq}e^{2}\numberthis \label{eq:115}
\end{align*}
where (a) uses the inductive hypothesis and (c) uses that by definition
$d_{s_{0}}\leq e^{-2}\eta_{s_{0}}^{-1}-1$ and $d_{s}\leq e^{-2}\eta_{s}^{-1}-1$.
If $s_{0}$ does not exist then the first factor is one (i.e., $d_{s_{0}}=0$). Inequality (b) uses that the product runs over all rounds $q\notin\mathcal{D}$ for
which the feedback is received between $s$ and $s_{-}$. Feedback
from $q\in\mathcal{D}$ is discarded and has no effect on $\frac{p_{s_{-}}^{\left(i\right)}}{p_{s}^{\left(i\right)}}$.
The received feedback includes no more than $d_{s_{0}}$ samples of
rounds before $s$. This follows since there are at most $d_{s_{0}}$
rounds between $s_{0}$ and $s$ (since $s\leq s_{0}+d_{s_{0}}$ by
definition), and each of them contributes at most one feedback that
is received between $s$ and $s_{-}$. We have $\eta_{q}\leq\eta_{s_{0}}$
for each such round $q$, since $\eta_{t}$ is non-increasing. It
also includes no more than $d_{s}$ feedback samples of rounds after
$s$, since all these feedback samples are received before $s_{-}$,
which occurs during round $t=s+d_{s}$. We have $\eta_{r}\leq\eta_{s}$
for each such round $r$. We conclude that the update at $s_{-}$,
occurring at time $t$ using the feedback for $a_{s}$, satisfies:
\begin{align*}
\frac{p_{s_{-}}^{\left(i\right)}}{p_{s_{+}}^{\left(i\right)}}=\frac{\frac{e^{-\tilde{L}_{s_{-}}^{\left(i\right)}}}{\sum_{j=1}^{K}e^{-\tilde{L}_{s_{-}}^{\left(j\right)}}}}{\frac{e^{-\tilde{L}_{s_{+}}^{\left(i\right)}}}{\sum_{j=1}^{K}e^{-\tilde{L}_{s_{+}}^{\left(j\right)}}}}&=\frac{\frac{e^{-\tilde{L}_{s_{-}}^{\left(i\right)}}}{\sum_{j=1}^{K}e^{-\tilde{L}_{s_{-}}^{\left(j\right)}}}}{\frac{e^{-\tilde{L}_{s_{-}}^{\left(i\right)}}e^{-\eta_{s}\tilde{l}_{s}^{\left(i\right)}}}{\sum_{j=1}^{K}e^{-\tilde{L}_{s_{-}}^{\left(j\right)}}e^{-\eta_{s}\tilde{l}_{s}^{\left(j\right)}}}}\geq\frac{\sum_{j=1}^{K}e^{-\tilde{L}_{s_{-}}^{\left(j\right)}}e^{-\eta_{s}\tilde{l}_{s}^{\left(j\right)}}}{\sum_{j=1}^{K}e^{-\tilde{L}_{s_{-}}^{\left(j\right)}}}\geq\frac{\sum_{j=1}^{K}e^{-\tilde{L}_{s_{-}}^{\left(j\right)}}\left(1-\eta_{s}\tilde{l}_{s}^{\left(j\right)}\right)}{\sum_{j=1}^{K}e^{-\tilde{L}_{s_{-}}^{\left(j\right)}}}\\
&=1-\eta_{s}\sum_{j=1}^{K}p_{s_{-}}^{\left(j\right)}\tilde{l}_{s}^{\left(j\right)}=1-\eta_{s}p_{s_{-}}^{\left(a_{s}\right)}\frac{l_{s}^{\left(a_{s}\right)}}{p_{s}^{\left(a_{s}\right)}+\gamma_{s}}\geq1-\eta_{s}\frac{p_{s_{-}}^{\left(a_{s}\right)}}{p_{s}^{\left(a_{s}\right)}}\underset{\left(a\right)}{\geq}1-e^{2}\eta_{s}\numberthis\label{eq:116}
\end{align*}
where (a) follows from \eqref{eq:115}. Hence $\frac{p_{s_{+}}^{\left(i\right)}}{p_{s_{-}}^{\left(i\right)}}\leq\frac{1}{1-e^{2}\eta_{s}}$
and the proof is complete.
\end{proof}
The next lemma shows standard smoothness properties of the softmax
function, and we provide it here for completeness. 
\begin{lem}
\label{Lipchitz Lemma}Let $h_{i}\left(\boldsymbol{x}\right)=\frac{e^{-x_{i}}}{\sum_{j=1}^{K}e^{-x_{j}}}$
and $h\left(\boldsymbol{x}\right)=\left(h_{1}\left(\boldsymbol{x}\right),...,h_{K}\left(\boldsymbol{x}\right)\right)$.
Then $\forall\boldsymbol{x}\in\mathbb{R}^{K}$ and $\forall\Delta\in\mathbb{R}_{+}^{K}$
\begin{equation}
\left\Vert h\left(\boldsymbol{x}\right)-h\left(\boldsymbol{x}+\Delta\right)\right\Vert _{1}\leq2\left\langle h\left(\boldsymbol{x}\right),\Delta\right\rangle .\label{eq:62}
\end{equation}
\end{lem}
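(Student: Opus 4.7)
The plan is a direct calculation using the explicit form of the softmax together with the sign pattern of $p-q$, where I write $p\triangleq h(\boldsymbol{x})$ and $q\triangleq h(\boldsymbol{x}+\Delta)$. First I would observe that for every coordinate $i$,
\begin{equation}
q_i=\frac{e^{-x_i-\Delta_i}}{\sum_{j}e^{-x_j-\Delta_j}}=\frac{p_i\,e^{-\Delta_i}}{Z},\qquad Z\triangleq\sum_{j}p_j e^{-\Delta_j},
\end{equation}
and since $\Delta\in\mathbb{R}_+^K$ we have $e^{-\Delta_j}\le 1$, hence $Z\le 1$.

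Next, because $p$ and $q$ are both probability distributions,
\begin{equation}
\|p-q\|_1=\sum_i |p_i-q_i|=2\sum_i (p_i-q_i)_+,
\end{equation}
so it suffices to bound $(p_i-q_i)_+=p_i\bigl(1-e^{-\Delta_i}/Z\bigr)_+$ for each $i$. The key observation is that $Z\le 1$ implies $e^{-\Delta_i}/Z\ge e^{-\Delta_i}$, and therefore
\begin{equation}
\bigl(1-e^{-\Delta_i}/Z\bigr)_+\le\bigl(1-e^{-\Delta_i}\bigr)_+=1-e^{-\Delta_i}\le \Delta_i,
\end{equation}
where the last step is the elementary inequality $1-e^{-x}\le x$ for $x\ge 0$. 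Summing over $i$,
\begin{equation}
\|p-q\|_1\le 2\sum_i p_i\Delta_i=2\langle h(\boldsymbol{x}),\Delta\rangle,
\end{equation}
which is the claim.

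There is no real obstacle here; the only subtle point is to notice that one cannot get a good bound on $|p_i-q_i|$ uniformly coordinatewise (the ratio $e^{-\Delta_i}/Z$ can exceed $1$ when $\Delta_i$ is small, in which case the naive bound $1-e^{-\Delta_i}/Z$ is negative and uninformative). Passing through the positive-part representation of $\|\cdot\|_1$ sidesteps this, since it is exactly on the coordinates where $q_i\le p_i$ (equivalently $e^{-\Delta_i}\le Z\le 1$) that the bound $e^{-\Delta_i}/Z\ge e^{-\Delta_i}$ is used, and the nonnegativity of $\Delta$ is what guarantees both $Z\le 1$ and $1-e^{-\Delta_i}\le \Delta_i$.
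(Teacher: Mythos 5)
Your proof is correct. Every step checks out: the renormalization identity $q_i=p_ie^{-\Delta_i}/Z$ with $Z=\sum_j p_je^{-\Delta_j}\le 1$, the identity $\|p-q\|_1=2\sum_i(p_i-q_i)_+$ (valid because both $p$ and $q$ sum to one), and the chain $(1-e^{-\Delta_i}/Z)_+\le 1-e^{-\Delta_i}\le\Delta_i$. The route is genuinely different from the paper's in how the $\ell_1$ norm is handled and where the factor $2$ comes from. The paper proves \emph{two} coordinatewise estimates, a lower bound $h_i(\boldsymbol{x}+\Delta)-h_i(\boldsymbol{x})\ge-\Delta_i h_i(\boldsymbol{x})$ (which is exactly your one-sided bound $p_i-q_i\le p_i\Delta_i$ in disguise) and a separate upper bound $h_i(\boldsymbol{x}+\Delta)-h_i(\boldsymbol{x})\le h_i(\boldsymbol{x}+\Delta)\sum_j\Delta_j h_j(\boldsymbol{x})$; it then bounds $|h_i(\boldsymbol{x})-h_i(\boldsymbol{x}+\Delta)|$ by the sum of the two and obtains the constant $2$ by adding the two resulting terms after summing over $i$ (using $\sum_i h_i(\boldsymbol{x}+\Delta)=1$). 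You instead get the factor $2$ for free from the positive-part representation of the $\ell_1$ distance between probability vectors, which lets you dispense with the second, more involved coordinatewise estimate entirely. Your argument is shorter and isolates the single inequality that actually matters; the paper's version is more self-contained in the sense that it never invokes the probability-vector identity, but it pays for that with an extra computation. Both yield the same constant, so nothing downstream is affected.
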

\begin{proof}
For all $\boldsymbol{x}\in\mathbb{R}^{K}$ and $\Delta\in\mathbb{R}_{+}^{K}$
\begin{equation}
h_{i}\left(\boldsymbol{x}+\Delta\right)-h_{i}\left(\boldsymbol{x}\right)=\frac{e^{-x_{i}-\Delta_{i}}}{\sum_{j=1}^{K}e^{-x_{j}-\Delta_{j}}}-\frac{e^{-x_{i}}}{\sum_{j=1}^{K}e^{-x_{j}}}\underset{\left(a\right)}{\geq}\left(e^{-\Delta_{i}}-1\right)h_{i}\left(\boldsymbol{x}\right)\underset{\left(b\right)}{\geq}-\Delta_{i}h_{i}\left(\boldsymbol{x}\right)\label{eq:63}
\end{equation}
where (a) follows since $\sum_{j=1}^{K}e^{-x_{j}-\Delta_{j}}\leq\sum_{j=1}^{K}e^{-x_{j}}$
and (b) since $1-x\leq e^{-x}$ for all $x\geq0$. We also have for
all $\boldsymbol{x}\in\mathbb{R}^{K}$ and $\Delta\in\mathbb{R}_{+}^{K}$
that
\begin{multline}
h_{i}\left(\boldsymbol{x}+\Delta\right)-h_{i}\left(\boldsymbol{x}\right)=\frac{e^{-x_{i}-\Delta_{i}}}{\sum_{j=1}^{K}e^{-x_{j}-\Delta_{j}}}-\frac{e^{-x_{i}}}{\sum_{j=1}^{K}e^{-x_{j}}}\underset{\left(a\right)}{\leq}\frac{e^{-x_{i}-\Delta_{i}}}{\sum_{j=1}^{K}e^{-x_{j}-\Delta_{j}}}-\frac{e^{-x_{i}-\Delta_{i}}}{\sum_{j=1}^{K}e^{-x_{j}}}=\\
h_{i}\left(\boldsymbol{x}+\Delta\right)\left(1-\frac{\sum_{j=1}^{K}e^{-x_{j}-\Delta_{j}}}{\sum_{l=1}^{K}e^{-x_{l}}}\right)=h_{i}\left(\boldsymbol{x}+\Delta\right)\frac{\sum_{j=1}^{K}e^{-x_{j}}\left(1-e^{-\Delta_{j}}\right)}{\sum_{l=1}^{K}e^{-x_{l}}}\underset{\left(b\right)}{\leq}h_{i}\left(\boldsymbol{x}+\Delta\right)\frac{\sum_{j=1}^{K}\Delta_{j}e^{-x_{j}}}{\sum_{l=1}^{K}e^{-x_{l}}}\label{eq:64}
\end{multline}
where (a) uses $e^{-x_{j}}\geq e^{-x_{j}-\Delta_{j}}$ and (b) uses
$1-x\leq e^{-x}$ for all $x\geq0$. Combining \eqref{eq:63} and
\eqref{eq:64}:
\begin{multline}
\left\Vert h\left(\boldsymbol{x}\right)-h\left(\boldsymbol{x}+\Delta\right)\right\Vert _{1}=\sum_{i=1}^{K}\left|h_{i}\left(\boldsymbol{x}\right)-h_{i}\left(\boldsymbol{x}+\Delta\right)\right|\underset{\left(a\right)}{\leq}\sum_{i=1}^{K}h_{i}\left(\boldsymbol{x}+\Delta\right)\left(\sum_{j=1}^{K}\frac{\Delta_{j}e^{-x_{j}}}{\sum_{l=1}^{K}e^{-x_{l}}}\right)\\
+\sum_{i=1}^{K}\Delta_{i}h_{i}\left(\boldsymbol{x}\right)=\left(\sum_{j=1}^{K}\left(\Delta_{j}\frac{e^{-x_{j}}}{\sum_{l=1}^{K}e^{-x_{l}}}\right)\right)\sum_{i=1}^{K}h_{i}\left(\boldsymbol{x}+\Delta\right)+\left\langle h\left(\boldsymbol{x}\right),\Delta\right\rangle \underset{\left(b\right)}{=}2\left\langle h\left(\boldsymbol{x}\right),\Delta\right\rangle \label{eq:65}
\end{multline}
where (a) uses
$\left|h_{i}\left(\boldsymbol{x}+\Delta\right)-h_{i}\left(\boldsymbol{x}\right)\right|\leq\max\left\{ \Delta_{i}h_{i}\left(\boldsymbol{x}\right),h_{i}\left(\boldsymbol{x}+\Delta\right)\frac{\sum_{j=1}^{K}\Delta_{j}e^{-x_{j}}}{\sum_{l=1}^{K}e^{-x_{l}}}\right\}$
for all $i$, due to  \eqref{eq:63} and \eqref{eq:64}. Equality (b) uses that $\sum_{i=1}^{K}h_{i}\left(\boldsymbol{x}+\Delta\right)=1$
by definition. 
\end{proof}
The next Lemma is taken from \citet{neu2015explore}, and we provide
a (very) slightly modified proof to verify that the same result holds even
when the order of arrivals changes as a result of the delayed feedback. 
\begin{lem}
\label{lem:HighProbability}Let $\tilde{l}_{t}^{\left(i\right)}=\frac{l_{t}^{\left(i\right)}1_{\left\{ a_{t}=i\right\} }}{p_{t}^{\left(i\right)}+\gamma_{t}}$.
If $\left\{ \alpha_{t}^{\left(i\right)}\right\} $ is a non-negative
sequence such that $\alpha_{t}^{\left(i\right)}\leq2\gamma_{t}$ for
all $t$ and all $i$, then
\begin{equation}
\mathbb{P}\left(\sum_{t\notin\mathcal{M}^{*}}\sum_{i=1}^{K}\alpha_{t}^{\left(i\right)}\left(\tilde{l}_{t}^{\left(i\right)}-l_{t}^{\left(i\right)}\right)>\log \frac{1}{\delta}\right)\leq\delta.\label{eq:105}
\end{equation}
\end{lem}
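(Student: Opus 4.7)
The statement is a high-probability bound for the implicit-exploration (IX) estimator; I would follow the strategy of Neu (2015), with light additional bookkeeping to accommodate the delayed feedback. The overall structure is the exponential Markov inequality: it suffices to show that
\begin{equation}
\mathbb{E}\Bigl[\exp\Bigl(\sum_{t\notin\mathcal{M}^{*}}\sum_{i=1}^{K}\alpha_{t}^{(i)}\bigl(\tilde{l}_{t}^{(i)}-l_{t}^{(i)}\bigr)\Bigr)\Bigr]\leq 1, \tag{$\star$}
\end{equation}
since then $\Pr\bigl(\sum\alpha_{t}^{(i)}(\tilde{l}_{t}^{(i)}-l_{t}^{(i)})>\ln(1/\delta)\bigr)\leq\delta\cdot\mathbb{E}[\exp(\cdot)]\leq\delta$.

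\textbf{Step 1 (pointwise exponential bound).} The key analytic ingredient is the elementary calculus inequality $\log(1+u)\geq u/(1+u/2)$ for $u\geq 0$. Applying it with $u=\alpha l_{t}^{(i)}/p_{t}^{(i)}$ on the event $\{a_{t}=i\}$, and using the hypotheses $\alpha_{t}^{(i)}\leq 2\gamma_{t}$ and $l_{t}^{(i)}\in[0,1]$ (so that $\alpha_{t}^{(i)} l_{t}^{(i)}/2\leq\gamma_{t}$), one obtains
\begin{equation}
\alpha_{t}^{(i)}\tilde{l}_{t}^{(i)}\;=\;\frac{\alpha_{t}^{(i)}l_{t}^{(i)}\mathbbm{1}_{\{a_{t}=i\}}}{p_{t}^{(i)}+\gamma_{t}}\;\leq\;\log\!\left(1+\frac{\alpha_{t}^{(i)}l_{t}^{(i)}\mathbbm{1}_{\{a_{t}=i\}}}{p_{t}^{(i)}}\right).
\end{equation}
Because the indicators $\mathbbm{1}_{\{a_{t}=i\}}$ are mutually exclusive, exponentiating and summing over $i$ collapses to a single nonzero summand, yielding
\begin{equation}
\exp\!\Bigl(\sum_{i=1}^{K}\alpha_{t}^{(i)}\tilde{l}_{t}^{(i)}\Bigr)\;\leq\;1+\sum_{i=1}^{K}\frac{\alpha_{t}^{(i)}l_{t}^{(i)}\mathbbm{1}_{\{a_{t}=i\}}}{p_{t}^{(i)}}.
\end{equation}

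\textbf{Step 2 (one-step conditional expectation).} I would condition on the sigma-algebra $\mathcal{F}_{t}$ that makes $p_{t}$, $l_{t}$, $\gamma_{t}$, and $\alpha_{t}^{(i)}$ measurable but leaves $a_{t}$ as a draw from $p_{t}$. Taking $\mathbb{E}[\,\cdot\mid\mathcal{F}_{t}]$ in Step~1 kills the indicators:
\begin{equation}
\mathbb{E}\!\Bigl[\exp\!\Bigl(\sum_{i}\alpha_{t}^{(i)}\tilde{l}_{t}^{(i)}\Bigr)\,\Big|\,\mathcal{F}_{t}\Bigr]\;\leq\;1+\sum_{i}\alpha_{t}^{(i)}l_{t}^{(i)}\;\leq\;\exp\!\Bigl(\sum_{i}\alpha_{t}^{(i)}l_{t}^{(i)}\Bigr),
\end{equation}
so $\mathbb{E}\bigl[\exp\bigl(\sum_{i}\alpha_{t}^{(i)}(\tilde{l}_{t}^{(i)}-l_{t}^{(i)})\bigr)\mid\mathcal{F}_{t}\bigr]\leq 1$.

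\textbf{Step 3 (telescope over $t$).} Ordering rounds chronologically and invoking the tower property, the one-step bound gives $(\star)$ by induction on $T$: the process $M_{T}=\exp\bigl(\sum_{t\leq T,\,t\notin\mathcal{M}^{*}}\sum_{i}\alpha_{t}^{(i)}(\tilde{l}_{t}^{(i)}-l_{t}^{(i)})\bigr)$ satisfies $\mathbb{E}[M_{T}\mid\mathcal{F}_{T}]\leq M_{T-1}$, hence is a supermartingale starting at $M_{0}=1$. Markov's inequality then closes the argument.

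\textbf{Role of the delays and main obstacle.} The delays never appear in Steps~1--3: the bound only uses the distribution of $a_{t}$ given $\mathcal{F}_{t}$, which is determined by $p_{t}$ at the moment of action, not by when the feedback arrives later. The set $\mathcal{M}^{*}$ simply removes from the sum those rounds whose samples are never processed (either missing at horizon $T$ or discarded because $d_{t}\geq e^{-2}\eta_{t}^{-1}-1$), which is consistent with the filtration ordering. The only delicate point is verifying the $\alpha_{t}^{(i)}\leq 2\gamma_{t}$ assumption is genuinely what is needed for $\log(1+u)\geq u/(1+u/2)$ to produce the factor $p_{t}^{(i)}+\gamma_{t}$ in the denominator on the left--hand side of Step~1; this is where the IX exploration parameter earns its keep, and is the only place in the proof where the special structure of the estimator matters.
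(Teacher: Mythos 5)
Your proof is correct and follows essentially the same route as the paper's own proof (which itself adapts Neu, 2015): the pointwise bound $\alpha_{t}^{(i)}\tilde{l}_{t}^{(i)}\leq\log\bigl(1+\alpha_{t}^{(i)}l_{t}^{(i)}1_{\{a_{t}=i\}}/p_{t}^{(i)}\bigr)$ via $\log(1+u)\geq u/(1+u/2)$ and $\alpha_{t}^{(i)}l_{t}^{(i)}/2\leq\gamma_{t}$, the collapse of the product to $1+\sum_{i}\alpha_{t}^{(i)}l_{t}^{(i)}1_{\{a_{t}=i\}}/p_{t}^{(i)}$ by mutual exclusivity of the indicators, the conditional one-step supermartingale bound given past actions, and the exponential Markov inequality. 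The only cosmetic differences are that you fold the paper's two-stage estimate (first $\tilde{l}_{t}^{(i)}\leq\frac{1}{2\gamma_{t}}\log(1+2\gamma_{t}l_{t}^{(i)}1_{\{a_{t}=i\}}/p_{t}^{(i)})$, then $x\log(1+y)\leq\log(1+xy)$ with $x=\alpha_{t}^{(i)}/(2\gamma_{t})$) into a single application, and you telescope in chronological order of play whereas the paper telescopes in order of feedback arrival; both orderings are valid because the one-step bound conditions only on the sigma-algebra generated by past actions, under which $p_{t}$ and $l_{t}$ are measurable and $a_{t}\sim p_{t}$.
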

\begin{proof}
Define the filtration $\mathcal{G}_{t}=\sigma\left(\left\{ a_{\tau}\,|\,\tau<t\right\} \right)$
and note that $\boldsymbol{l}_{t}$ is $\mathcal{G}_{t}$-measurable. We have
\begin{equation}
\tilde{l}_{t}^{\left(i\right)}=\frac{l_{t}^{\left(i\right)}1_{\left\{ a_{t}=i\right\} }}{p_{t}^{\left(i\right)}+\gamma_{t}}\leq\frac{l_{t}^{\left(i\right)}1_{\left\{ a_{t}=i\right\} }}{p_{t}^{\left(i\right)}+\gamma_{t}l_{t}^{\left(i\right)}1_{\left\{ a_{t}=i\right\} }}=\frac{1}{2\gamma_{t}}\frac{2\gamma_{t}\frac{l_{t}^{\left(i\right)}}{p_{t}^{\left(i\right)}}1_{\left\{ a_{t}=i\right\} }}{1+\gamma_{t}\frac{l_{t}^{\left(i\right)}}{p_{t}^{\left(i\right)}}1_{\left\{ a_{t}=i\right\} }}\underset{\left(a\right)}{\leq}\frac{1}{2\gamma_{t}}\log\left(1+2\gamma_{t}\frac{l_{t}^{\left(i\right)}}{p_{t}^{\left(i\right)}}1_{\left\{ a_{t}=i\right\} }\right)\label{eq:106}
\end{equation}
where (a) uses $\frac{x}{1+\frac{x}{2}}\leq\log\left(1+x\right)$
which holds for $x\geq0$. Then 
\begin{align*}
\mathbb{E}\left\{ e^{\sum_{i=1}^{K}\alpha_{t}^{\left(i\right)}\tilde{l}_{t}^{\left(i\right)}}\,|\,\mathcal{G}_{t}\right\} & \leq\mathbb{E}\left\{ e^{\sum_{i=1}^{K}\frac{\alpha_{t}^{\left(i\right)}}{2\gamma_{t}}\log\left(1+2\gamma_{t}\frac{l_{t}^{\left(i\right)}}{p_{t}^{\left(i\right)}}1_{\left\{ a_{t}=i\right\} }\right)}\,|\,\mathcal{G}_{t}\right\} \\
& \underset{\left(a\right)}{\leq}\mathbb{E}\left\{ e^{\sum_{i=1}^{K}\log\left(1+\alpha_{t}^{\left(i\right)}\frac{l_{t}^{\left(i\right)}}{p_{t}^{\left(i\right)}}1_{\left\{ a_{t}=i\right\} }\right)}\,|\,\mathcal{G}_{t}\right\} =\mathbb{E}\left\{ \prod_{i=1}^{K}\left(1+\alpha_{t}^{\left(i\right)}\frac{l_{t}^{\left(i\right)}}{p_{t}^{\left(i\right)}}1_{\left\{ a_{t}=i\right\} }\right)\,|\,\mathcal{G}_{t}\right\} \\
& \underset{\left(b\right)}{=}\mathbb{E}\left\{ 1+\sum_{i=1}^{K}\alpha_{t}^{\left(i\right)}\frac{l_{t}^{\left(i\right)}}{p_{t}^{\left(i\right)}}1_{\left\{ a_{t}=i\right\} }\,|\,\mathcal{G}_{t}\right\} \underset{\left(c\right)}{=}1+\sum_{i=1}^{K}\alpha_{t}^{\left(i\right)}l_{t}^{\left(i\right)}\leq e^{\sum_{i=1}^{K}\alpha_{t}^{\left(i\right)}l_{t}^{\left(i\right)}}\numberthis\label{eq:107}
\end{align*}
where (a) uses $x\log\left(1+y\right)\leq\log\left(1+xy\right)$
which holds for $y>-1$ and $0\leq x\leq1$, since $\alpha_{t}^{\left(i\right)}\leq2\gamma_{t}$. Inequality (b) uses that
$1_{\left\{ a_{t}=i\right\} }\text{\ensuremath{1_{\left\{  a_{t}=j\right\}  }}}=0$
for all $i\neq j$, and (c) uses that $p_{t}^{\left(i\right)}$ and $l_{t}^{\left(i\right)}$ are $\mathcal{G}_{t}$-measurable and that given $\mathcal{G}_{t}$, $a_{t}=i$ with probability $p_{t}^{\left(i\right)}$.

Since $l_{t}^{\left(i\right)}$ is $\mathcal{G}_{t}$-measurable then \eqref{eq:107} yields 
$\mathbb{E}\left\{ e^{\sum_{i=1}^{K}\alpha_{t}^{\left(i\right)}\left(\tilde{l}_{t}^{\left(i\right)}-l_{t}^{\left(i\right)}\right)}\,|\,\mathcal{G}_{t}\right\} \leq1$
for all $i$.
Let $S=\sum_{t=1}^{T}\left|\mathcal{S}_{t}\right|$. Let $\tau_{l}$ be $l$-th round for which the feedback is not missing or discarded, so $\tau_{1}\leq\tau_{2}\leq\ldots\leq\tau_{S}$. Then
\begin{multline}
\mathbb{E}\left\{ e^{\sum_{l=1}^{S}\sum_{i=1}^{K}\alpha_{\tau_{l}}^{\left(i\right)}\left(\tilde{l}_{\tau_{l}}^{\left(i\right)}-l_{\tau_{l}}^{\left(i\right)}\right)}\right\} =\mathbb{E}\left\{ \mathbb{E}\left\{ e^{\sum_{l=1}^{S}\sum_{i=1}^{K}\alpha_{\tau_{l}}^{\left(i\right)}\left(\tilde{l}_{\tau_{l}}^{\left(i\right)}-l_{\tau_{l}}^{\left(i\right)}\right)}\,|\,\mathcal{G}_{\tau_{S}}\right\} \right\} \underset{\left(a\right)}{=}\\\mathbb{E}\left\{ e^{\sum_{l=1}^{S-1}\sum_{i=1}^{K}\alpha_{\tau_{l}}^{\left(i\right)}\left(\tilde{l}_{\tau_{l}}^{\left(i\right)}-l_{\tau_{l}}^{\left(i\right)}\right)}\mathbb{E}\left\{ e^{\sum_{i=1}^{K}\alpha_{\tau_{S}}^{\left(i\right)}\left(\tilde{l}_{\tau_{S}}^{\left(i\right)}-l_{\tau_{S}}^{\left(i\right)}\right)}\,|\,\mathcal{G}_{\tau_{S}}\right\} \right\} \leq\mathbb{E}\left\{ e^{\sum_{l=1}^{S-1}\sum_{i=1}^{K}\alpha_{\tau_{l}}^{\left(i\right)}\left(\tilde{l}_{\tau_{l}}^{\left(i\right)}-l_{\tau_{l}}^{\left(i\right)}\right)}\right\} \label{eq:109}
\end{multline}
where (a) uses that  $a_{\tau_{1}},\boldsymbol{l}_{\tau_{1}}, \boldsymbol{p}_{\tau_{1}},\ldots,a_{\tau_{S-1}},\boldsymbol{l}_{\tau_{\tau_{S-1}}}, \boldsymbol{p}_{\tau_{S-1}}$ are $\mathcal{G}_{\tau_{S}}$-measurable.

Iterating over \eqref{eq:109} yields $\mathbb{E}\left\{ e^{\sum_{t\notin\mathcal{M}^{*}}\sum_{i=1}^{K}\alpha_{t}^{\left(i\right)}\left(\tilde{l}_{t}^{\left(i\right)}-l_{t}^{\left(i\right)}\right)}\right\} \leq1$, so by Markov's inequality
\begin{align*}
\mathbb{P}\left(\sum_{t\notin\mathcal{M}^{*}}\sum_{i=1}^{K}\alpha_{t}^{\left(i\right)}\left(\tilde{l}_{t}^{\left(i\right)}-l_{t}^{\left(i\right)}\right)>\log \frac{1}{\delta}\right)
& =\mathbb{P}\left(e^{\sum_{t\notin\mathcal{M}^{*}}\sum_{i=1}^{K}\alpha_{t}^{\left(i\right)}\left(\tilde{l}_{t}^{\left(i\right)}-l_{t}^{\left(i\right)}\right)}>\frac{1}{\delta}\right)\\& \leq\delta\mathbb{E}\left\{ e^{\sum_{t\notin\mathcal{M}^{*}}\sum_{i=1}^{K}\alpha_{t}^{\left(i\right)}\left(\tilde{l}_{t}^{\left(i\right)}-l_{t}^{\left(i\right)}\right)}\right\} \leq\delta.\numberthis\label{eq:128}
\end{align*}
\end{proof}

\end{document}